\documentclass{article}

\usepackage{microtype}
\usepackage{graphicx}
\usepackage{subfigure}
\usepackage{booktabs} 
\usepackage{tcolorbox}
\usepackage{enumitem}
\usepackage{etoc}
\usepackage{multirow}

\usepackage{hyperref}
\usepackage{placeins}

\usepackage[accepted]{icml2026}

\usepackage{amsmath}
\usepackage{amssymb}
\usepackage{mathtools}
\usepackage{amsthm}

\usepackage[capitalize,noabbrev]{cleveref}

\theoremstyle{plain}
\newtheorem{theorem}{Theorem}[section]

\theoremstyle{definition}

\theoremstyle{remark}

\definecolor{darkmagenta}{rgb}
{0.85, 0.35, 0.05}

\definecolor{softbeige}{rgb}{0.72, 0.65, 0.50}
\definecolor{copper}{rgb}{0.82, 0.38, 0.10}

\hypersetup{colorlinks,linkcolor={blue},citecolor={darkmagenta},urlcolor={blue}} 

\usepackage[textsize=tiny]{todonotes}
\usepackage{subcaption}
\usepackage{etoolbox}
\newcounter{appsec}
\newcounter{appsubsec}
\newcounter{appsubsubsec}

\makeatletter

\newcommand{\appendixsection}[1]{%
  \refstepcounter{appsec}%
  \setcounter{appsubsec}{0}%
  \setcounter{appsubsubsec}{0}%
  \phantomsection
  \label{app:\Alph{appsec}}%
  {\color{blue}\bfseries\section{#1}}%
  \protected@write\@auxout{}{%
    \string\g@addto@macro\string\appendixentries{%
      \string\par\string\noindent
      \string\hyperref[app:\Alph{appsec}]{%
        \string\textbf{\string\color{blue}{Appendix~\Alph{appsec}}%
        \string\hspace{0.75em}#1}%
      }%
      \string\hfill
      \string\textbf{\string\pageref{app:\Alph{appsec}}}%
      \string\par
    }%
  }%
}

\newcommand{\appendixsubsection}[1]{%
  \refstepcounter{appsubsec}%
  \setcounter{appsubsubsec}{0}%
  \phantomsection
  \label{app:\Alph{appsec}.\arabic{appsubsec}}%
  {\color{blue}\subsection{#1}}%
  \protected@write\@auxout{}{%
    \string\g@addto@macro\string\appendixentries{%
      \string\par\string\noindent\string\hspace{1.5em}%
      \string\hyperref[app:\Alph{appsec}.\arabic{appsubsec}]{%
        \string\color{blue}{\Alph{appsec}.\arabic{appsubsec}}%
        \string\hspace{0.75em}#1%
      }%
      \string\hfill
      \string\pageref{app:\Alph{appsec}.\arabic{appsubsec}}%
      \string\par
    }%
  }%
}

\newcommand{\appendixsubsubsection}[1]{%
  \refstepcounter{appsubsubsec}%
  \phantomsection
  \label{app:\Alph{appsec}.\arabic{appsubsec}.\arabic{appsubsubsec}}%
  {\color{blue}\subsubsection{#1}}%
  \protected@write\@auxout{}{%
    \string\g@addto@macro\string\appendixentries{%
      \string\par\string\noindent\string\hspace{3em}%
      \string\hyperref[app:\Alph{appsec}.\arabic{appsubsec}.\arabic{appsubsubsec}]{%
        \string\color{blue}{%
          \Alph{appsec}.\arabic{appsubsec}.\arabic{appsubsubsec}}%
        \string\hspace{0.75em}#1%
      }%
      \string\hfill
      \string\pageref{app:\Alph{appsec}.\arabic{appsubsec}.\arabic{appsubsubsec}}%
      \string\par
    }%
  }%
}

\makeatother

\icmltitlerunning{Stable Deep Reinforcement Learning via Isotropic Gaussian Representations}

\makeatletter
\def\addcontentsline#1#2#3{%
\addtocontents{#1}{\protect\contentsline{#2}{#3}{\thepage}{\@currentHref}{}}}
\makeatother

\begin{document}
\etocdepthtag.toc{mtoc}
\etocsettagdepth{mtoc}{none}
\etocsettagdepth{appendix}{none}

\newbox\dummybox
\setbox\dummybox\vbox{%
  \tableofcontents 
}

\twocolumn[


\icmltitle{Stable Deep Reinforcement Learning via Isotropic Gaussian Representations}


\icmlsetsymbol{equal}{*}
\icmlsetsymbol{advisor}{\ensuremath{\dagger}}

\begin{icmlauthorlist}
\icmlauthor{Ali Saheb Pasand}{equal,Mila,Mcgill}
\icmlauthor{Johan Obando-Ceron}{equal,Mila,Mont}
\icmlauthor{Aaron Courville}{Mila,Mont}\\
\icmlauthor{Pouya Bashivan}{advisor,Mila,Mcgill}
\icmlauthor{Pablo Samuel Castro}{advisor,Mila,Mont,DepMind}
\end{icmlauthorlist}

\icmlaffiliation{Mila}{Mila – Quebec AI Institute}
\icmlaffiliation{Mcgill}{McGill}
\icmlaffiliation{Mont}{Universit\'e de Montr\'eal}
\icmlaffiliation{DepMind}{Google DeepMind}

\icmlcorrespondingauthor{Ali Saheb Pasand}{ali.sahebpasand@mila.quebec}
\icmlcorrespondingauthor{Johan Obando-Ceron}{jobando0730@gmail.com}

\icmlkeywords{Machine Learning, ICML}

\vskip 0.3in
]



\printAffiliationsAndNotice{\icmlEqualContribution} 


\begin{abstract}
Deep reinforcement learning systems often suffer from unstable training dynamics due to non-stationarity, where learning objectives and data distributions evolve over time. We show that under non-stationary targets, isotropic Gaussian embeddings are provably advantageous. In particular, they induce stable tracking of time-varying targets for linear readouts, achieve maximal entropy under a fixed variance budget, and encourage a balanced use of all representational dimensions--all of which enable agents to be more adaptive and stable. Building on this insight, we propose the use of Sketched Isotropic Gaussian Regularization for shaping representations toward an isotropic Gaussian distribution during training. We demonstrate empirically, over a variety of domains, that this simple and computationally inexpensive method improves performance under non-stationarity while reducing representation collapse, neuron dormancy, and training instability. \textbf{Our code is available at \href{https://github.com/asahebpa/IsoGaussian-DRL}{IsoGaussian-DRL}}
\end{abstract}

\section{Introduction}
\label{intro}

Deep reinforcement learning (deep RL) has achieved striking successes across a wide range of domains, including robotics, control, game playing, and large-scale decision-making
\citep{vinyals2019grandmaster,Bellemare2020AutonomousNO,fawzi2022discovering,schwarzer23bbf}.
Despite this progress, optimization pathologies such as instability, plasticity loss \citep{nikishin2022primacy}, neuron dormancy \citep{sokar2023dormant}, and representation collapse
\citep{moalla2024no,mayor2025the} remain persistent and often limit both performance and scalability. A defining challenge of deep RL is its inherently non-stationary nature.
Unlike supervised or standard self-supervised learning, the data distribution, learning targets, and optimization landscape evolve continually as the agent updates its policy and interacts with the environment
\citep{Sutton1988ReinforcementLA,Hasselt18DRLDeadly}.
This non-stationarity has been repeatedly linked to degraded representation quality, unstable gradients, and brittle learning dynamics, ultimately reducing effective capacity over long training horizons
\citep{kumarimplicit,obando2025simplicial,liu2025measure,castanyer2025stable}.
\looseness=-1
Most prior work addressing instability in deep RL focuses on algorithmic or architectural interventions, including modified update rules \citep{hessel2018rainbow}, auxiliary objectives
\citep{castro2021mico,schwarzerdata}, target networks, or carefully designed architectural heuristics
\citep{ceron2024mixtures,ceron2024in,lee2025simba,castanyer2025stable}.
While often effective, these approaches typically address downstream symptoms of instability and may introduce additional complexity or tuning requirements.
In contrast, the role of representation geometry, specifically, how the statistical structure of learned representations interacts with non-stationary targets, has received comparatively limited attention in deep RL.

Recent advances in self-supervised learning suggest that representation geometry plays a central role in stability and generalization.
In particular, LeJEPA \citep{balestriero2025lejepa} demonstrates that enforcing simple statistical structure, namely isotropy and approximate Gaussianity, yields representations that generalize robustly across diverse downstream tasks.
This setting closely mirrors deep RL, where the effective task and target distribution evolve over time and are not known a priori.
Motivated by this parallel, we revisit instability in deep RL through the lens of representation geometry and ask:
\emph{what properties should representations satisfy to remain stable under continuously evolving targets?}

We show that isotropic Gaussian representations are particularly well suited to this regime.
For linear readouts tracking non-stationary targets, such representations minimize sensitivity to drift, maximize entropy under a fixed variance budget, and promote balanced utilization of representational dimensions.
Together, these properties mitigate collapse, reduce neuron dormancy, and stabilize learning dynamics under distributional shift.
These guarantees highlight isotropic Gaussian structure as a principled target for representations in non-stationary learning systems.

To empirically study this hypothesis, we evaluate methods that encourage isotropic Gaussian structure in learned representations within standard deep RL pipelines.
Our experiments span discrete and continuous control benchmarks, including the Arcade Learning Environment \citep{bellemare2013arcade} and Isaac Gym \citep{makoviychuk2021isaac}, and cover both value-based and policy-gradient methods such as Parallelized Q-Networks (PQN) \citep{gallicisimplifying} and Proximal Policy Optimization (PPO) \citep{schulman2017proximal}.
Across settings, we observe consistent improvements in training stability, representation quality, and performance under non-stationarity.
These results suggest that representation geometry, rather than algorithmic complexity or curvature estimation alone, plays a central role in stabilizing learning  under non-stationarity such as deep RL.
\looseness=-1

\section{Preliminaries}
\label{sec:preliminaries}
\paragraph{Deep Reinforcement Learning.}
Deep RL addresses sequential decision-making problems in which an agent learns a policy mapping states to actions through interaction with an environment, typically using neural networks as function approximators \citep{mnih2015humanlevel}. This interaction is formalized as a Markov decision process (MDP) $(\mathcal{S}, \mathcal{A}, P, r, \gamma)$ \citep{puterman1994markov}. A policy $\pi_\theta(a \mid s)$ is optimized via gradient-based methods, inducing a discounted state--action occupancy measure $d_{\pi_\theta}(s,a)$. As policy parameters are updated, both the policy and the induced data distribution evolve, resulting in a continuously shifting training distribution even under fixed environment dynamics. Most deep RL algorithms rely on bootstrapped learning targets. Value-based and actor--critic methods, including DQN \citep{mnih2015humanlevel}, PQN \citep{gallicisimplifying}, SAC \citep{haarnoja2018soft}, and TD3 \citep{fujimoto2018addressing}, optimize objectives of the form $
\mathcal{L}_{\text{TD}}(\phi)
=
\mathbb{E}_{(s,a,r,s') \sim d_{\pi_\theta}}
\big[(Q_\phi(s,a) - y_{\theta,\phi}(s,a,r,s'))^2\big];
$ where the target $y_{\theta,\phi}$ depends on learned quantities. As a result, both the data distribution and learning targets evolve throughout training, introducing inherent non-stationarity and contributing to optimization instability \citep{vincent2025bridging,hendawy2025use}. Empirically, this non-stationarity leads to representation drift, neuron dormancy, and loss of effective capacity, often manifested as rank collapse and degraded adaptability \citep{lyle2022understanding,nikishin2022primacy,moalla2024no,tang2025mitigating}.
\paragraph{Representation Learning in Deep RL.}
Deep RL agents typically decompose their networks into a shared representation $h_\eta(s) \in \mathbb{R}^d$ and task-specific prediction heads, e.g., $Q_\phi(s,a) = g_\omega(h_\eta(s), a)$ \citep{fujimoto2023for,echchahed2025a}. Although objectives are defined over scalar signals such as returns or advantages, gradients propagate through the prediction head into the representation. Consequently, deep RL simultaneously performs control and representation learning under indirect, noisy, and non-stationary supervision \citep{igl2021transient}.  Sampling $s \sim d_{\pi_\theta}$ induces a distribution over representations $z = h_\eta(s)$, whose geometry evolves throughout training.

\section{Isotropic Gaussian Representations Promote Stability Under Non-Stationarity}

We study the role of isotropic Gaussian representations by analyzing learning dynamics under drifting targets. We show that when embeddings are constrained to be isotropic and Gaussian, the resulting optimization dynamics has a stable fixed point, and the tracking error remains bounded and decreases over time. This provides a theoretical explanation for the empirical robustness of isotropic Gaussian representations under non-stationary training and motivates explicitly enforcing this type of representation in deep RL. Our analysis is based on the formal model in \autoref{appdx:formal_analysis}, where representation learning is viewed as a tracking problem under drifting supervision. This analysis is inspired by Lemmas 1 and 2 in \citep{balestriero2025lejepa}, which show that, without knowledge of the downstream task, an isotropic representation is optimal for learning a wide range of downstream tasks. Building on this insight, we model non-stationarity as learning a sequence of tasks that evolve over time, and we seek conditions that ensure task changes induced by non-stationarity remain learnable.

\subsection{Linear Tracking Under Non-Stationary Targets}
We consider a linear critic on top of a learned embedding, $Q_\theta(s,a) = w^\top \phi(s,a)$, where $\phi(s,a) \in \mathbb{R}^d$ is the penultimate-layer embedding and $w \in \mathbb{R}^d$ is the last-layer weight vector. The TD target is time-varying, $y_t = r + \gamma Q_{\theta^-}(s', a')$, which induces non-stationarity in the learning problem. To characterize the resulting dynamics, define the second-order statistics $\Sigma_\phi(t) := \mathbb{E}[\phi \phi^\top],
\qquad
b_t := \mathbb{E}[\phi y_t]$, and the expected critic loss as $\mathcal L_t(w) = w^\top \Sigma_\phi(t) w - 2 w^\top b_t + \mathbb{E}[y_t^2]$ (obtained by taking expectation over critic loss shown in \autoref{eq:exp_critic_loss}). 

Under continuous-time gradient flow, the dynamics are given by $\dot w(t) = - \nabla_w \mathcal L_t(w) = -2 \Sigma_\phi(t) w + 2 b_t$. At each time $t$, the instantaneous minimizer is $w_t^* = \Sigma_\phi(t)^{-1} b_t$, which varies over time due to non-stationarity in the targets, policy, and representation distribution. We define the tracking error as $e(t) := w(t) - w_t^*$, and study stability using the Lyapunov function $\Gamma := \|e(t)\|_2^2$.

\begin{theorem}[Tracking error dynamics]
\label{thm:tracking_error}
Assume that $\Sigma_\phi(t) \succ 0$ is constant over time (e.g., enforced by regularization)
and that $b_t$ is differentiable.
Under gradient flow, the time derivative of \ $\Gamma$ satisfies
\begin{equation}
\dot \Gamma
=
{\color{blue} -4\, e(t)^\top \Sigma_\phi(t)\, e(t) }
\;-\;
{\color{red} 2\, e(t)^\top \Sigma_\phi(t)^{-1} \dot b_t } .
\label{eq:V_dot_main}
\end{equation}
\end{theorem}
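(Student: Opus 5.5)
The plan is a direct computation: differentiate $\Gamma(t)=\|e(t)\|_2^2$ and substitute the gradient-flow dynamics. By the chain rule, $\dot\Gamma = 2\,e(t)^\top \dot e(t)$, so everything reduces to finding $\dot e(t) = \dot w(t) - \dot w_t^*$.

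First, rewrite the gradient flow in terms of the tracking error. Since $w_t^* = \Sigma_\phi(t)^{-1} b_t$, we have $b_t = \Sigma_\phi(t) w_t^*$. Substituting this into $\dot w(t) = -2\Sigma_\phi(t) w(t) + 2 b_t$ gives
\[
\dot w(t) = -2\Sigma_\phi(t)\bigl(w(t) - w_t^*\bigr) = -2\Sigma_\phi(t)\, e(t).
\]

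Second, differentiate the instantaneous minimizer. Here we use the hypothesis that $\Sigma_\phi(t)\equiv\Sigma_\phi$ is constant and positive definite. Then $\Sigma_\phi^{-1}$ exists and is constant, so $\dot w_t^* = \Sigma_\phi^{-1}\dot b_t$, which is well defined because $b_t$ is differentiable. This is the only step that needs care. If $\Sigma_\phi(t)$ varied in time, one would also get the term $\tfrac{d}{dt}\Sigma_\phi^{-1} = -\Sigma_\phi^{-1}\dot\Sigma_\phi\Sigma_\phi^{-1}$, which does not appear in \eqref{eq:V_dot_main}. The constancy assumption is exactly what removes it.

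Combining the two steps gives
\[
\dot e(t) = -2\Sigma_\phi\, e(t) - \Sigma_\phi^{-1}\dot b_t,
\]
and therefore
\[
\dot\Gamma = 2e(t)^\top \dot e(t) = -4\,e(t)^\top\Sigma_\phi\, e(t) - 2\,e(t)^\top\Sigma_\phi^{-1}\dot b_t,
\]
which is \eqref{eq:V_dot_main}. The first term is a dissipation term: it is nonpositive because $\Sigma_\phi\succ0$. The second term is the drift forcing induced by the non-stationary targets.
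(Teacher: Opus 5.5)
Your proof is correct and follows the paper's argument: rewrite the gradient flow as $\dot w = -2\Sigma_\phi e$ using $\Sigma_\phi w_t^* = b_t$, differentiate $w_t^*$, and conclude with $\dot\Gamma = 2e^\top \dot e$. The only cosmetic difference is that the paper first differentiates $w_t^*$ with a time-varying $\Sigma_\phi$, obtains the extra representation-drift term $2e^\top\Sigma_\phi^{-1}\dot\Sigma_\phi\Sigma_\phi^{-1}b_t$, and then discards it by the constancy assumption, whereas you impose constancy up front and note, correctly, that this is exactly what removes that term.
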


\begin{proof}[Proof sketch]
Writing $\dot e(t) = \dot w - \dot w_t^*$ and using
$\dot w = -2 \Sigma_\phi w + 2 b_t$ together with
$w_t^* = \Sigma_\phi^{-1} b_t$ (and fixed $\Sigma_\phi$),
we obtain $\dot e(t) = -2 \Sigma_\phi e(t) - \Sigma_\phi^{-1} \dot b_t$.
The claim follows from $\dot \Gamma = 2 e(t)^\top \dot e(t)$. See Appendix~\ref{appdx:formal_analysis} for the full proof.
\end{proof}

\begin{figure}[!t]
    \centering \includegraphics[width=0.46\textwidth]{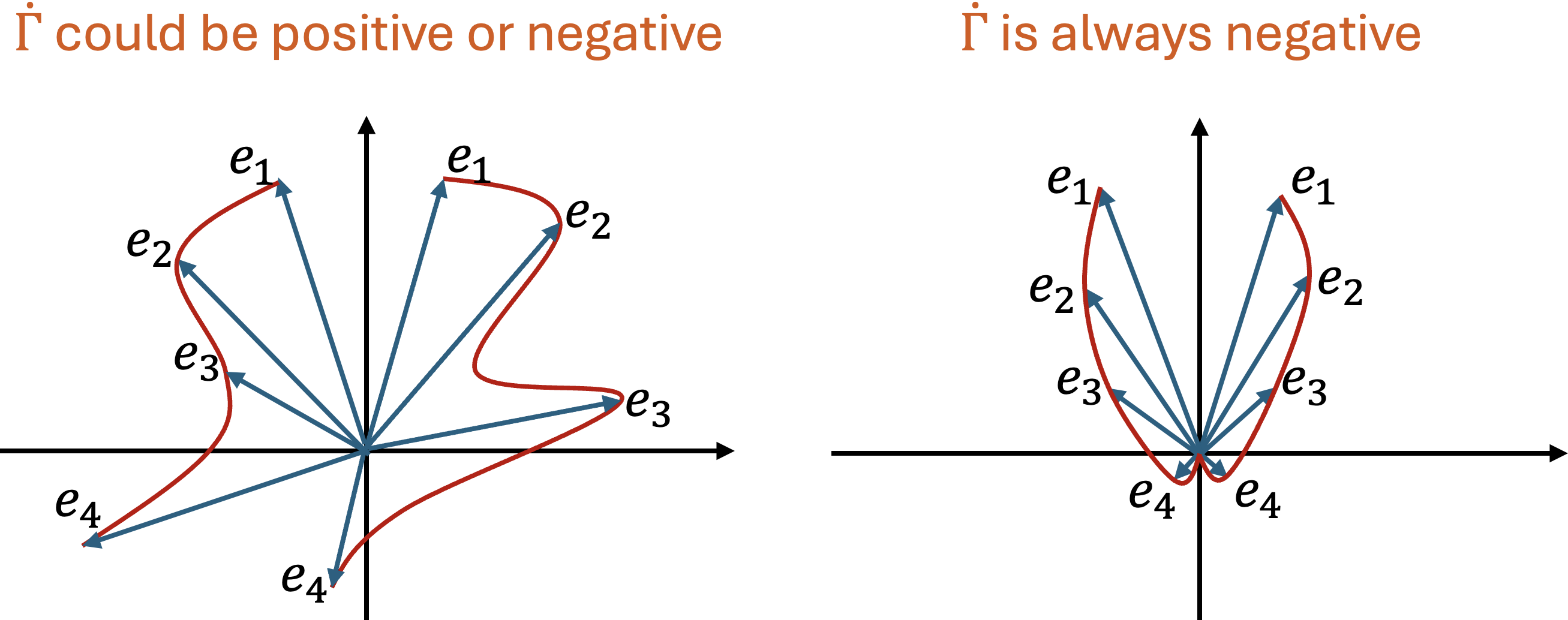}
    \vspace{-0.1cm}
    \caption{\textbf{Illustration of two tracking regimes.} \textbf{Left:} the norm of the tracking error exhibits non-monotonic behavior and fails to converge, indicating unstable tracking. \textbf{Right:} the tracking error decreases monotonically and converges to zero, corresponding to stable tracking dynamics.}
    \label{fig:V_dot}
    \vspace{-0.2cm}
\end{figure}

\paragraph{Geometric Implications for Stability.}
The decomposition in \autoref{eq:V_dot_main} highlights competing effects
governing stability under non-stationary targets.
The first term ({\color{blue}in blue}) is a strict contraction whose strength depends on the spectrum
of $\Sigma_\phi$, while the second term ({\color{red}in red}) captures drift induced by target
non-stationarity through $\dot b_t$.
The geometry of the learned representation plays a central role in determining whether the tracking error decays or grows over time.

\begin{tcolorbox}[colback=orange!15,
leftrule=0.5mm,top=0.5mm,bottom=0.5mm]
\begin{enumerate}[left=0cm,nosep]
    \item \textbf{Tracking-error contraction.}
    Stability requires $\dot \Gamma < 0$ for all error directions, ensuring
    monotonic decay of the tracking error (see \autoref{fig:V_dot}).
    While the contraction term $\color{blue}-4\, e^\top \Sigma_\phi e$ is always negative,
    its magnitude depends on the conditioning of $\Sigma_\phi$ and can vary
    significantly across directions.

    \item \textbf{Effect of isotropy under non-stationarity.}
    The drift term $\color{red}-2\, e^\top \Sigma_\phi^{-1} \dot b_t$ is amplified along
    poorly conditioned directions.
    Under a fixed total variance budget, anisotropic representations increase
    the likelihood that some directions exhibit weak contraction and large
    drift.
    Enforcing isotropy equalizes contraction across directions and reduces the
    probability that $\dot \Gamma$ becomes positive.

    \item \textbf{Gaussian representations and tail behavior.}
    Among distributions with fixed second-order moments, Gaussian distributions maximize entropy and exhibit tail decay. This limits the occurrence of rare but harmful error directions that could dominate the drift term, while preserving maximal information content in the representation.
\end{enumerate}
\end{tcolorbox}

\subsection{Why Isotropy Stabilizes Tracking Dynamics}
\label{sec:plaus_iso}

To ensure that $\dot \Gamma$ is negative for all tracking error vectors, the contraction induced by the first term in \autoref{eq:V_dot_main} must dominate the second term, whose sign may vary. The strength of this contraction is governed by the geometry of the covariance matrix $\Sigma_{\phi}$. Restricting the tracking error to the unit sphere, the weakest contraction occurs along the eigenvector corresponding to the smallest eigenvalue of $\Sigma_{\phi}$. Equalizing the contraction across all directions therefore requires distributing the total variance uniformly across dimensions, which implies that all eigenvalues of
$\Sigma_{\phi}$ are equal. The second term in \autoref{eq:V_dot_main} can be either positive or negative, and large magnitudes may destabilize the dynamics. As shown in Appendix~\ref{app:isotropy}, its absolute value is upper bounded by the condition number of $\Sigma_{\phi}$. Minimizing this bound requires the smallest possible condition number, which is achieved precisely when the covariance matrix is isotropic.

\subsection{Why Gaussianity Controls Drift Variance}
\label{sec:plaus_gaus}

Among all isotropic distributions, the Gaussian is particularly well suited for stabilizing the dynamics induced by \autoref{eq:V_dot_main}. The second term in this equation can fluctuate in sign and may occasionally dominate the contraction term, leading to positive values of $\dot \Gamma$. As shown in Appendix~\ref{app:formal_gaussian}, a Gaussian distribution minimizes the variance of this term, thereby reducing the likelihood of large destabilizing deviations. Distributions with heavier tails introduce higher-order moment contributions that increase this risk. This effect can be understood through moment concentration. The embedding vectors enter the dynamics through $\dot b_t$, and for any random variable $X$,
$\mathbb{P}(|X-\mu| \ge t) \le \mathbb{E}[|X-\mu|^p] / t^p$.

Large higher-order moments therefore translate directly into rare but severe spikes in the magnitude of the second term. By Isserlis' theorem \cite{isserlis1918formula}, all higher-order moments of a Gaussian distribution are fully determined by its covariance, preventing such uncontrolled fluctuations. In other words, while different distributions may share the same covariance, non-Gaussian ones can differ in higher-order moments, which are not controlled by eigenvalues of the covariance matrix. From an information-theoretic perspective, a Gaussian distribution further maximizes entropy among all continuous distributions with a fixed covariance. As a result, it achieves the least-structured, most unbiased representation compatible with isotropy, balancing expressiveness with stability.

\subsection{Sketched Isotropic Gaussian Regularization}
\label{sec:sigreg}

\begin{figure}[!t]
    \centering \includegraphics[angle=270,width=0.48\textwidth]{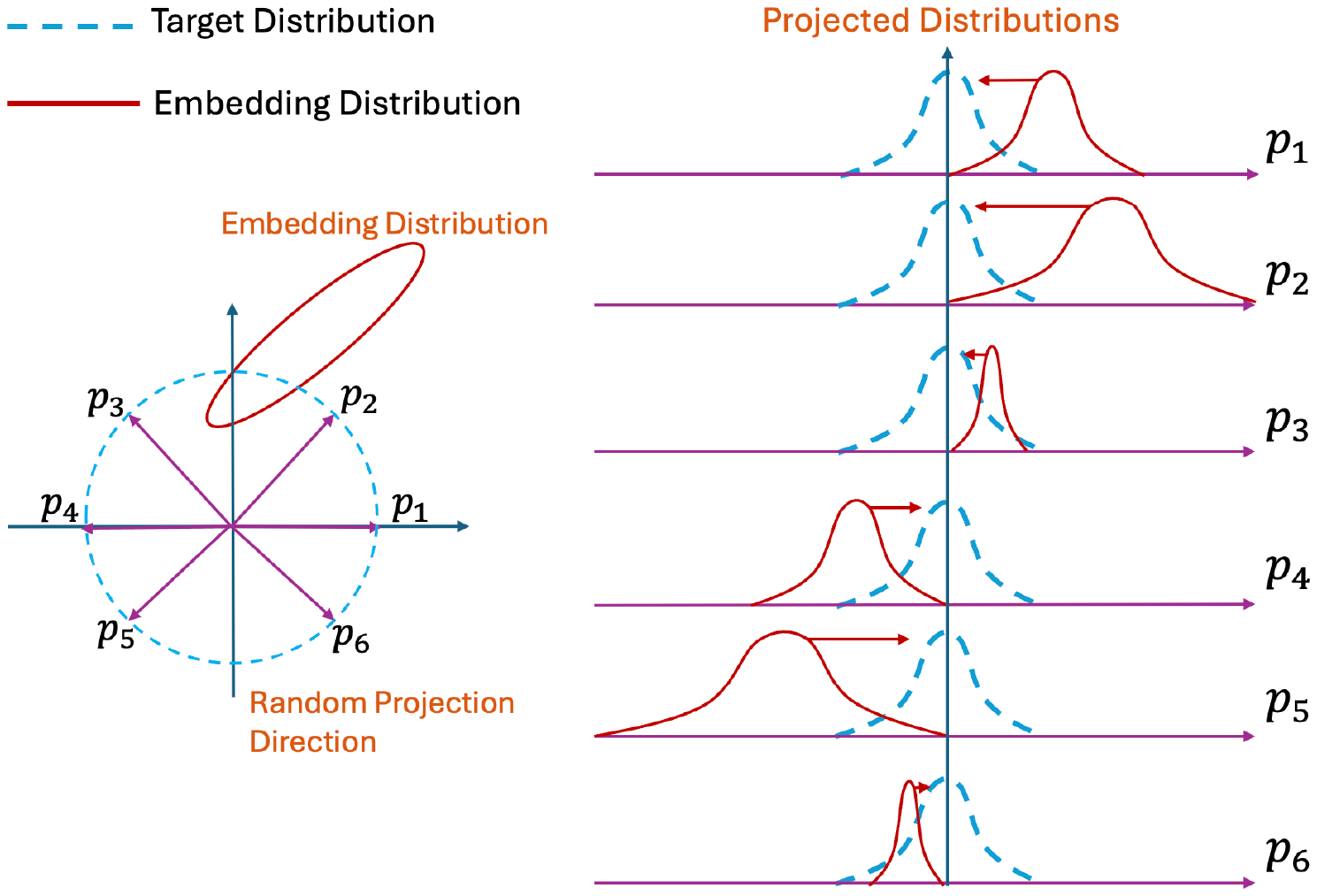}
    \vspace{-0.7cm}
    \caption{\textbf{Directly shaping a multivariate distribution.}
    SIGReg first projects the embeddings onto a small set of random directions
    ($p_i$: sketching), producing multiple univariate distributions. Each projection is
    then matched to the corresponding univariate target distribution.}
    \vspace{-0.2cm}
    \label{fig:sigreg}
\end{figure}

In Sections~\ref{sec:plaus_iso} and~\ref{sec:plaus_gaus}, we argued that isotropy and
Gaussian tail behavior are desirable properties for stabilizing learning under
non-stationarity. This raises two practical questions: \textit{(i)} how to measure
deviations from an isotropic Gaussian representation distribution, and
\textit{(ii)} how to enforce such structure efficiently during training. To this end, we adopt \emph{Sketched Isotropic Gaussian Regularization} (SIGReg)
\cite{balestriero2025lejepa}, a lightweight regularizer recently introduced in
self-supervised learning. SIGReg avoids directly matching high-dimensional
distributions by projecting embeddings onto a small number of random directions
and applying a univariate distribution-matching loss to each projection. By
resampling directions across iterations, SIGReg enforces isotropy and Gaussianity
in expectation while remaining computationally efficient.

Formally, given an embedding $\phi \in \mathbb{R}^d$ and random unit vectors
$\{v_k\}_{k=1}^K$, SIGReg operates on the projected variables
$z_k = v_k^\top \phi$. The regularization loss matches the empirical distribution
of each $z_k$ to a zero-mean Gaussian with variance $\sigma^2$ using a
characteristic-function-based objective \cite{balestriero2025lejepa} (see \autoref{fig:sigreg}). This
construction yields a fully differentiable regularizer that simultaneously
controls isotropy and tail behavior. In our experiments, we study the impact of
each component through targeted ablations.

\begin{figure*}[!t]
    \centering
    \includegraphics[width=0.93\textwidth]{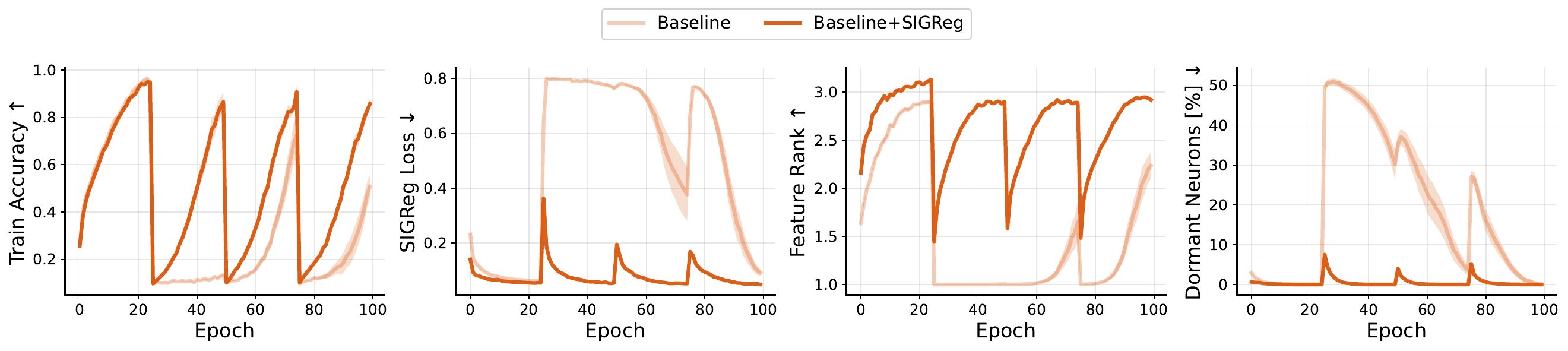}
    \vspace{-0.2cm}
    \caption{\textbf{Non-stationary CIFAR-10.} Training under repeated label shuffling. The baseline shows poor recovery after each shift, with SIGReg loss spikes, rank collapse, and increased dormancy. Enforcing isotropic Gaussian representations stabilizes training, accelerates recovery, preserves rank, and reduces dormancy.}
    \label{fig:cifar_10_with_sigreg_adam}
\end{figure*}

\begin{tcolorbox}[
  colback=orange!15,          
  colbacktitle=orange!45,     
  coltitle=black,             
  colframe=black,             
  arc=3pt,                    
  leftrule=0.5mm,
  top=0mm,
  bottom=0mm,
  title=\textbf{Key Insight: Isotropy and Stability}
]
When the feature covariance satisfies $\Sigma_\phi = \sigma^2 I$, the contraction term becomes isotropic and the tracking error is uniformly damped across all directions.
In contrast, ill-conditioned representations induce anisotropic contraction, amplifying the effect of target drift through $\Sigma_\phi^{-1}\dot b_t$.
As a consequence, non-isotropic embeddings can lead to unstable or directionally biased tracking behavior under non-stationarity.
\end{tcolorbox}

\subsection{Empirical Validation}

To further validate the theoretical motivation for isotropic Gaussian representations, we study a controlled non-stationary linear regression setting in which SGD must continuously track a time-varying optimal solution (see \autoref{sec:toy_experiments}). Across a wide range of feature distributions, we find that isotropic Gaussian features consistently yield the most stable optimization dynamics, characterized by rapid and nearly monotonic contraction of the tracking error following task switches. In contrast, increasing anisotropy slows convergence and introduces instability, while heavy-tailed Laplace features produce frequent transient increases in tracking error even when the covariance remains isotropic.

These trends persist across multiple condition numbers, random seeds, and higher-dimensional settings, indicating that both isotropy and distributional shape play a fundamental role in optimization stability. Furthermore, we identify specific low-variance directions induced by anisotropic covariances along which SGD can exhibit persistent drift and even divergence. These results provide empirical evidence that isotropy and Gaussianity improve tracking stability under non-stationarity, motivating the use of an isotropic Gaussian target embedding distribution in our approach.

\section{Empirical Results}
\subsection{CIFAR-10 Under Distribution Shift}
\label{sec:cifar_shift}

To isolate the effect of non-stationarity independently of control, exploration, and environment dynamics, we first study a controlled supervised setting based on CIFAR-10 \citep{krizhevsky2009learning}.
Following prior work \citep{igl2021transient,sokar2023dormant,lyle2024normalization,castanyer2025stable,obando2025simplicial}, we introduce non-stationary targets via a shuffled-label protocol, in which label assignments are periodically permuted during training.
This induces a sequence of related but non-identical prediction problems, serving as a proxy for the drifting targets arising from bootstrapped updates in deep RL. This setup is widely used to study representation drift and neuron dormancy under non-stationarity, as it exposes similar representational stresses while avoiding confounding effects from policy learning and exploration \citep{sokar2023dormant,castanyer2025stable}.
Operating in a fully supervised regime allows us to directly analyze how representation geometry evolves under target drift. All experimental hyperparameters and training details are reported in \autoref{app:hyperparameters}.

We compare standard training against models encouraged toward isotropic representations, keeping the architecture, optimizer, and learning rate fixed.
We track classification performance, effective rank, and the fraction of dormant neurons under distribution shift (see \autoref{fig:cifar_10_with_sigreg_adam})\footnote{Unless otherwise specified, results are averaged over 5 seeds.}.
Under standard training, representations quickly become anisotropic, with variance concentrating along few directions, accompanied by rank collapse and reduced adaptability.
By contrast, encouraging isotropy preserves balanced variance, reduces neuron dormancy, and leads to more stable performance and faster recovery after shifts.
These effects closely mirror behaviors observed in deep RL under policy-induced distributional drift.

\subsection{Deep Reinforcement Learning}
\label{sec:rl_results}

\begin{figure*}[!t]
    \centering
     \includegraphics[width=0.93\textwidth]{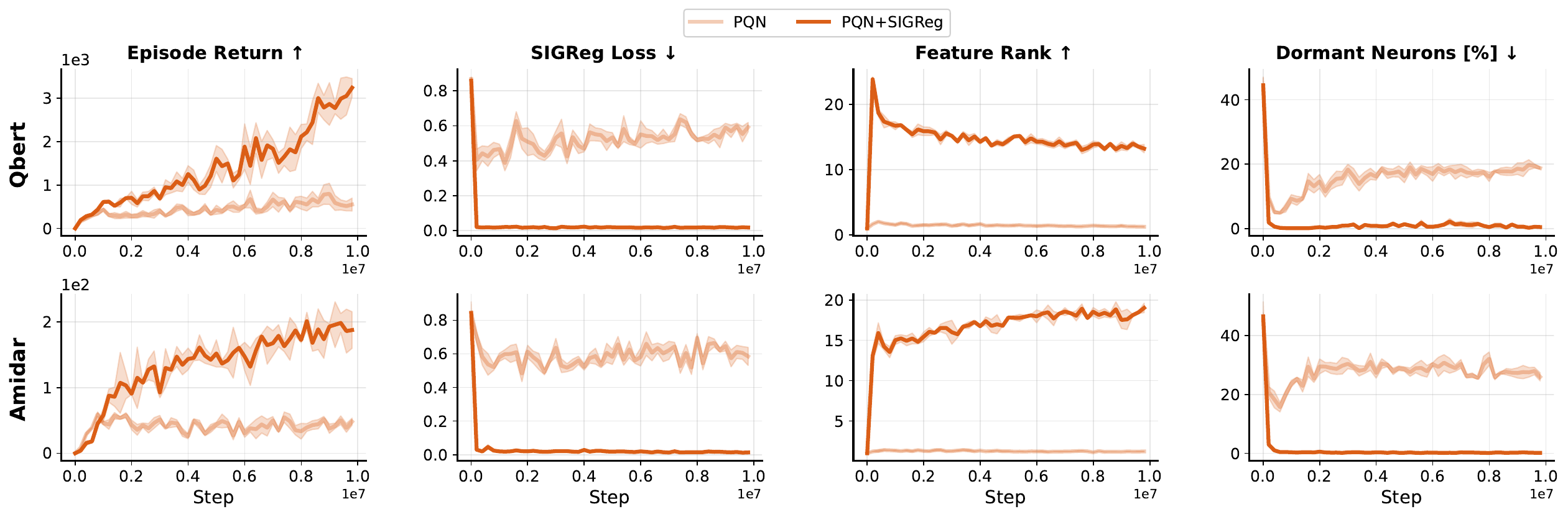}
     \vspace{-0.2cm}
    \caption{\textbf{Two Atari-10 games (PQN).}
Without isotropy regularization, representations exhibit rank collapse, increased neuron dormancy, and early performance saturation. Encouraging isotropic geometry leads to improved representation quality and higher, more stable performance.}
    \label{fig:2_games_pqn}
\end{figure*}

We evaluate isotropic Gaussian representations in deep RL, where non-stationarity arises naturally from policy improvement and bootstrapped learning targets. We focus on online deep RL agents trained in non-stationary regimes, where the distribution of visited states evolves continuously as the policy improves. This setting is particularly sensitive to representation collapse, gradient interference, and plasticity loss, which can silently degrade effective capacity even when training appears numerically stable.

\textbf{Setup.} We evaluate the effect of enforcing isotropic Gaussian representations in Parallelized Q-Networks (PQN) \citep{gallicisimplifying}. Experiments are conducted on the Atari-10 subset \citep{aitchison2023atari} of the Arcade Learning Environment (ALE) \citep{bellemare2013arcade}, following standard evaluation protocols \citep{agarwal2021deep}. For each experiment, we compare a baseline implementation against an identical version in which representations are regularized toward an isotropic Gaussian distribution.
All deep RL agents use the same convolutional encoders, optimizer hyperparameters, and training schedules (see \autoref{app:hyperparameters}). The SIGReg regularizer is applied only to the learned representations and does not modify the policy or value objectives.

We empirically analyze the effect of encouraging isotropic Gaussian representations via explicit minimization of the SIGReg loss. We observe consistent improvements in representation quality, reduced neuron dormancy, and improved asymptotic performance.
These results indicate that promoting isotropic Gaussian structure in representations mitigates common optimization pathologies in deep RL.

\paragraph{Effect on Representation Rank.}
Encouraging isotropic Gaussian structure in the learned representations leads to a systematic increase in embedding rank throughout training.
This behavior is consistent with the objective of maintaining well-conditioned covariance structure, preventing collapse along low-variance directions.
For PQN, the evolution of feature rank shows that representations remain high-rank over time, unlike the baseline, which exhibits progressive rank degradation.
This indicates that enforcing isotropy stabilizes representation geometry under non-stationary training, preserving expressive capacity as learning progresses (see  Figures~\ref{fig:2_games_pqn} and \ref{fig:temporal_representation}).

\paragraph{Reduction of Neuron Dormancy.}
Alongside improved rank, isotropic Gaussian representations substantially reduce neuron dormancy in PQN.
Maintaining high-entropy, well-spread representations prevents activations from concentrating on a small subset of units, thereby sustaining gradient flow across the network.
As a result, fewer neurons become inactive over time, mitigating a common form of plasticity loss in deep RL.
This suggests that representation geometry plays a direct role in preserving network capacity under continual bootstrapping (see \autoref{fig:2_games_pqn}).

\paragraph{Temporal Evolution of Representation Distributions.}

To better understand how isotropic Gaussian structure manifests during training, we analyze the temporal evolution of the learned representation distributions under PQN. \autoref{fig:temporal_representation} visualizes this evolution using a two-dimensional PCA projection of the embedding covariance at successive training stages.
Without explicit constraints on representation geometry, the embeddings progressively collapse onto a small number of dominant principal directions, resulting in highly anisotropic distributions with most variance concentrated in a few components.
In contrast, when representations are encouraged to follow an isotropic Gaussian structure, the projected covariance becomes approximately circular and centered at the origin.
This indicates both isotropy and a more uniform allocation of variance across dimensions, corresponding to a higher effective dimensionality.
These observations provide direct empirical evidence that isotropic Gaussian representations counteract representational collapse and maintain well-conditioned feature spaces throughout training. See \autoref{app:spec_analysis} for further discussion.

\begin{figure}[!h]
    \centering
\includegraphics[width=0.48\textwidth]{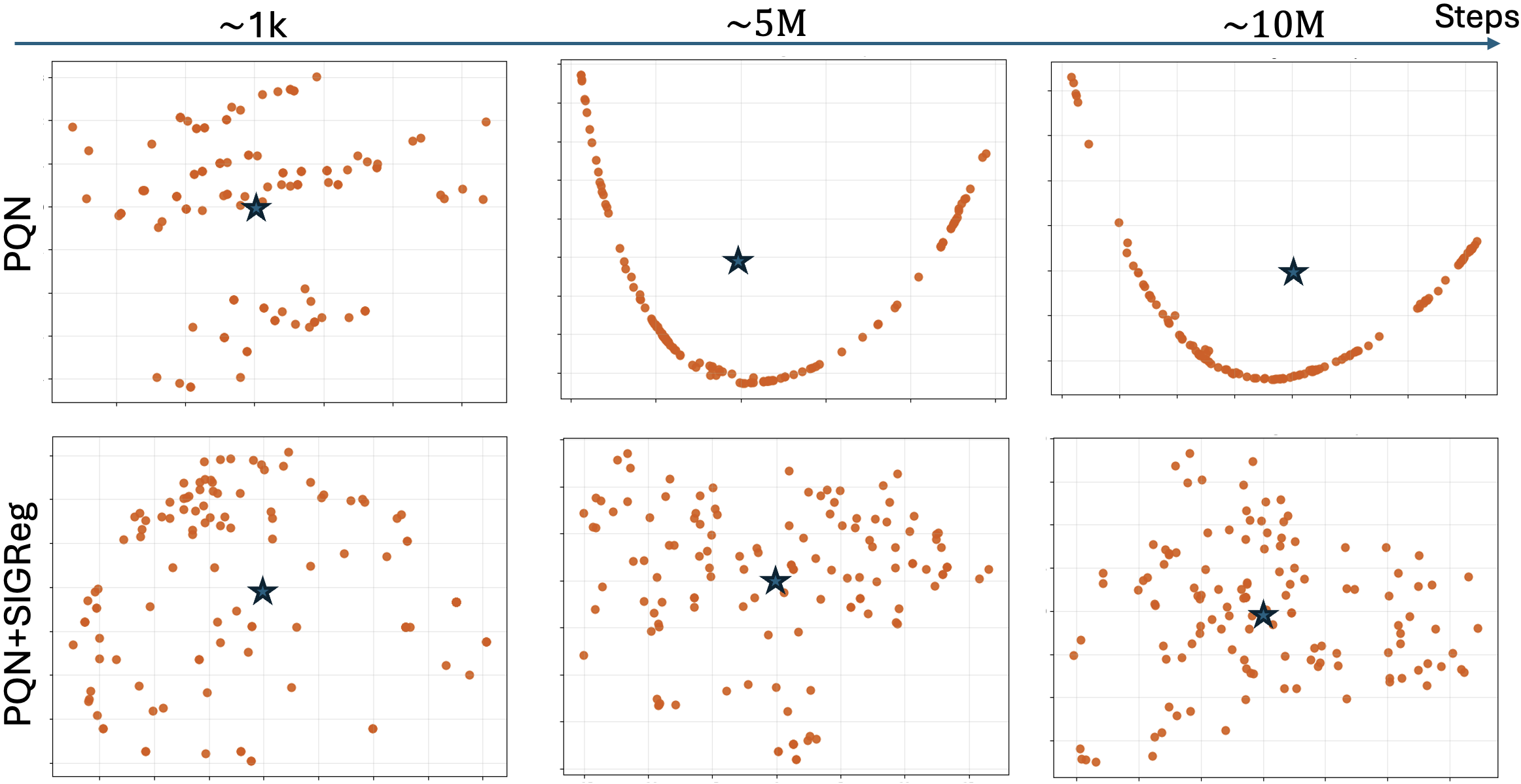}
\caption{\textbf{2D PCA of embedding covariance over training.}
Without constraints, representations collapse onto a few dominant principal components. Encouraging isotropic Gaussian structure yields more evenly distributed variance and higher effective dimensionality, reflected in reduced concentration on the leading components (PQN: $[0.4,0.2]\rightarrow[0.9,0.8]$ vs.\ PQN+SIGReg: $[0.3,0.2]\rightarrow[0.1,0.1]$).}
    \label{fig:temporal_representation}
\end{figure}

\paragraph{Performance Implications in PQN.}
We next examine whether these representational improvements translate into gains in learning performance.
Using Area Under the Curve (AUC) as our primary metric, which captures both learning speed and final performance, we observe consistent improvements over the baseline PQN agent \citep{gallicisimplifying}.
These gains indicate that stabilizing representation geometry through isotropic Gaussian structure not only improves internal metrics such as rank and dormancy, but also yields tangible benefits in control performance. Importantly, these improvements are achieved without introducing additional architectural complexity or second-order optimization, highlighting representation regularization as a lightweight and effective mechanism for stabilizing PQN training. \autoref{fig:2_games_pqn} shows improved sample efficiency and final performance on two Atari games when promoting isotropic Gaussian representations. See \autoref{fig:pqn_auc_medium} for additional results across more ALE games \citep{bellemare2013arcade,aitchison2023atari}.

\begin{figure*}[t]
    \centering
    \includegraphics[width=0.4\linewidth]{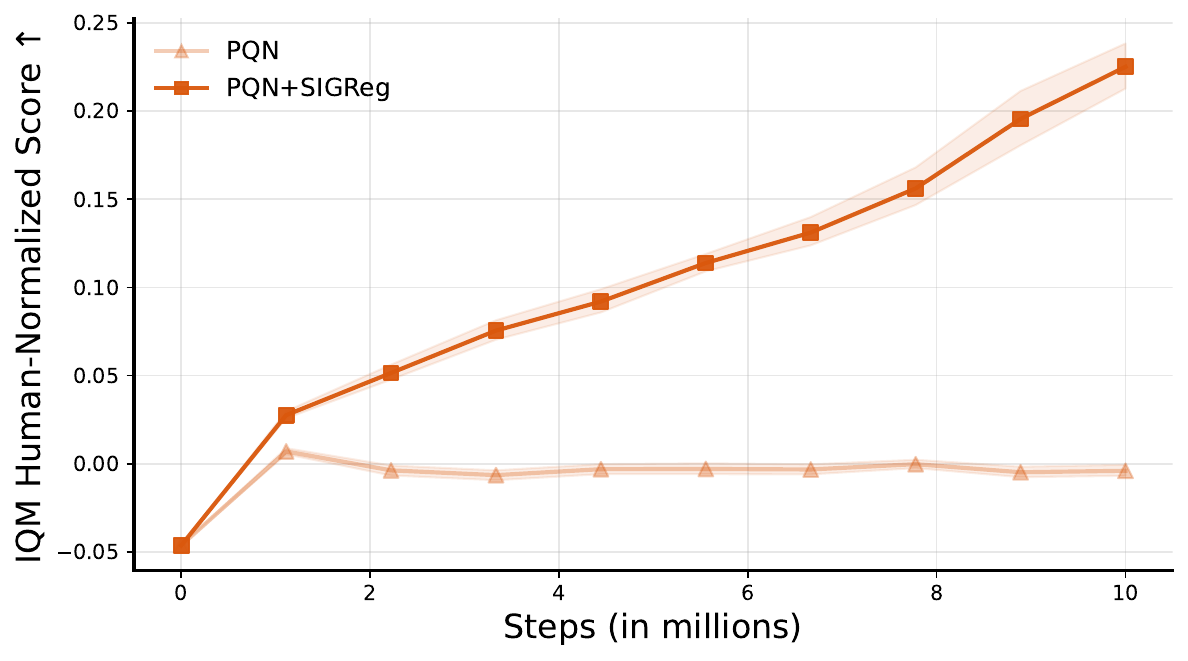}%
    \vline height 109pt depth 0 pt width 0.2 pt
    \includegraphics[width=0.6\linewidth]{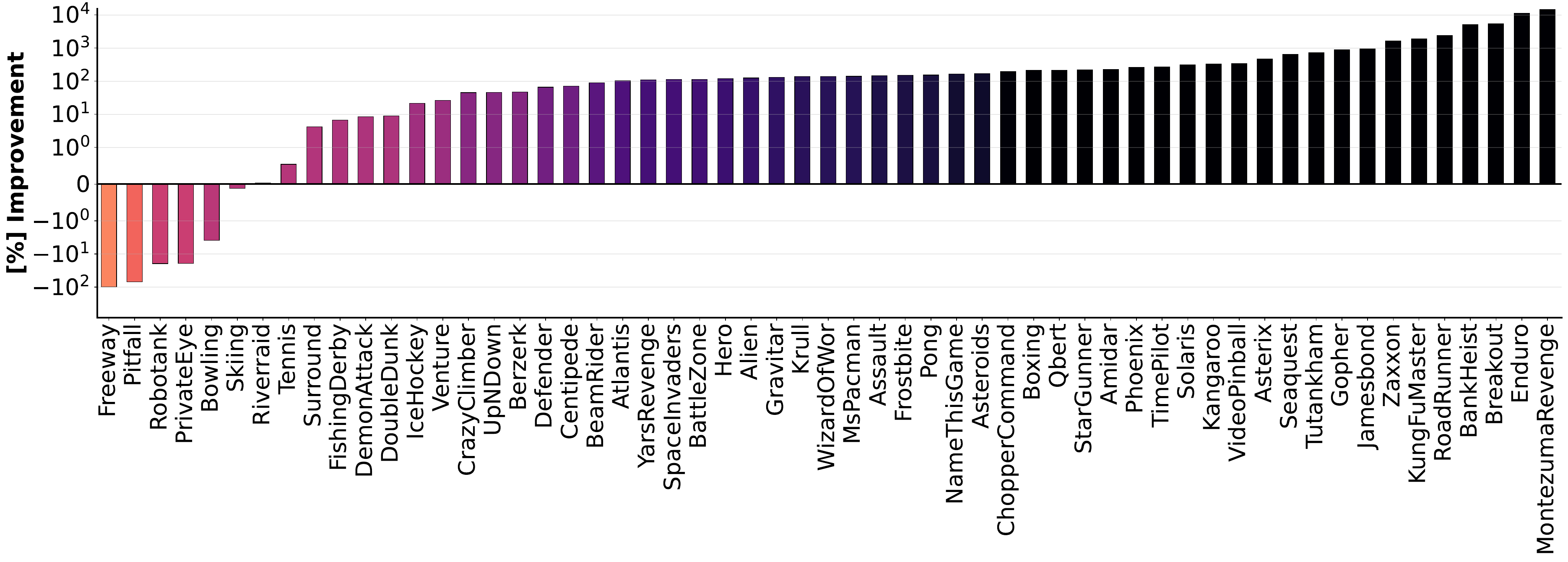}
    \includegraphics[width=0.4\linewidth]{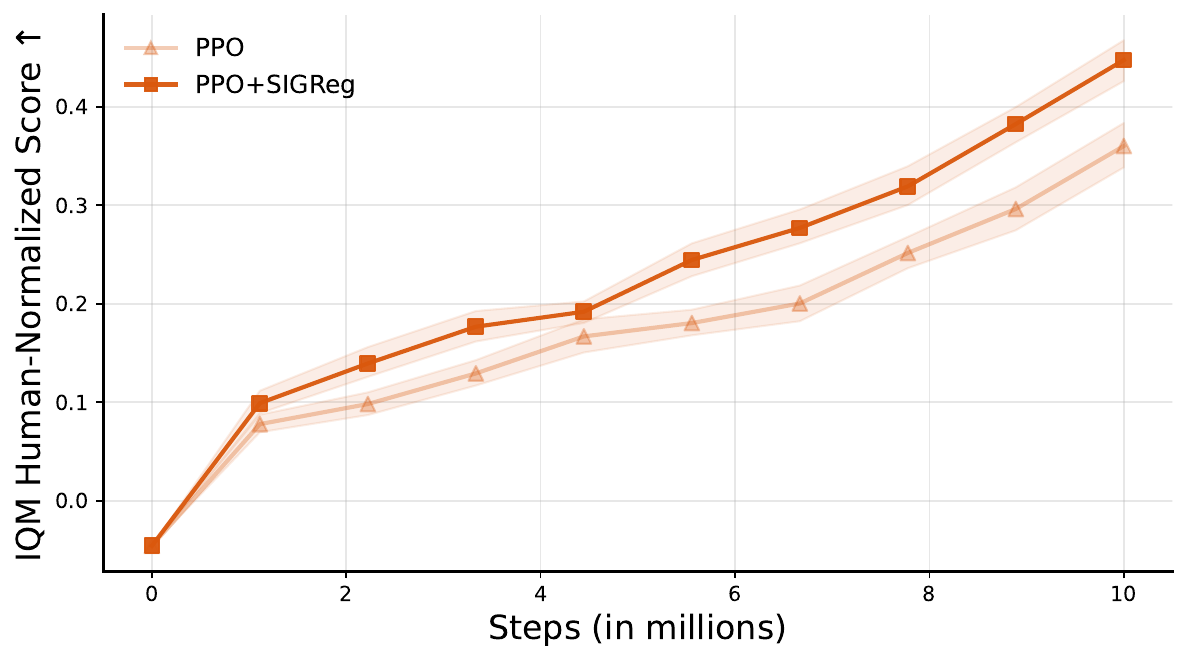}%
    \vline height 109pt depth 0 pt width 0.2 pt
    \includegraphics[width=0.6\linewidth]{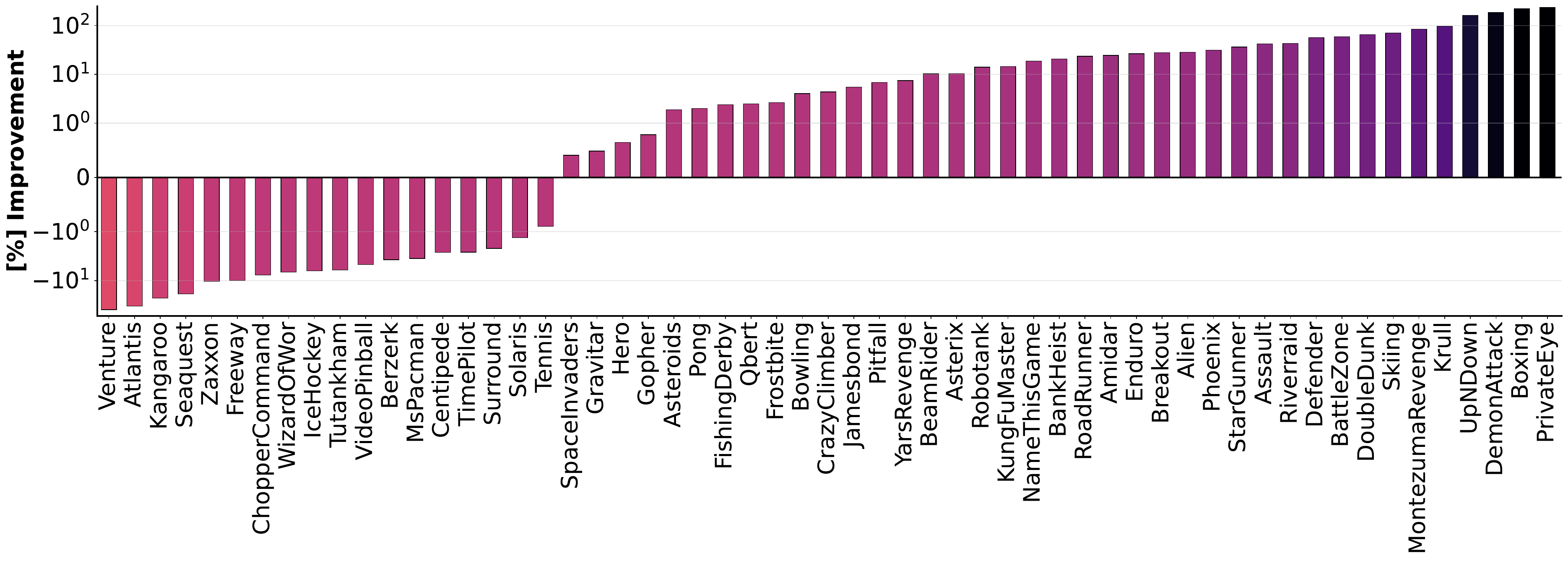}
    \vspace{-0.6cm}
    \caption{\textbf{Full Atari suite. Effect of isotropic Gaussian regularization.} \textbf{Left:} IQM human-normalized learning curves as a function of environment steps for PQN and PPO, with and without isotropic regularization. \textbf{Right:} Per-game improvement in AUC obtained by encouraging isotropic representation geometry. Across both algorithms, isotropic regularization improves final performance and sample efficiency.}
    \label{fig:pqn_auc_medium}
    \vspace{-0.1cm}
\end{figure*}

\subsection{Analysis of Design Choices}
\label{sec:ablation}
In this section, we analyze how different design choices for promoting isotropic representations affect learning under PQN.
Specifically, we study the impact of (i) alternative isotropic target distributions with heavier tails or imposing isotropy alone through covariance whitening, as proposed in VICReg \cite{bardes2021vicreg} in the self-supervised learning literature, and (ii) enforcing only symmetry (minimizing the imaginary part) or only tail behavior (minimizing the real part).
These ablations allow us to disentangle which statistical properties of isotropic Gaussian representations are most critical for stable and efficient deep RL. 

\paragraph{Alternative Isotropic Distributions.}

We first compare isotropic Gaussian representations with alternative isotropic distributions exhibiting heavier tails, namely Laplacian and Logistic distributions.
While heavier-tailed distributions may be sufficient in some regimes, our results indicate that they are consistently less effective under PQN.
As shown in Table~\ref{tab:ablation}, isotropic Gaussian representations yield the strongest and most reliable gains, improving approximately 90\% of games with large average relative improvements.
In contrast, Laplacian and Logistic distributions achieve smaller gains despite improving a similar fraction of environments. It has also been observed that achieving isotropy via covariance whitening is not enough (last row in Table~\ref{tab:ablation}). This suggests that, beyond isotropy, fast tail decay plays a central role in stabilizing learning and preventing extreme activations under non-stationary targets.

\paragraph{Role of Symmetry and Tail Decay.}
To further isolate the role of different statistical properties, we evaluate variants that enforce only symmetry (imaginary part) or only tail behavior (real part). Table~\ref{tab:ablation} shows that neither property alone matches the performance of full isotropic Gaussian representations. Enforcing tail behavior is generally more effective than enforcing symmetry alone, likely because intrinsic regularization already induces approximate symmetry. However, jointly enforcing both consistently gives the strongest and most stable improvements, showing that the benefit comes from their combination rather than from either property in isolation.

\subsection{Implicit Isotropy in Stabilization Methods}
\label{sec:implicit}
We further investigate the connection between isotropic Gaussian representations and strong stabilization mechanisms introduced by \citet{castanyer2025stable}, namely Kronecker-factored optimization and multi-skip residual architectures, both of which were designed to mitigate gradient pathologies and improve stability at scale. \autoref{fig:pqn_kron_radam_vanilla} reports results for Parallelized Q-Networks (PQN) for Atari-10 games. Across nearly all games, Kronecker-factored optimization consistently induces representations that are substantially closer to isotropic Gaussian distribution (lower SIGReg loss) than those learned with first-order methods such as RAdam. Importantly, this effect emerges without any explicit representation regularization.
Similarly, multi-skip architectures, which were introduced to stabilize gradient propagation, also implicitly promote better-conditioned and more isotropic representations (see \autoref{fig:pqn_radamMS_radam_vanilla}).

Although the Kronecker-factored optimizer can be effective and provide significant improvements, it requires additional memory and computation due to gradient curvature estimation. Therefore, it is desirable to achieve similar performance with first-order optimizers. Based on observations from \autoref{fig:pqn_kron_radam_vanilla}, we hypothesize that part of the stability and performance gains provided by these optimizers are the effect of the way they shape the geometry of the learned representations. Therefore, if we explicitly shape the embeddings of a baseline model, we should expect to see a reduction in the performance gap. Consistent with this idea, \autoref{fig:all_iqm} and \autoref{tab:stat_pqn_and_ppo} show that enforcing isotropic and Gaussian representations through the auxiliary SIGReg objective significantly narrows the gap between the baseline and Kronecker-factored optimizer. Importantly, this improvement is obtained without adding significant computational or memory overhead.

\subsection{Comparison with Alternative  Methods.}
To determine whether the benefits of isotropic Gaussian representations arise from generic representation regularization or from the specific combination of isotropy and Gaussianity, we compare SIGReg against several alternative approaches that have previously been shown to improve representation quality. These include covariance whitening (VICReg), embedding norm regularization, simplicial embeddings, and weight orthogonalization. While many of these methods improve performance relative to the baseline, isotropic Gaussian regularization achieves the strongest overall gains, improving 90\% of games and yielding the highest average improvement across Atari-10 (Table~\ref{tab:reg_comparison}).

These results suggest that the observed gains cannot be fully explained by generic anti-collapse mechanisms, feature spreading, or rank maximization alone. For example, VICReg promotes isotropy but does not explicitly enforce Gaussianity, while weight orthogonalization encourages diverse features through constraints on the network weights. Despite their benefits, neither matches the average improvement obtained by isotropic Gaussian representations. These findings provide further evidence that the combination of isotropy and Gaussianity constitutes a particularly effective inductive bias for learning robust representations under non-stationarity.

\subsection{Full Atari Suite}
\label{subsec:full_atari}
We next evaluate isotropic Gaussian representations at scale on the full Atari benchmark using PQN. \autoref{fig:pqn_auc_medium} reports per-game improvements in area under the learning curve (AUC) relative to a RAdam baseline. Encouraging isotropic Gaussian structure in the learned representations yields broad and consistent gains across the suite.
Out of 57 games, 51 (89.5\%) exhibit improved performance, with a mean AUC improvement of 889\% and a median improvement of 138\%.
Crucially, these improvements are not driven by a small number of outlier environments, but are distributed across games with diverse dynamics, reward structures, and exploration challenges. These results demonstrate that isotropic Gaussian representations scale reliably to large and heterogeneous benchmarks, providing a simple and effective mechanism for improving both learning efficiency and final performance in deep RL.

\subsection{Beyond Atari and PQN}

\paragraph{Policy Gradient Methods (PPO).}
Motivated by the results that isotropic Gaussian representation leads to significant improvements in value-based methods (Section~\ref{subsec:full_atari}), we next evaluate whether similar geometric effects, and their benefits, extend to policy-gradient algorithms.
We focus on Proximal Policy Optimization (PPO) \citep{schulman2017proximal}, which differs substantially from PQN in both optimization dynamics and update structure. Although PPO is often regarded as more stable, we observe that it still suffers from representation collapse and neuron dormancy under long training horizons and non-stationary data distributions \citep{moalla2024no,mayor2025the}. 

Across the full Atari suite, encouraging isotropic representation geometry leads to consistent improvements in IQM human-normalized performance and higher AUC in the majority of games (see \autoref{fig:pqn_auc_medium}).
Importantly, stabilization strategies that operate at the optimization or architectural level, such as Kronecker-factored optimization or multi-skip residual connections, do not reliably transfer to PPO and can even degrade performance (see \autoref{fig:all_iqm} and \autoref{tab:stat_pqn_and_ppo}).
By contrast, modestly encouraging isotropy at the representation level yields improvements without altering the PPO update rule or introducing algorithm-specific modifications. These results reinforce the interpretation that isotropic representations effectively address optimization issues under non-stationarity and that this effect generalizes beyond value-based methods.

We observe larger gains in PQN than in PPO in \autoref{fig:pqn_auc_medium}, which may be explained by differences in how the two algorithms interact with non-stationarity. PQN relies on bootstrapped value estimates and replayed experience, making learning more sensitive to evolving targets and representation quality. In contrast, PPO operates in an on-policy regime with fresher data and more tightly coupled policy and value updates, which can reduce sources of distribution shift and instability. Consequently, improvements arising from isotropic Gaussian representations may be more pronounced in PQN, where learning dynamics are particularly sensitive to the quality of the underlying features.

\begin{figure}[!t]
    \centering
\includegraphics[width=0.242\textwidth]{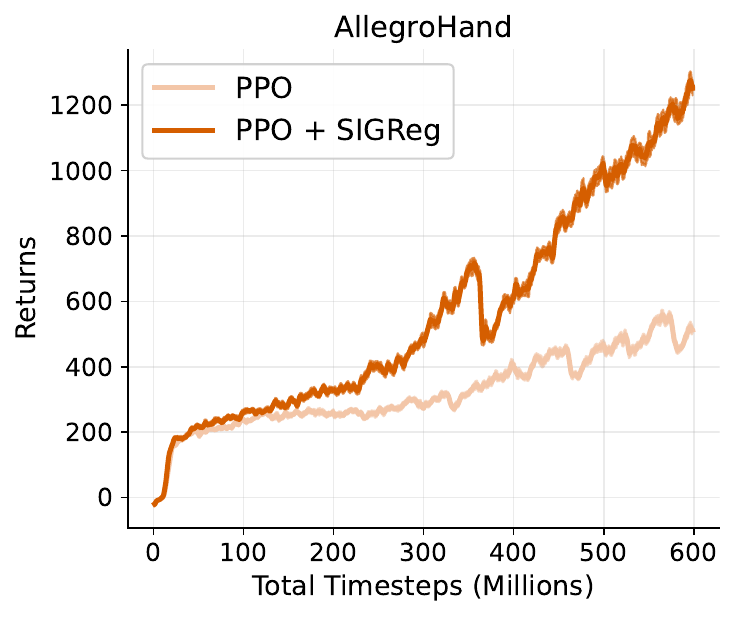}%
\includegraphics[width=0.242\textwidth]{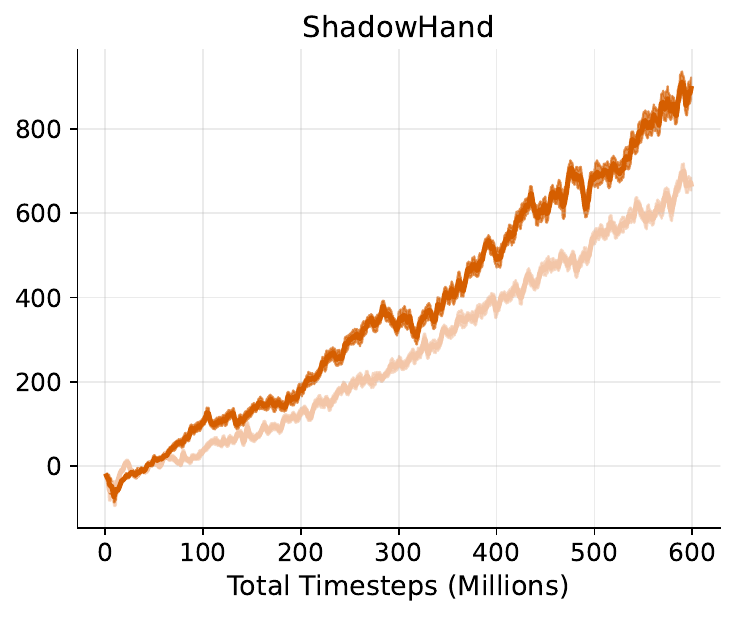}%
\caption{\textbf{Isaac Gym continuous control.}
Learning curves on two representative locomotion tasks.
Isotropic Gaussian representations improve stability and reduce variance. We report returns over 5 runs for each experiment. See Section~\ref{appnx:isaacgym} for additional results on four Isaac Gym control tasks.}
    \label{fig:isaac_gym}
\end{figure}

\paragraph{Continuous Control in Isaac Gym.}
To assess whether the benefits of isotropic Gaussian representations extend beyond discrete control and pixel-based domains, we evaluate continuous control tasks in Isaac Gym \citep{makoviychuk2021isaac}. These environments exhibit strong non-stationarity due to contact dynamics, evolving state distributions, and high-dimensional continuous action spaces.
Across a range of locomotion and manipulation tasks, encouraging isotropic representation geometry leads to improved training stability and reduced variance across 5 random seeds as shown in \autoref{fig:isaac_gym}.

\section{Related Work}
\label{sec:related_work}

Deep RL agents operate under inherently non-stationary conditions, as both data distributions and learning targets evolve with the policy. This non-stationarity exacerbates representation drift and plasticity loss, leading to dormant neurons, rank collapse, and reduced effective capacity during training \citep{lyle2023understanding,obando2025simplicial,lyle2022understanding,nikishin2022primacy,moalla2024no}. Such degradation has been linked to performance collapse and diminished adaptability in long-horizon, continual, and online learning settings \citep{tang2025mitigating}. Existing mitigation strategies typically rely on architectural changes, auxiliary losses, neuron reinitialization, or optimization-centric techniques \citep{nikishin2022primacy,liu2025measure,palenicek2026xqc,castanyer2025stable}, as well as spectral and rank-based diagnostics \citep{lyle2023understanding}, but do not directly target the statistical structure of learned representations. 

A complementary line of work improves representation stability through auxiliary objectives inspired by self-supervised learning \citep{echchahed2025a}. These include contrastive methods such as CURL \citep{laskin2020curl}, predictive approaches like SPR \citep{schwarzerdata}, and metric-based objectives such as MiCO \citep{castro2021mico}, which mitigate representation collapse and neuron dormancy \citep{kumarimplicit,lyle2021understanding,sokar2023dormant,liu2025measure,obando2025simplicial}. See \autoref{appdx:related_work} for further discussion.

\section{Conclusion}
We studied deep RL through the lens of representation geometry and showed that isotropic Gaussian representations provide a principled and effective solution to instability under non-stationarity. Our analysis formalizes representation learning as a tracking problem with drifting targets and establishes that, under isotropic Gaussian embeddings, the zero tracking-error equilibrium is stable, with bounded and decreasing error dynamics. This theoretical result offers a clear explanation for why certain representation structures are intrinsically robust to evolving objectives. We evaluated SIGReg as a lightweight and practical mechanism for shaping representations toward this favorable geometry. Across controlled non-stationary supervised settings and large-scale deep RL benchmarks, SIGReg consistently improved training stability, reduced representation collapse and neuron dormancy, and led to substantial performance gains.

\paragraph{Discussion.}
A defining challenge of deep RL is that representation learning and control are inseparable: as policies improve, both data distributions and learning targets evolve, so representations are never trained against a fixed objective. Non-stationarity is therefore a structural property of deep RL, not a secondary source of noise. Most prior work addresses this challenge indirectly, for example through optimization stabilization or variance reduction, implicitly treating representations as passive byproducts of training. Our results suggest this view is incomplete: under non-stationary supervision, representation geometry plays a central role in determining learning stability. Viewing representation learning as a tracking problem with drifting targets clarifies why anisotropic or low-entropy embeddings are fragile. Such representations concentrate capacity along directions favored by transient targets, amplifying sensitivity to drift and accelerating the loss of effective capacity. From this perspective, representation collapse and neuron dormancy are predictable consequences of unconstrained geometry. This motivates representation-level inductive biases that remain well-conditioned as objectives evolve. Simple statistical constraints, such as isotropy and controlled variance, provide a robust foundation for stable and scalable learning under continual change.

\paragraph{Limitations.}
This work focuses on shaping the marginal distribution of learned representations and does not explicitly enforce task-specific structure or semantic alignment. While isotropic Gaussian representations provide a strong default prior under uncertainty and non-stationarity, they may be suboptimal for tasks requiring highly structured features, and balancing isotropy with task-adaptive biases remains an open challenge. Our analysis assumes simplified linear readouts and approximate stationarity of the representation covariance; extending these results to nonlinear heads, fully coupled off-policy actor–critic algorithms \citep{seo2025fasttd3simplefastcapable}, and continual deep RL settings \citep{tang2025mitigating} is an important direction for future work.

\section*{Acknowledgments}
The authors would like to thank Roger Creus and Walter Mayor Toro for valuable discussions during the preparation of this work. We would also like to give special thanks to Jesse Farebrother for providing valuable feedback on an early draft of the paper.

The research was enabled in part by computational resources provided by the Digital Research
Alliance of Canada (\url{https://alliancecan.ca}) and Mila (\url{https://mila.quebec}). Pablo Samuel Castro acknowledges funding from NSERC Discovery Grant. We acknowledge funding support from Google and CIFAR AI. We would also like to thank the Python community \citep{van1995python, 4160250} for developing tools that enabled this work, including NumPy \citep{harris2020array}, Matplotlib \citep{hunter2007matplotlib}, Jupyter \citep{2016ppap}, and Pandas \citep{McKinney2013Python}.

\section*{Impact Statement}

This paper presents work whose goal is to advance the field of Machine Learning. There are many potential societal consequences of our work, none of which we feel must be specifically highlighted here.

\bibliography{example_paper}
\bibliographystyle{icml2026}

\newpage
\appendix
\onecolumn

\etocdepthtag.toc{appendix}
\etocsettocstyle{\section*{Appendix Contents}}{} 
\setcounter{tocdepth}{3}
\etocsettagdepth{mtoc}{none}           
\etocsettagdepth{appendix}{subsubsection}  
\tableofcontents                        

\clearpage









\section{Related Work}
\label{appdx:related_work}

\paragraph{Non-Stationarity and Deep RL.}
Deep RL agents are trained under inherently non-stationary conditions, as both the data distribution and learning targets evolve with the policy. This non-stationarity exacerbates representation drift and plasticity loss, often resulting in dormant neurons and degraded learning dynamics \citep{lyle2023understanding,obando2025simplicial}. Recent empirical studies have shown that deep RL agents progressively lose effective capacity during training, manifested as a reduction in representation rank and an increasing fraction of inactive units \citep{lyle2022understanding,nikishin2022primacy,ceron2023small,moalla2024no,liu2025neuroplastic}. Such degradation has been linked to performance collapse and reduced adaptability in long-horizon, continual, and online learning settings \citep{tang2025mitigating}.

Prior approaches to mitigate plasticity loss typically focus on architectural modifications, auxiliary losses, or explicit neuron recycling and reinitialization strategies \citep{ceron2024mixtures,sokar2025dont,nikishin2022primacy,liu2025measure}. Other work has analyzed representational collapse and capacity loss through spectral and rank-based diagnostics, highlighting their prevalence across algorithms and domains \citep{lyle2023understanding}. \citet{castanyer2025stable} showed that mitigating gradient degradation through approximate second-order optimization and multi-skip information propagation can alleviate plasticity loss, enabling more stable scaling of deep RL architectures. More recently, \citep{obando2025simplicial} evaluated simplicial embeddings as a geometric inductive bias on latent representations, showing that structured and sparse feature spaces can stabilize critic bootstrapping and improve sample efficiency in actor–critic methods, alleviating feature collapse as a consequence of non-stationarity. While effective, these approaches primarily target optimization dynamics or architectural design, rather than directly shaping the statistical structure of learned representations.

\paragraph{Representation Learning in Deep RL.}
Learning stable and expressive representations is a long-standing challenge in deep RL. A substantial body of work has introduced auxiliary objectives to regularize representations and improve stability \citep{echchahed2025a}, often inspired by self-supervised learning \citep{balestriero2023cookbook}. Contrastive methods such as CURL \citep{laskin2020curl} encourage feature diversity, while predictive approaches such as SPR \citep{schwarzerdata} promote temporal consistency through future-state prediction. Metric-based objectives \citep{desharnai2004,fernssimilarity2012,castro2020scalable,zang2023understanding} such as MiCO \citep{castro2021mico} aim to align representational geometry with behavioral similarity.

More broadly, auxiliary losses and representation regularization techniques have been shown to mitigate representation collapse and neuron dormancy in deep RL \citep{kumarimplicit,lyle2021understanding,sokar2023dormant,liu2025measure,obando2025simplicial}.
While effective, these methods typically introduce additional prediction heads, task-specific objectives, or architectural complexity, and their benefits can be sensitive to hyperparameter choices and domain characteristics. For instance, \citet{farebrother2023protovalue} propose \emph{Proto-Value Networks}, which scale representation learning by training on a large family of auxiliary tasks derived from the successor measure. Similarly, \citet{cetin2023hyperbolic} show that imposing an explicit \emph{geometric prior} on the latent space, by learning representations in hyperbolic space, can improve performance and generalization. \citet{fujimoto2025towards} pursue general-purpose model-free RL by leveraging learned representations inspired by model-based objectives that approximately linearize the value function, enabling competitive performance across diverse benchmarks with a single set of hyperparameters.

In contrast to auxiliary-task-based approaches \citep{gelada2019deepmdp,zhang2021learning,fujimoto2023for,obando2025simplicial}, we focus on shaping representation geometry directly through a lightweight statistical regularizer. Our approach is inspired by recent advances in self-supervised learning, which demonstrate that enforcing simple statistical constraints, such as isotropy and Gaussianity, can be sufficient to yield stable representations under non-stationary targets. By encouraging isotropic Gaussian structure at the representation level, our method complements existing approaches while remaining simple, computationally efficient, and broadly applicable across deep RL algorithms.

\textbf{Stability Analysis of Linear Time-Variant Systems.} Stability analysis of linear time-varying systems has been extensively studied in the control systems literature. Classical approaches, such as Lyapunov stability analysis of Kalman-Bucy filters~\citep{del2018stability}, establish convergence guarantees under fixed structural assumptions and well-specified dynamical models. More broadly, stochastic stability and convergence in adaptive systems have been studied in stochastic approximation theory~\citep{benveniste2012adaptive,kushner2003stochastic}. Our work takes a complementary perspective: rather than assuming fixed embedding properties, we ask what regularization constraints lead to optimal embedding structures for stable task tracking. We characterize how specific properties of learned embedding distributions, such as isotropy and tail behavior, directly influence contraction rates and drift terms in non-stationary settings common in DRL. This geometric perspective, which explicitly optimizes representation properties for convergence rather than only analyzing convergence under given representations, is novel in the DRL literature.

\section{Formal Analysis}
\label{appdx:formal_analysis}

\subsection{In the presence of non-stationary tasks, Isotropic Gaussian
makes zero tracking error a stable equilibrium}
\label{formal}
To simplify the analysis, we consider a linear critic on top of the embedding:
\begin{equation}
Q_\theta(s,a) := w^\top \phi(s,a)
\end{equation}
where
\begin{itemize}
    \item $\phi(s,a) \in \mathbb{R}^d$ is the \textbf{penultimate-layer embedding}
    \item $w \in \mathbb{R}^d$ is the last-layer weight vector
\end{itemize}

\paragraph{Probability space.}
All expectations are taken with respect to an underlying probability space 
$(\Omega, \mathcal{F}, \mathbb{P})$ induced by interaction with the environment. 
In particular, the embedding $\phi(s,a)$ is a random variable induced by the state-action visitation distribution $(s,a)\sim d_t^\pi$, where $d_t^\pi$ denotes the (possibly time-varying) occupancy measure under policy $\pi$. The TD target $y_t = r + \gamma Q_{\theta^-}(s',a')$ is a random variable induced by trajectory sampling $\tau \sim \mathbb{P}_t^\pi(\tau)$, where trajectories are generated from the initial state distribution, policy $\pi$, and environment dynamics.

Define the TD target (the target is time-varying which is the source of non-stationarity):
\begin{equation}
y_t = r + \gamma Q_{\theta^-}(s',a')
\end{equation}
The expected critic loss is
\begin{equation}
\mathcal L_t(w)
=
\mathbb{E}\left[
\left(
w^\top \phi - y_t
\right)^2
\right]
\end{equation}
If we expand the loss term:
\begin{align}
\mathcal L_t(w) = \mathbb{E}\left[ \left(
w^\top \phi - y_t
\right)\left(
\phi^\top w - y_t^\top
\right) \right]
\end{align}
\begin{align}
\label{eq:exp_critic_loss}
\mathcal L_t(w)
&=
\mathbb{E}[ w^\top \phi \phi^\top w ]
-
2 \mathbb{E}[ y_t \phi^\top w ]
+
\mathbb{E}[ y_t^2 ]
\end{align}
Define:
\begin{equation}
\Sigma_\phi(t) := \mathbb{E}_{(s,a)\sim d_t^\pi}[ \phi \phi^\top ], 
\qquad
b_t := \mathbb{E}_{\tau \sim \mathbb{P}_t^\pi}[ \phi y_t ].
\end{equation}
where $\Sigma_\phi(t)$ is the covariance matrix of the embedding vectors and $b_t$ is the drift term caused by non-stationarity. Both $\Sigma_\phi(t)$ and $b_t$ are defined under the same trajectory distribution $\mathbb{P}_t^\pi$, which induces statistical dependence between representation geometry and the TD drift term.
Then:
\begin{equation}
\boxed{
\mathcal L_t(w)
=
w^\top \Sigma_\phi(t) w
-
2 w^\top b_t
+
\mathbb{E}[y_t^2]
}
\end{equation}
The gradient of the loss w.r.t the weight of the final layer:
\begin{equation}
\nabla_w \mathcal L_t(w)
=
2 \Sigma_\phi(t) w
-
2 b_t
\end{equation}
the final term was eliminated due to assuming target weights to be independent of the actual weights. 
We analyze continuous-time gradient flow:
\begin{equation}
\boxed{
\dot w(t)
=
- \nabla_w \mathcal L_t(w)
=
-2 \Sigma_\phi(t) w
+
2 b_t
}
\end{equation}

At each time $t$, the instantaneous minimizer for the weight matrix of the final layer can be calculated as follows:
\begin{equation}
\nabla_w \mathcal L_t(w_t^*) = 0
\end{equation}
This gives:
\begin{equation}
\boxed{
w_t^*
=
\Sigma_\phi(t)^{-1} b_t
}
\end{equation}
This optimal weight moves in time as a result of factors imposing non-stationarity. Our goal in this analysis is to:
\begin{enumerate}
    \item Define the tracking error for the weights of last layer.
    \item Derive the formula for how this tracking error changes.
    \item Derive the energy, Lyapunov, function for the norm of tracking error and show that isotropic Gaussian structure makes the zero equilibrium stable, meaning that an increase in the norm of tracking error will be damped and converge to zero over time.
\end{enumerate}

We define the tracking error at time $t$ as the difference between the weight matrix and the optimal unknown weights:
\begin{equation}
\boxed{
e(t) := w(t) - w_t^*
}
\end{equation}
Our goal is to analyze under what conditions $||e||_2^2=0$ is contractive. We focus on this equilibrium because it shows whether, when $||e||_2$ increases due to non-stationarity, it eventually returns to zero. To perform this analysis, we adopt Lyapunov stability analysis common in the analysis of the stability of equilibria of dynamical systems \citep{massera1949liapounoff,lefschetz1961stability}. We choose a quadratic function as the Lyapunov function, which represents the energy of the system:
\begin{equation}
\boxed{
\Gamma = ||e||_2^2
}.
\end{equation}
The condition for stability of $||e||_2^2=0$ is that the derivative of the Lyapunov function be negative, meaning that an increase in $||e||_2$, and therefore the energy of the system, leads to contraction of the dynamics to the equilibrium $||e||_2^2=0$.
\begin{theorem}[Tracking error dynamics]
\label{thm:tracking_error}
Assume that $\Sigma_\phi(t) \succ 0$ is constant over time (e.g., enforced by regularization)
and that $b_t$ is differentiable.
Under gradient flow, the time derivative of $\Gamma$ satisfies
\begin{equation}
\dot \Gamma
=
{\color{blue} -4\, e(t)^\top \Sigma_\phi(t)\, e(t) }
\;-\;
{\color{red} 2\, e(t)^\top \Sigma_\phi(t)^{-1} \dot b_t } .
\end{equation}
\end{theorem}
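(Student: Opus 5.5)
The plan is a direct computation. I will differentiate the Lyapunov function along the gradient-flow trajectory and substitute the closed-form expressions for $\dot w$ and $\dot w_t^*$.

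First, since $\Gamma = e(t)^\top e(t)$, the chain rule gives $\dot\Gamma = 2\, e(t)^\top \dot e(t)$. It therefore suffices to compute $\dot e(t) = \dot w(t) - \dot w_t^*$.

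For the first piece I use the gradient-flow equation $\dot w = -2\Sigma_\phi w + 2 b_t$. I then rewrite $b_t = \Sigma_\phi w_t^*$, which follows from the definition $w_t^* = \Sigma_\phi^{-1} b_t$; this requires $\Sigma_\phi \succ 0$, hence invertibility. The result is $\dot w = -2\Sigma_\phi (w - w_t^*) = -2\Sigma_\phi e(t)$.

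For the second piece I use the assumption that $\Sigma_\phi(t)$ is constant in time. Then $\Sigma_\phi^{-1}$ is a constant matrix, and differentiability of $b_t$ gives $\dot w_t^* = \Sigma_\phi^{-1}\dot b_t$.

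Combining the two pieces yields $\dot e(t) = -2\Sigma_\phi e(t) - \Sigma_\phi^{-1}\dot b_t$. Substituting into $\dot\Gamma = 2 e^\top \dot e$ gives
\[
\dot\Gamma = -4\, e(t)^\top \Sigma_\phi e(t) - 2\, e(t)^\top \Sigma_\phi^{-1}\dot b_t ,
\]
which is exactly the identity in Theorem~\ref{thm:tracking_error}.

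There is no real obstacle; the only point needing care is the differentiation of $w_t^*$. If $\Sigma_\phi$ were time-varying, an extra term $-\Sigma_\phi^{-1}\dot\Sigma_\phi \Sigma_\phi^{-1} b_t$ would appear. The constancy hypothesis is what removes it, so I will state explicitly where that hypothesis is used.

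I will also note two facts about the hypotheses. Positive definiteness makes the first term a genuine contraction, since $e^\top\Sigma_\phi e \ge \lambda_{\min}(\Sigma_\phi)\|e\|_2^2$. Beyond invertibility, however, positive definiteness is not needed for the identity itself.
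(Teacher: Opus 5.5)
Your proposal is correct and follows essentially the same computation as the paper: you write $\dot\Gamma = 2e^\top\dot e$, reduce $\dot w$ to $-2\Sigma_\phi e$ using $\Sigma_\phi w_t^* = b_t$, and differentiate $w_t^* = \Sigma_\phi^{-1}b_t$. The only cosmetic difference is the order in which the constancy hypothesis is used. The paper first differentiates $w_t^*$ for general time-varying $\Sigma_\phi$ via $\frac{d}{dt}\Sigma_\phi^{-1} = -\Sigma_\phi^{-1}\dot\Sigma_\phi\Sigma_\phi^{-1}$, obtains the extra representation-drift term $2e^\top\Sigma_\phi^{-1}\dot\Sigma_\phi\Sigma_\phi^{-1}b_t$, and then drops it. You impose constancy up front and only mention that term.
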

\begin{proof}

To analyze the stability of $||e||_2^2=0$, we need the time derivative of tracking error term:
\begin{equation}
\dot e = \dot w - \dot w_t^*
\end{equation}
Substitute $\dot w$:
\begin{align}
\dot e
&=
\left(
-2 \Sigma_\phi w + 2 b_t
\right)
-
\dot w_t^*
\end{align}
\begin{align}
\dot e
&=
-2 \Sigma_\phi (e + w_t^*)
+
2 b_t
-
\dot w_t^*
\end{align}
Since $\Sigma_\phi w_t^* = b_t$:
\begin{equation}
\boxed{
\dot e
=
-2 \Sigma_\phi(t) e
-
\dot w_t^*
}
\end{equation}
We are interested in writing this dynamic formula in terms of the embedding and drift. To write $\dot w_t^*$ in terms of the embedding and drift, recall:
\begin{equation}
w_t^* = \Sigma_\phi^{-1} b_t
\end{equation}
From matrix calculus \citep{petersen2008matrix}:

\begin{equation}
\frac{d}{dt} \Sigma^{-1}
=
- \Sigma^{-1} \dot \Sigma \Sigma^{-1}.
\end{equation}
Thus:
\begin{equation}
\dot w_t^*
=
\Sigma_\phi^{-1} \dot b_t
-
\Sigma_\phi^{-1} \dot \Sigma_\phi \Sigma_\phi^{-1} b_t
\end{equation}
Substitute into error dynamics:
\begin{equation}
\boxed{
\dot e
=
-2 \Sigma_\phi e
-
\Sigma_\phi^{-1} \dot b_t
+
\Sigma_\phi^{-1} \dot \Sigma_\phi w_t^*
}
\end{equation}

Differentiating the Lyapunov function:
\begin{equation}
\dot \Gamma = 2 e^\top \dot e.
\end{equation}
Substitute $\dot e$:
\begin{align}
\dot \Gamma
&=
-4 e^\top \Sigma_\phi e
-
2 e^\top \Sigma_\phi^{-1} \dot b_t
+
2 e^\top \Sigma_\phi^{-1} \dot \Sigma_\phi w_t^*.
\end{align}
Writing all terms in terms of embedding, tracking error, and drift:
\begin{equation}
\boxed{
\dot \Gamma
=
\underbrace{
-4 e^\top \Sigma_\phi e
}_{\text{contraction}}
\underbrace{
-2 e^\top \Sigma_\phi^{-1} \dot b_t
}_{\text{target non-stationarity}}
+
\underbrace{
2 e^\top \Sigma_\phi^{-1} \dot \Sigma_\phi \Sigma_\phi^{-1} b_t
}_{\text{representation drift}}
}
\end{equation}
\end{proof}

Our goal is to show that by regularizing the embedding dimension, certain conditions lead to stability of $||e||_2^2=0$, or in other words, $\dot \Gamma < 0$. Since we assume regularizing the embedding distribution (leading to a fixed desired covariance matrix), we can remove the third term. 

\subsubsection{First Term Analysis}
Since the covariance matrix is positive definite, the sign of the first term is always negative. Therefore, this term acts as a contractor, adding a negative component to the Lyapunov function's derivative. This term will not cause a problem as it always increases the stability of the equilibrium.

\subsubsection{Second Term Analysis}
Although the sign in front of this term is negative, the whole term could be negative or positive. It depends on the angle between the tracking error and the change in the drift vector under the inverse of the covariance matrix as the metric for calculating the inner product. Therefore, it could be helpful (if negative) or harmful (if positive). Our analysis will focus mainly on limiting and bounding the norm of this term, since the sign is not under our control and we cannot exploit the cases in which this term is helpful.

\subsubsection{Third Term Analysis}
The third term is due to the drift in the embedding. As explained before, we are looking for the distribution that we need to choose as the desired distribution in the SIGReg regularizer. Therefore, we can remove this term since we regularize the embedding to be a fixed desired distribution, which makes $\dot \Sigma_\phi\approx 0$. Although the sign in front of this term is positive, this term could also be negative in multiple scenarios, for example if $\text{Tr}(\Sigma_\phi)$ is being reduced, making $\dot{\Sigma}_\phi$ negative definite.

In the following subsections, we are going to consider only the first two terms, as we will regularize the distribution to be fixed and as a result covariance will not change, and assume that the $Tr(\Sigma_\phi)=c$. In other words, we have a fixed budget as the variance of the embedding.
\subsubsection{Why isotropy is helpful?}
\label{app:isotropy}
To achieve a stable equilibrium, we want to ensure $\dot \Gamma<0$ during training. This condition causes any non-zero tracking error to tend toward zero. Since the first two terms operate on different and independent vectors—the first term only on the tracking error vector and the second term on both the tracking error and drift—our only solution is to control the magnitude of these terms separately.

The first term (contraction) is always negative and helpful for driving the derivative toward negative values, since the covariance matrix is a positive semi-definite matrix. However, we want this term to be large and negative for all possible tracking error vectors.

Assume $\Sigma_\phi \succ 0$ and fix the total variance
\begin{equation}
\operatorname{Tr}(\Sigma_\phi)=\sum_{i=1}^d \lambda_i = c
\end{equation}
where $\lambda_i$ are the eigenvalues of $\Sigma_\phi$. We always have the bound
\begin{equation}
e^\top \Sigma_\phi e \;\ge\; \lambda_{\min}(\Sigma_\phi)\,\|e\|_2^2 
\end{equation}

Equality happens when the tracking error is aligned with the eigenvector corresponding to the smallest eigenvalue. We want to control this and ensure there is no direction for the tracking error in which the lower bound is small, as for that direction the contraction term becomes weak and small. Since the average of numbers is always greater than or equal to the smallest number:
\begin{equation}
\lambda_{\min}(\Sigma_\phi)
\;\le\;
\frac{1}{d}\sum_{i=1}^d \lambda_i
=
\frac{c}{d}
\end{equation}
it follows that
\begin{equation}
\min_{\|e\|_2=1} e^\top \Sigma_\phi e
=
\lambda_{\min}(\Sigma_\phi)
\;\le\;
\frac{c}{d}
\end{equation}

Therefore,
\begin{equation}
\max_{\Sigma_\phi:\,\operatorname{tr}(\Sigma_\phi)=c}
\;\min_{\|e\|_2=1}
e^\top \Sigma_\phi e
=
\frac{c}{d}
\end{equation}
and equality holds if and only if:
\begin{equation}
\lambda_1=\lambda_2=\cdots=\lambda_d=\frac{c}{d}
\end{equation}

In simple terms, we want to boost the direction with the smallest eigenvalue to ensure no direction will significantly dampen the contraction term. Since the average of eigenvalues is fixed and the smallest eigenvalue will always be less than or equal to the average, the best case is when the smallest eigenvalue equals the average, which occurs when all eigenvalues are the same.

For the second term, since two different vectors are involved, we cannot use the positive semi-definiteness of the covariance matrix. Therefore, this term could be positive or negative and could cause the whole derivative to be large and positive. To control the worst-case scenario, we consider the upper bound for $|e^\top \Sigma_\phi^{-1} \dot b_t|$. Since $b_t=\Sigma_t w^*_t$, assuming negligible shift in the covariance matrix:
\begin{equation}
    |e^\top \Sigma_\phi^{-1} \dot b_t| \le \lambda_{\max}(\Sigma_\phi^{-1}) \lambda_{\max}(\Sigma_\phi)\|e\|_2=\frac{\lambda_{\max}(\Sigma_\phi)}{\lambda_{\min}(\Sigma_\phi)}\|e\|_2=\kappa(\Sigma_{\phi})\|e\|_2
\end{equation}
where $\kappa(\Sigma_{\phi})$ is the condition number of the covariance matrix. This term clearly shows that to minimize this upper bound, we need to set the condition number to the minimum value possible, which is one. This will be achieved if and only if the distribution is isotropic: $\lambda_{\max}=\lambda_2= \cdots =\lambda_{\min}=\frac{c}{d}$.
\subsubsection{Among Isotropic Distributions, Gaussian Leads to Minimum Fluctuations in the Drift Term}
\label{app:formal_gaussian}
To justify the choice of Gaussian as the distribution instead of other isotropic distributions, we focus on the second term in which the time derivative of $b_t=\mathbb{E}[ \phi y_t ]$ is involved. We show that if the distribution is non-Gaussian, some extra terms appear that increase the variance of the second term. As a result, the uncertainty of the sign of the second term increases, which is not desirable. Before showing this, we need to establish some properties of general and Gaussian random variables.

\paragraph{If embedding vectors $\phi$ follow Gaussian distribution (proof of stein's lemma).} 
Let $\phi \sim \mathcal N(0,\Sigma_\phi)$ with density
\begin{equation}
p(\phi)
=
\frac{1}{Z}
\exp\!\left(
-\tfrac12 \phi^\top \Sigma_\phi^{-1}\phi
\right)
\end{equation}

Then
\begin{equation}
\nabla_\phi p(\phi)
=
-\Sigma_\phi^{-1}\phi\, p(\phi)
\end{equation}

\begin{equation}
\boxed{\phi\, p(\phi)
=
-\Sigma_\phi \nabla_\phi p(\phi)}
\label{eq:phip}
\end{equation}

For any smooth function $f:\mathbb R^d \to \mathbb R$,
\begin{equation}
\mathbb E[\phi f(\phi)]
=
\int_{\mathbb R^d} \phi f(\phi)\, p(\phi)\, d\phi
\label{eq:exp}
\end{equation}
Substitute Eq. \ref{eq:phip} in Eq. \ref{eq:exp}:
\begin{equation}
\label{eq:exp2}
\mathbb E[\phi f(\phi)]
=
-\Sigma_\phi
\int_{\mathbb R^d}
f(\phi)\, \nabla_\phi p(\phi)\, d\phi 
\end{equation}
If we write the integral dimension-wise and apply integration by part:
\begin{equation}
\int f(\phi)\, \nabla_\phi p(\phi)\, d\phi
=
\sum_{j=1}^d
\int f(\phi)\,
\frac{\partial p(\phi)}{\partial \phi_j}
\, d\phi 
\end{equation}
\begin{equation}
\int f(\phi)\,
\frac{\partial p(\phi)}{\partial \phi_j}
\, d\phi
=
\Big[
f(\phi)\,p(\phi)
\Big]_{\phi_j=-\infty}^{\phi_j=\infty}
-
\int p(\phi)\,
\frac{\partial f(\phi)}{\partial \phi_j}
\, d\phi 
\end{equation}
As we have assumed multivariate Gaussian distribution, each dimension will have marginal uni variate distribution which decays exponentially fast. If we assume that $f(\phi)$ increases at most polynomially fast, we can conclude that:
\begin{equation}
\lim_{\|\phi\|_2\to\infty} f(\phi)p(\phi)=0
\end{equation}
so the boundary term vanishes:
\begin{equation}
\Big[
f(\phi)\,p(\phi)
\Big]_{\phi_j=-\infty}^{\phi_j=\infty}
\approx 0
\end{equation}

Substituting back into Eq. \ref{eq:exp2}:
\begin{equation}
\mathbb E[\phi f(\phi)]
=
\Sigma_\phi
\int_{\mathbb R^d}
p(\phi)\,\nabla_\phi f(\phi)\, d\phi
\end{equation}
Therefore,
\begin{equation}
\boxed{
\mathbb E[\phi f(\phi)]
=
\Sigma_\phi\, \mathbb E[\nabla_\phi f(\phi)]
}
\end{equation}
\paragraph{General form for an arbitrary distribution.}

Let $p(\phi)$ be any smooth density and define residual $r$ as:
\begin{equation}
r(\phi)
:=
\phi + \Sigma_\phi \nabla_\phi \log p(\phi)
\end{equation}
This residual function is equal zero if and only if the distribution is Gaussian as in that case:
\begin{equation}
r(\phi)
:=
\phi + \Sigma_\phi \Sigma_{\phi}^{-1}\phi=0
\label{eq:r_vanishes}
\end{equation}

If we multiply both sides by $f(\phi)$ and take expectation:
\begin{equation}
\mathbb E[r(\phi)f(\phi)]
= \mathbb E[\phi f(\phi)]+\Sigma_{\phi}\mathbb E[f(\phi)\nabla_\phi \log p(\phi)]
\label{eq:53}
\end{equation}
To simplify the second term, recall that:
\begin{equation}
\nabla_\phi \log p(\phi) = \frac{\nabla_\phi p(\phi)}{p(\phi)},
\end{equation}
so that
\begin{equation}
\mathbb{E}[f(\phi)\, \nabla_\phi \log p(\phi)] = \int f(\phi)\, \nabla_\phi p(\phi) \, d\phi.
\label{eq:55}
\end{equation}
By combining Eq. \ref{eq:exp2}, \ref{eq:53}, and \ref{eq:55} and reordering terms:
\begin{equation}
\label{eq:exp_phi_f}
\boxed{
\mathbb{E}[\phi f(\phi)] = \Sigma_\phi\, \mathbb{E}[\nabla_\phi f(\phi)] + \mathbb{E}[f(\phi)\, r(\phi)]
}
\end{equation}
If the distribution is Gaussian, the second term in \autoref{eq:exp_phi_f} becomes zero and we recover the stein's lemma for multivariate Gaussian distribution.
\paragraph{Effect on the second term in the derivative of Lyapanouv formula.}
Recall
\begin{equation}
\boxed{
\dot \Gamma
=
\underbrace{
-4 e^\top \Sigma_\phi e
}_{\text{contraction}}
\underbrace{
-2 e^\top \Sigma_\phi^{-1} \dot b_t
}_{\text{target non-stationarity}}
+
\underbrace{
2 e^\top \Sigma_\phi^{-1} \dot \Sigma_\phi \Sigma_\phi^{-1} b_t
}_{\text{representation drift}}
}
\end{equation}
and
\begin{equation}
b_t
=
\mathbb E[\phi\, y_t(\phi)].
\end{equation}

Our goal is to ensure $\dot \Gamma$ is always, or most of the time, negative. Since we are regularizing the embedding to have a fixed distribution, the third term will be negligible. The first term is also always negative and therefore always helpful. Here we show that a non-Gaussian distribution causes the variance of the second term to increase. As a result, the sign of this term may fluctuate frequently, causing this term to be positive and problematic.

For a general embedding distribution,
\begin{equation}
b_t
=
\Sigma_\phi\, \mathbb E[\nabla_\phi y_t(\phi)]
+
\mathbb E\!\left[
y_t(\phi)\, r(\phi)
\right]
\end{equation}

Differentiating with respect to time:
\begin{equation}
\dot b_t
=
\Sigma_\phi\, \frac{d}{dt}\mathbb E[\nabla_\phi y_t(\phi)]
+
\frac{d}{dt}
\mathbb E\!\left[
y_t(\phi)\, r(\phi)
\right]
\end{equation}

Substitute into the target non-stationarity Lyapunov term:
\begin{equation}
-2 e^\top \Sigma_\phi^{-1} \dot b_t
=
-2 e^\top \frac{d}{dt}\mathbb E[\nabla_\phi y_t(\phi)]
-
2 e^\top \Sigma_\phi^{-1}
\frac{d}{dt}
\mathbb E\!\left[
y_t(\phi)\, r(\phi)
\right]
\label{eq:61}
\end{equation}


Since we consider a linear readout for the target Q-network, the first term in Eq.~\ref{eq:61} depends only on the task and is independent of the embedding distribution due to taking the derivative with respect to $\phi$. In contrast, the second term depends on the distribution and contributes to variance when the residual term $r(\phi)$ is non-zero. In the Gaussian case, the residual term vanishes (Eq.~\ref{eq:r_vanishes}), removing this contribution and reducing the total variance. Thus, the distributional dependence in Eq.~\ref{eq:61} enters only through the residual term, which is zero if and only if the distribution is Gaussian. In the case of a non-Gaussian distribution, the residual term has a variance that makes the sign of the second term in the Lyapunov equation fluctuate and causes instability.

The variance of the residual term due to usage of a non-Gaussian distribution is proportional to:
\begin{equation}
\mathrm{Var}\!\left(
e^\top \Sigma_\phi^{-1}
\frac{d}{dt}
\mathbb E[y_t(\phi)\, r(\phi)]
\right)
\;\propto\;
\mathbb E\!\left[
\left(
e^\top \Sigma_\phi^{-1}
y_t(\phi)\, r(\phi)
\right)^2
\right]
\end{equation}

Recall that:
\begin{equation}
r(\phi)
:=
\phi + \Sigma_\phi \nabla_\phi \log p(\phi)
\end{equation}
Residual term is a non-linear function of $\phi$ due to $\nabla_\phi \log p(\phi)$. If we write the Taylor expansion of residual vector around $\bar{r}=\mathbb{E}[r]$:
\begin{equation}
\mathrm{Var}\!\left(
e^\top \Sigma_\phi^{-1}
\frac{d}{dt}
\mathbb E[y_t(\phi)\, r(\phi)]
\right)\;\propto\;\mathbb{E}\Big[
\big(e^\top \Sigma_\phi^{-1} y_t(\phi) \big)^2 
\big(
J_r(\bar{r})(\phi-\bar{r}) + \frac{1}{2} H_r(\bar{r})[\phi-\bar{r}, \phi-\bar{r}] + \cdots
\big)^2
\Big]
\end{equation}
From this formula, high-order moments of $\phi$ emerge, which result in a positive value for the variance of the residual term, which in turn increases the total variance of the second term of the derivative of the Lyapunov function. Therefore, a non-Gaussian distribution will lead to a high-variance drift term that may lead to instability. The minimum possible variance can be achieved by a Gaussian distribution. One point which should be noted is that although sub-Gaussian distributions share similar tail properties, Gaussianity provides additional structure beyond light tails. First, the Gaussian is the maximum-entropy distribution under a fixed variance constraint, encouraging efficient use of representational capacity. Second, for Gaussian distributions, all higher-order moments are determined by the covariance (via Isserlis’s theorem \citep{isserlis1918formula}), so the covariance fully characterizes the distribution. In contrast, for general sub-Gaussian distributions, higher-order structure is not determined by the covariance and may influence behavior beyond second-order statistics. Finally, the Gaussian provides a canonical and tractable target for distribution matching within our framework.

\section{Ablation Studies and Additional Results}
\label{sec:ablation_additional}
\subsection{Non-Stationary CIFAR-10 Experiments with Different Optimizers}
\label{sec:cifar10_results}

In this section, we extend the non-stationary CIFAR-10 experiments presented in \autoref{sec:cifar_shift}. We analyze the effect of different optimizers on training accuracy, feature rank, and neuron dormancy, and investigate how these quantities change after minimizing the SIGReg loss. The optimizers considered are:
\begin{itemize}
\item \textbf{Adam} \cite{kingma2017adam}: a widely used first-order optimizer.
\item \textbf{RAdam} \cite{liu2021varianceadaptivelearningrate}: the default optimizer for PQN and used in some PPO implementations \cite{castanyer2025stable}.
\item \textbf{Kronecker-factored Optimizer (Kron)}\cite{martens2015optimizing}: a second-order optimizer shown to improve stability across various deep RL algorithms \cite{castanyer2025stable}.
\end{itemize}
\begin{table}[h]
\centering
\scriptsize
\setlength{\tabcolsep}{2.5pt}
\begin{tabular}{lcccc}
\toprule
Method & Train Accuracy($\uparrow$) & SIGReg Loss($\downarrow$) & Feature Rank ($\uparrow$) & Dormant Neurons[\%]($\downarrow$) \\
\midrule
Adam  & 33.0 & 49.1 & 149.0 & 18.5\\
Adam+SIGReg & 50.0 &  7.6 & 262.9 & 0.4\\
\hline
RAdam           & 29.8 & 42.3 & 148.8 & 12.6 \\
RAdam+SIGReg    & 42.8 &  8.4 & 251.8 &  0.6 \\
\hline
Kron            & 62.6 & 28.5 & 167.4 &  7.9 \\
Kron+SIGReg     & 72.7 &  7.2 & 252.2 &  0.9 \\
\bottomrule
\end{tabular}
\caption{\textbf{Non-stationary CIFAR-10.} Area Under the Curve (AUC) for different methods across evaluation metrics. SIGReg loss minimization improve results across all metrics and optimizers.}
\label{tab:auc_results}
\end{table}

Figure~\ref{fig:cifar_10_with_sigreg} illustrates the effect of label shuffling on various metrics. When labels are shuffled, we observe: (1) significant drop in accuracy with poor recovery, (2) sharp increases in the SIGReg loss indicating loss of isotropic Gaussian structure, (3) collapse in feature rank, and (4) increased neuron dormancy. Notably, Kronecker-factored optimization implicitly reduces the SIGReg loss compared to first-order methods. However, explicitly minimizing the SIGReg objective further improves all optimizers by accelerating recovery, maintaining feature rank, and reducing dormancy.

Table~\ref{tab:auc_results} quantifies these observations through Area Under Curve measurements. Across all optimizers, adding SIGReg regularization leads to substantial improvements: training accuracy increases by 17--51\%, SIGReg loss decreases by 78--85\%, feature rank improves by 51--76\%, and neuron dormancy drops by 89--98\%. These results confirm that maintaining isotropic Gaussian structure is crucial for plasticity and recovery under non-stationary conditions.

\FloatBarrier
\begin{figure}[!t]
    \centering
    \includegraphics[width=\textwidth]{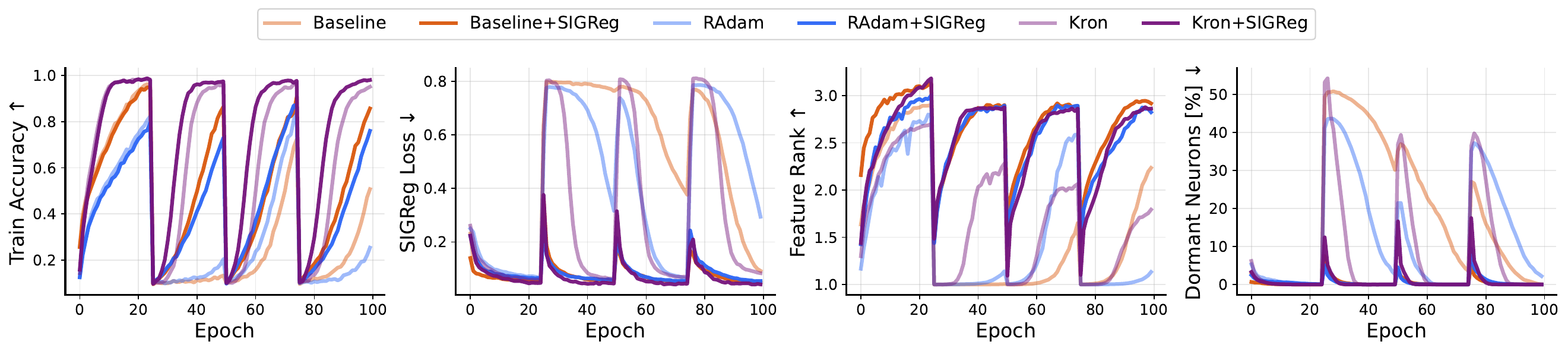}
    \vspace{-0.3cm}
    \caption{\textbf{Effect of non-stationarity and SIGReg loss minimization on representation stability and accelerating recovery in non-stationary CIFAR-10 experiment.}
Non-stationarity induced by label shuffling causes a sharp drop in accuracy, increase in SIGReg loss, collapse of feature rank, and higher neuron dormancy. It can be observed that Kronecker-factored optimizer implicitly lowers the SIGReg loss compared to first-order methods. Explicitly minimizing SIGReg further improves all optimizers by accelerating recovery, preserving feature rank, and reducing neuron dormancy.}
    \label{fig:cifar_10_with_sigreg}
\end{figure}
\FloatBarrier
\subsection{Effect of Different Target Distributions}
In this section, we expand the results discussed in \autoref{sec:ablation}. Results of this section are based on Atari-10 benchmark \cite{aitchison2023atari}. Table~\ref{tab:ablation} summarizes the impact of different representation regularization strategies on performance across Atari-10 games. We report both the fraction of games in which a method improves over the baseline and the average AUC improvement.
\subsubsection{Alternative Isotropic Distributions}
The goal of this subsection is to empirically evaluate whether Gaussianity of the target embedding distribution is an important factor for performance. To test this, we compare the Gaussian target to two alternative isotropic distributions with heavier tails, shown in the top section of \autoref{tab:ablation}. These distributions were chosen because, according to our formal analysis, heavier tails can increase instability. Across both PQN and PPO, the Gaussian target consistently outperforms the alternatives, both in terms of the number of games showing improvement and the average AUC increase across all 10 games. This indicates that matching the Gaussian distribution’s characteristics, including tail behavior, contributes positively to performance.

To further isolate the effect of Gaussianity, we also evaluate a simple covariance whitening method, reported in the bottom section of \autoref{tab:ablation}. Whitening enforces isotropy by setting the covariance matrix to the identity, but it does not control tail characteristics. While this method achieves some improvements, particularly in PPO, it underperforms compared to the full Gaussian target in PQN and is generally less consistent. These results suggest that isotropy alone provides partial benefits, while Gaussianity, including its tail properties, plays a key role in stabilizing representations and achieving optimal performance.

\subsubsection{Role of Symmetry and Tail Decay}
In this subsection, we analyze the effect of symmetry and tail characteristics of the representation separately. The SIGReg loss is based on projecting high-dimensional embedding points onto one-dimensional directions and matching the estimated characteristic function of the projections to a target distribution. Since the characteristic function is the Fourier transform of the probability distribution, the target or projected embeddings can be complex numbers. The imaginary part captures all odd-order moments of the distribution, so if it is zero, the distribution is symmetric. The real part correlates with the even-order moments, which correspond to the tail behavior.

For a Gaussian target, which is symmetric, the characteristic function is real. However, the estimated characteristic function of the embeddings may have a nonzero imaginary component. The middle section of \autoref{tab:ablation} shows the effect of minimizing each part separately. The results indicate that minimizing the real component is significantly more important than minimizing the imaginary component. In PQN, focusing on the real part even increases the number of games improved compared to minimizing both components simultaneously. The underlying reason is not fully understood, but it may result from an implicit reduction of the imaginary component when minimizing the real part.

\begin{table}[t]
\centering
\scriptsize
\setlength{\tabcolsep}{8pt}
\begin{tabular}{l*{4}{c}}
\toprule
& \multicolumn{2}{c}{\textbf{PQN}} & \multicolumn{2}{c}{\textbf{PPO}} \\
\cmidrule(lr){2-3} \cmidrule(lr){4-5}
\textbf{Method} & \textbf{\% of Games Improved ($\uparrow$)} & \textbf{Avg. Improvement ($\uparrow$)} & \textbf{\% of Games ($\uparrow$)} & \textbf{Avg. Improvement ($\uparrow$)} \\
\midrule
Gaussian (Real and Imaginary parts)    & 90\% & 305\% & 70\% & 7.9\%  \\
\midrule
Laplacian   & 90\% & 234\% & 60\% & 3.2\%  \\
Logistic    & 90\% & 209\% & 50\% & -0.8\% \\
\midrule
Real Part Only      & 100\% & 151\% & 60\% & 2.4\% \\
Imaginary Part Only     & 70\%  & 56\%  & 50\% & 1.9\% \\
\midrule
\emph{Covariance Whitening} & 90\% & 144\% & 70\% & 4.4\% \\
\bottomrule
\end{tabular}
\caption{\textbf{Ablation study on Atari-10.}
Percentage of games improved over baseline and average AUC improvement across all games. \textbf{Top:} different isotropic target distributions (Laplacian and Logistic). \textbf{Middle:} minimizing different SIGReg loss components. \textbf{Bottom:} whitening the covariance matrix without controlling tails. Gaussian with real and imaginary components performs best, while enforcing isotropy alone can improve some environments. In PQN, minimizing only the real part improves more games than the baseline Isotropic Gaussian, though with significantly smaller average improvement.
}
\label{tab:ablation}
\end{table}

\subsection{More Results on Implicit Isotropy in Stabilization Methods}
In this section, we expand on the discussion in \autoref{sec:implicit} by investigating the connection between isotropic Gaussian representations and the stabilization mechanisms introduced by \citet{castanyer2025stable}, namely Kronecker-factored optimization and multi-skip residual architectures, both designed to improve stability in deep RL at scale.  

\autoref{fig:pqn_kron_radam_vanilla} presents results for Parallelized Q-Networks (PQN) across the Atari-10 benchmark. Across nearly all games, Kronecker-factored optimization produces representations that are substantially closer to an isotropic Gaussian distribution, as indicated by lower SIGReg loss, compared to the baseline. Importantly, this effect emerges \textbf{without any explicit representation regularization}, suggesting that Kronecker-factored optimization implicitly encourages embeddings to adopt a geometry that is both Gaussian-like and approximately isotropic. Similarly, multi-skip residual architectures, originally designed to stabilize gradient propagation, also implicitly promote isotropic Gaussian embeddings, as shown in \autoref{fig:pqn_radamMS_radam_vanilla}. These effects are not limited to PQN: analogous trends are observed in Proximal Policy Optimization (PPO), as illustrated in \autoref{fig:ppo_kron_radam_vanilla} and \autoref{fig:ppo_radamMS_radam_vanilla}. Across both algorithms, the figures reveal a strong correlation between an increase in feature rank, a decrease in the percentage of dormant neurons, and an implicit reduction in SIGReg loss, highlighting that these stabilization mechanisms partially act by shaping the geometry of the learned representations.  Another interesting observation from Figures~\ref{fig:pqn_kron_radam_vanilla} to~\ref{fig:ppo_radamMS_radam_vanilla} is that, across all games, PPO exhibits a lower SIGReg loss compared to PQN. Understanding the underlying reason for this difference remains an interesting research question.

Although Kronecker-factored optimization is effective and provides substantial improvements, it introduces additional memory and computational overhead due to the estimation of gradient curvature. This raises the question of whether similar benefits could be achieved using first-order optimizers if the geometry of the representations is explicitly controlled. Based on the patterns observed in \autoref{fig:pqn_kron_radam_vanilla}, we hypothesize that a significant portion of the stability and performance gains from these methods arises from the implicit shaping of representation geometry. To test this hypothesis, we explicitly enforce isotropic Gaussian embeddings in baseline models using the auxiliary SIGReg objective. \autoref{fig:all_iqm} and \autoref{tab:stat_pqn_and_ppo} show that this approach significantly narrows the performance gap relative to models trained with Kronecker-factored optimization, across both PQN and PPO. Notably, this improvement is achieved without introducing additional memory or computational burden compared to the baseline, highlighting the importance of embedding geometry rather than optimizer complexity.  

\autoref{fig:all_iqm} presents IQM human-normalized results for different methods. The upper panel shows PQN results, where explicit SIGReg loss minimization reduces the gap between the second-order Kronecker-factored optimizer and the first-order RAdam baseline. Adding multi-skip residual connections achieves nearly the same performance while being significantly faster than Kronecker-factored optimization (see the steps per second (SPS) column in \autoref{tab:stat_pqn_and_ppo}). The bottom panel shows PPO results, indicating even more promising trends: SIGReg loss minimization on top of a first-order optimizer, with or without multi-skip connections, outperforms the second-order optimizer while remaining faster. Examining \autoref{tab:stat_pqn_and_ppo}, we see that explicit SIGReg loss minimization successfully reduces the gap between Kronecker-factored optimization and the baseline, and can even surpass it, while being around ten times faster in PQN and three times faster in PPO, without requiring a memory bank for gradient curvature estimation.

Together, these observations suggest that representation isotropy and Gaussianity are key factors underlying the empirical stability gains seen with advanced stabilization mechanisms. Explicitly regularizing embeddings to be isotropic and Gaussian can capture much of the benefit of second-order optimizers while retaining the efficiency of first-order methods.

\subsection{Comprehensive Atari Benchmark Results}
\label{sec:atari_results}

We evaluate our method on the Atari benchmark across 57 games using both PQN and PPO algorithms. This comprehensive evaluation allows us to assess the generalization of our approach across diverse environments and training paradigms. The second row of Table~\ref{tab:stat_pqn_and_ppo} summarizes the performance of SIGReg loss minimization across the full Atari suite. For PQN, applying SIGReg loss minimization improves performance in approximately 90\% of games, with an average AUC improvement of 889\%. For PPO, 68\% of games show improvement, with an average AUC gain of 25\%. This large difference between PQN and PPO can be attributed to two factors: (1) PPO exhibits a lower SIGReg loss compared to PQN in the baseline (see baseline results in \autoref{fig:pqn_kron_radam_vanilla} and \autoref{fig:ppo_kron_radam_vanilla}), and (2) PPO already achieves strong performance, leaving limited room for further improvement. For per-game improvements and IQM human-normalized results, see \autoref{fig:pqn_auc_medium}.

In the following sections, we first demonstrate the effect of SIGReg loss minimization on several evaluation metrics that are useful for assessing the stability of deep RL methods. We then present per-game episode reward plots to illustrate the learning dynamics across all games.

\begin{table*}[t]
\centering
\scriptsize
\setlength{\tabcolsep}{2pt}
\begin{tabular}{l*{6}{c}}
\toprule
& \multicolumn{3}{c}{\textbf{PQN}} & \multicolumn{3}{c}{\textbf{PPO}} \\
\cmidrule(lr){2-4} \cmidrule(lr){5-7}
\textbf{Method} & \textbf{\% of Games Improved ($\uparrow$)} & \textbf{Avg.\ Improvement ($\uparrow$)} & \textbf{Avg. SPS ($\uparrow$)} & \textbf{\% of Games Improved ($\uparrow$)} & \textbf{Avg.\ Improvement ($\uparrow$)} & \textbf{Avg. SPS ($\uparrow$)} \\
\midrule
Kron                      & 98.2\% & 1404\% & 1.6k & 45.6\% & -2.6\%  & 1.8k \\
Baseline + SIGReg            & 89.5\% & 889\%  & 9.3k & 68.4\% & 25.5\%  & 2.7k \\
Baseline + MultiSkip + SIGReg & 92.9\% & 1695\% & 11.4k & 87.7\% & 88.6\% & 3.2k \\
\bottomrule
\end{tabular}
\caption{\textbf{PQN and PPO Performance Across Full Atari Suite.}
We report the percentage of games in which each method improves over the baseline, the average percentage of AUC improvement across all games, and the average steps per second (Avg. SPS). In terms of average AUC improvement, the baseline augmented with multi-skip residual connections and SIGReg loss minimization outperforms all other methods, while also achieving higher SPS and avoiding the memory overhead associated with computing gradient curvature. In terms of the percentage of games improved, Kronecker-factored optimization achieves the best result by a small margin in PQN, and it is outperformed by both two other cases in PPO. Overall, these results indicate that the performance gains provided by Kron do not justify the additional memory and computational cost.}
\label{tab:stat_pqn_and_ppo}
\end{table*}

\begin{figure}[t]
    \centering
    \includegraphics[width=0.82\textwidth]{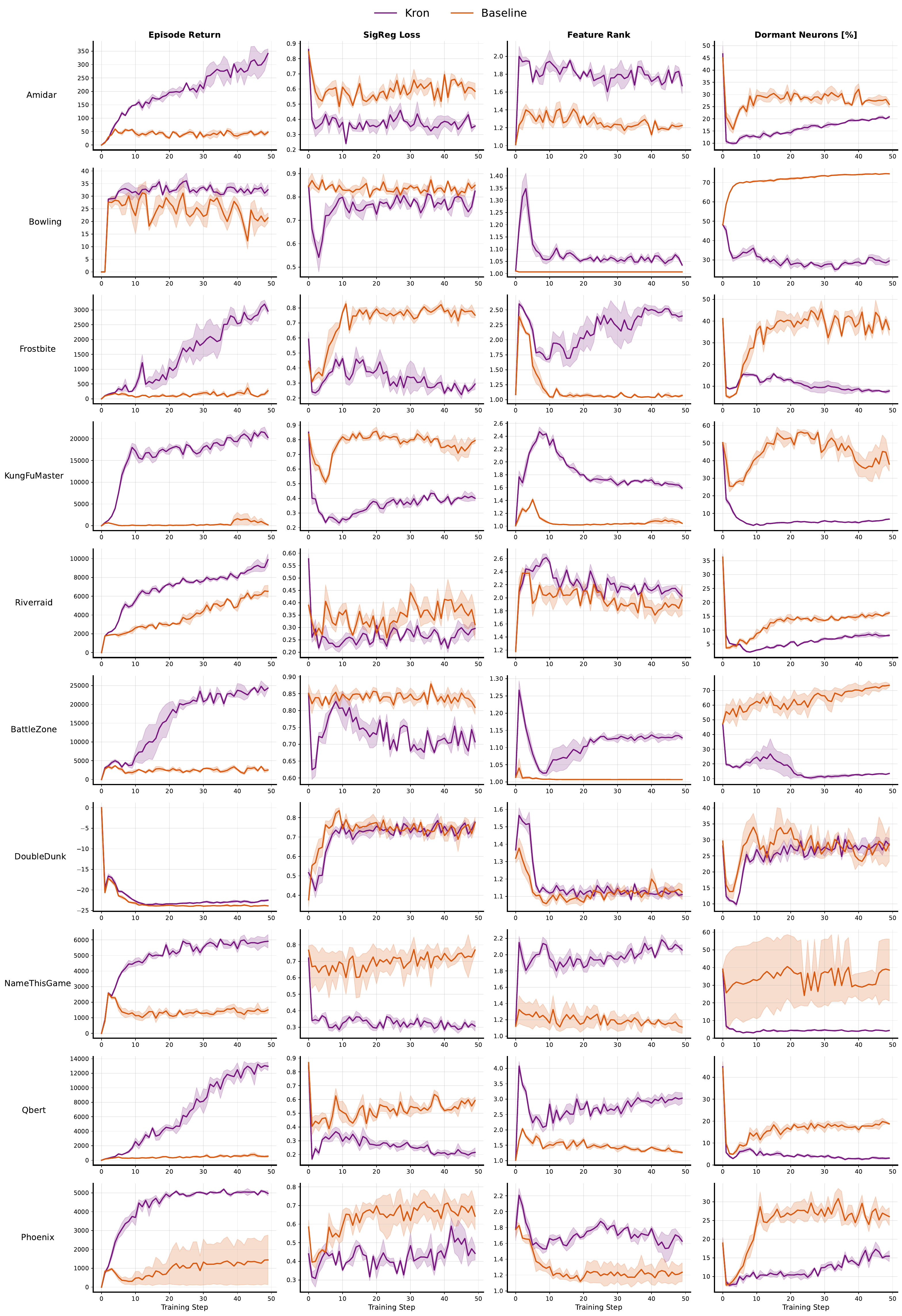}
    \caption{\textbf{Atari-10, PQN, Kron Optimizer.} The change of reward, SIGReg loss, feature rank, and percentage of dormant neurons. In almost all games, Kron leads to implicit minimization of SIGReg loss, higher rank, and lower dormancy.}
    \label{fig:pqn_kron_radam_vanilla}
\end{figure}

\begin{figure}[t]
    \centering
    \includegraphics[width=0.82\textwidth]{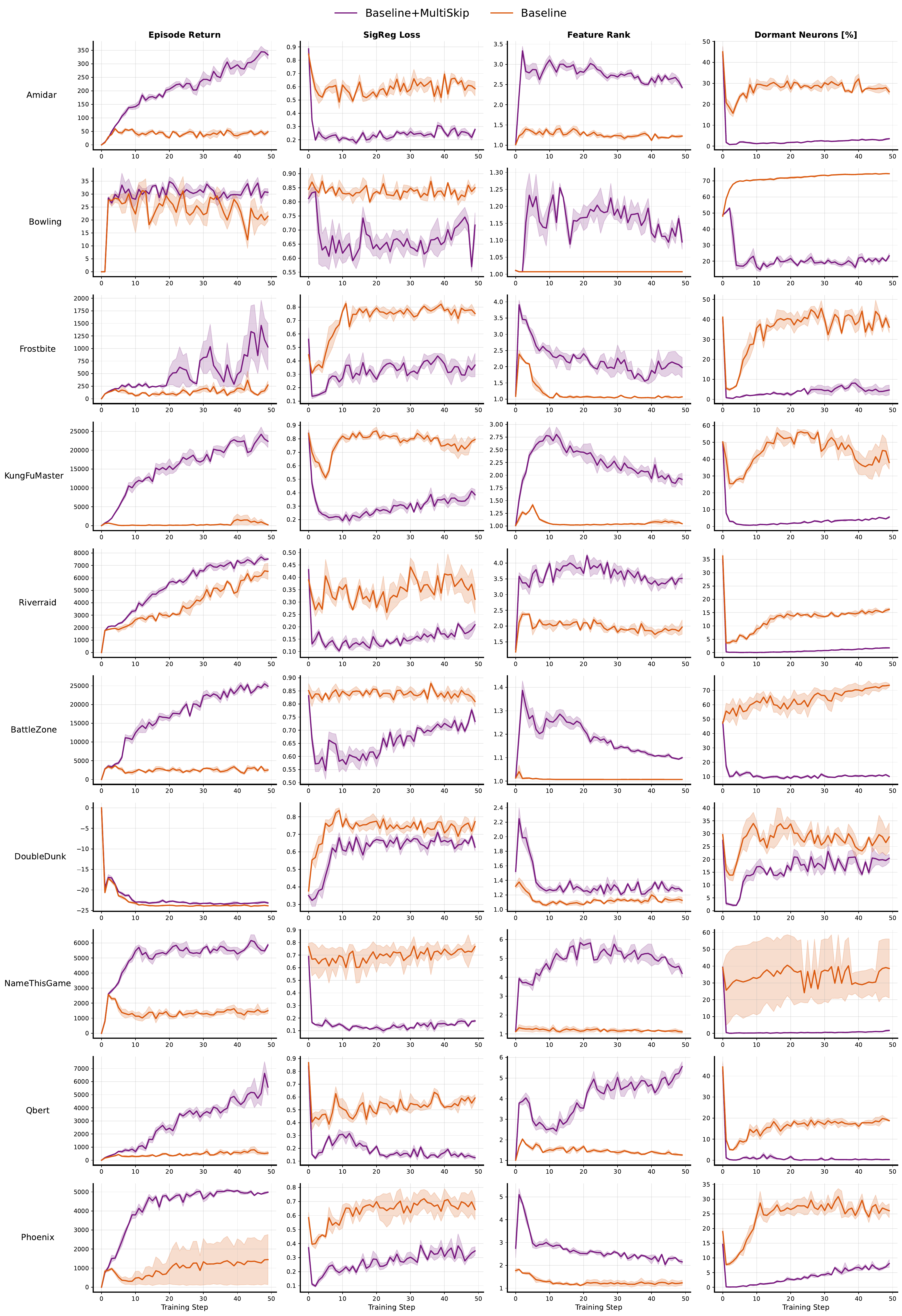}
        \caption{\textbf{Atari-10, PQN, Multi-skip Residual Architecture.} The change of reward, SIGReg loss, feature rank, and percentage of dormant neurons. In almost all games, using Multi-skip Residual Architecture leads to implicit minimization of SIGReg loss, higher rank, and lower dormancy.}
    \label{fig:pqn_radamMS_radam_vanilla}
\end{figure}

\begin{figure}[t]
    \centering
    \includegraphics[width=0.82\textwidth]{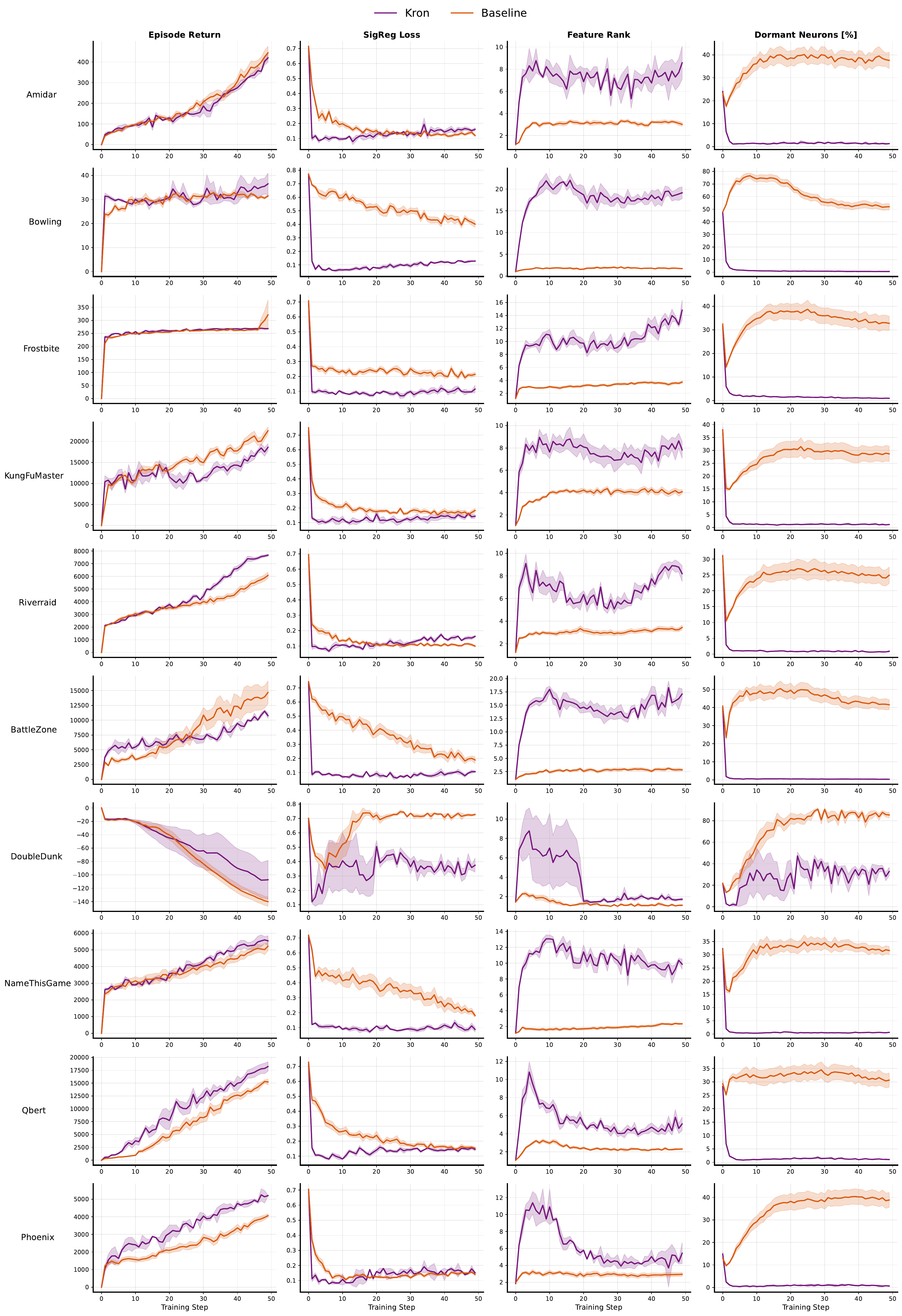}
        \caption{\textbf{Atari-10, PPO, Kron Optimizer.} The change of reward, SIGReg loss, feature rank, and percentage of dormant neurons. In almost all games, Kron leads to implicit minimization of SIGReg loss, higher rank, and lower dormancy.}
    \label{fig:ppo_kron_radam_vanilla}
\end{figure}

\begin{figure}[t]
    \centering
    \includegraphics[width=0.82\textwidth]{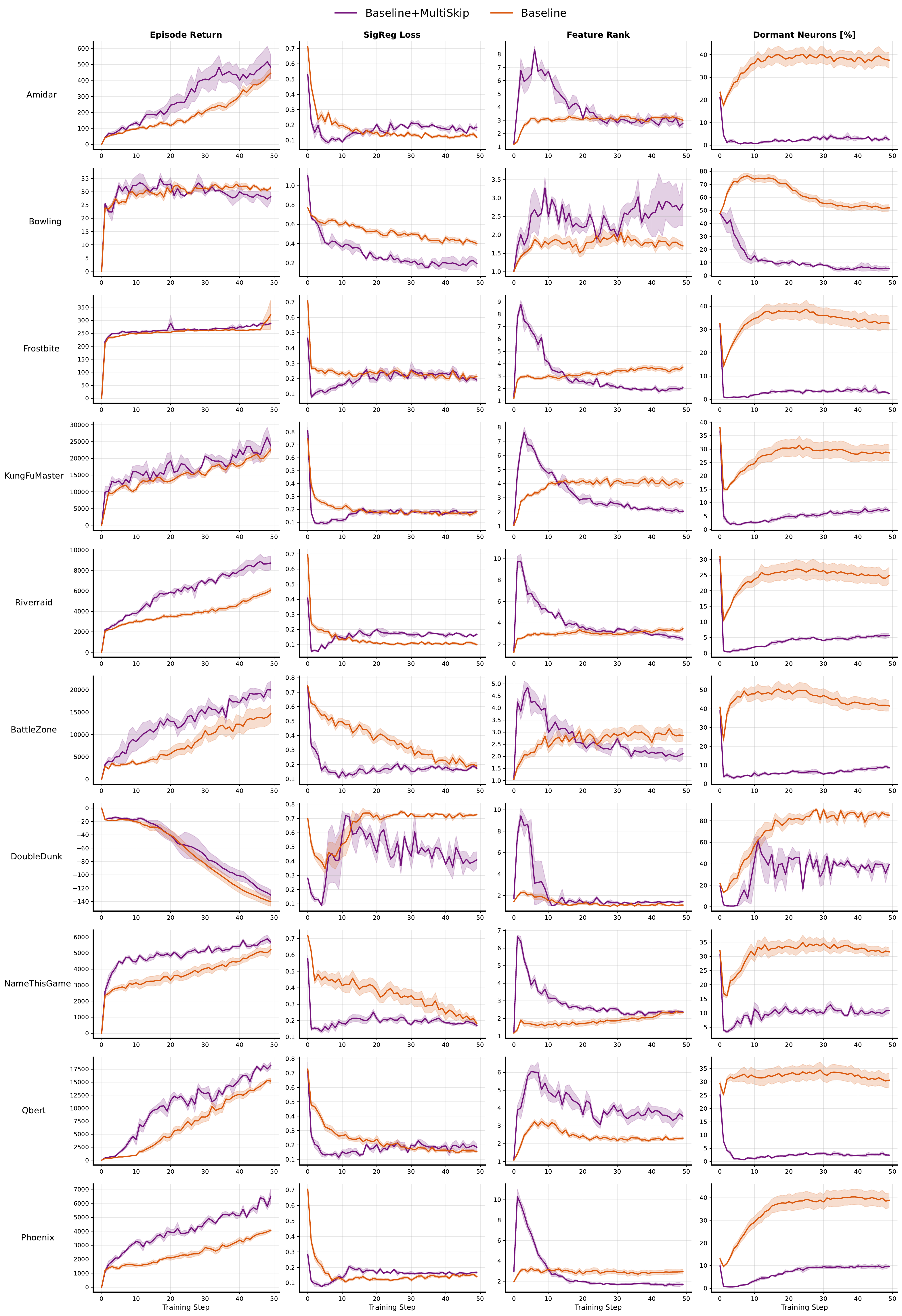}
        \caption{\textbf{Atari-10, PPO, Multi-skip Residual Architecture.} The change of reward, SIGReg loss, feature rank, and percentage of dormant neurons. In almost all games, using Multi-skip Residual Architecture leads to implicit minimization of SIGReg loss, higher rank, and lower dormancy.}
    \label{fig:ppo_radamMS_radam_vanilla}
\end{figure}

\subsubsection{Impact on Representation Quality}

Figure~\ref{fig:pqn_dormancy_rank} extends \autoref{fig:2_games_pqn} to all games in the Atari-10 benchmark and considers two regularization factors, 10 and 0.2. It shows how reward, SIGReg loss, feature rank, and the percentage of dormant neurons change over time, with and without SIGReg regularization. The orange line corresponds to the baseline without SIGReg, while the other lines show different regularization strengths, where a larger $\lambda$ means stronger regularization.

Across all games and settings, SIGReg loss minimization consistently reduces neuron dormancy, maintains a higher feature rank, and improves reward. The increase in rank is expected, since enforcing isotropy spreads variance more evenly across all principal components. However, the strong reduction in the percentage of dormant neurons is less straightforward and remains an interesting direction for future research.

\begin{figure*}[t]
    \centering
    \includegraphics[width=0.86\textwidth]{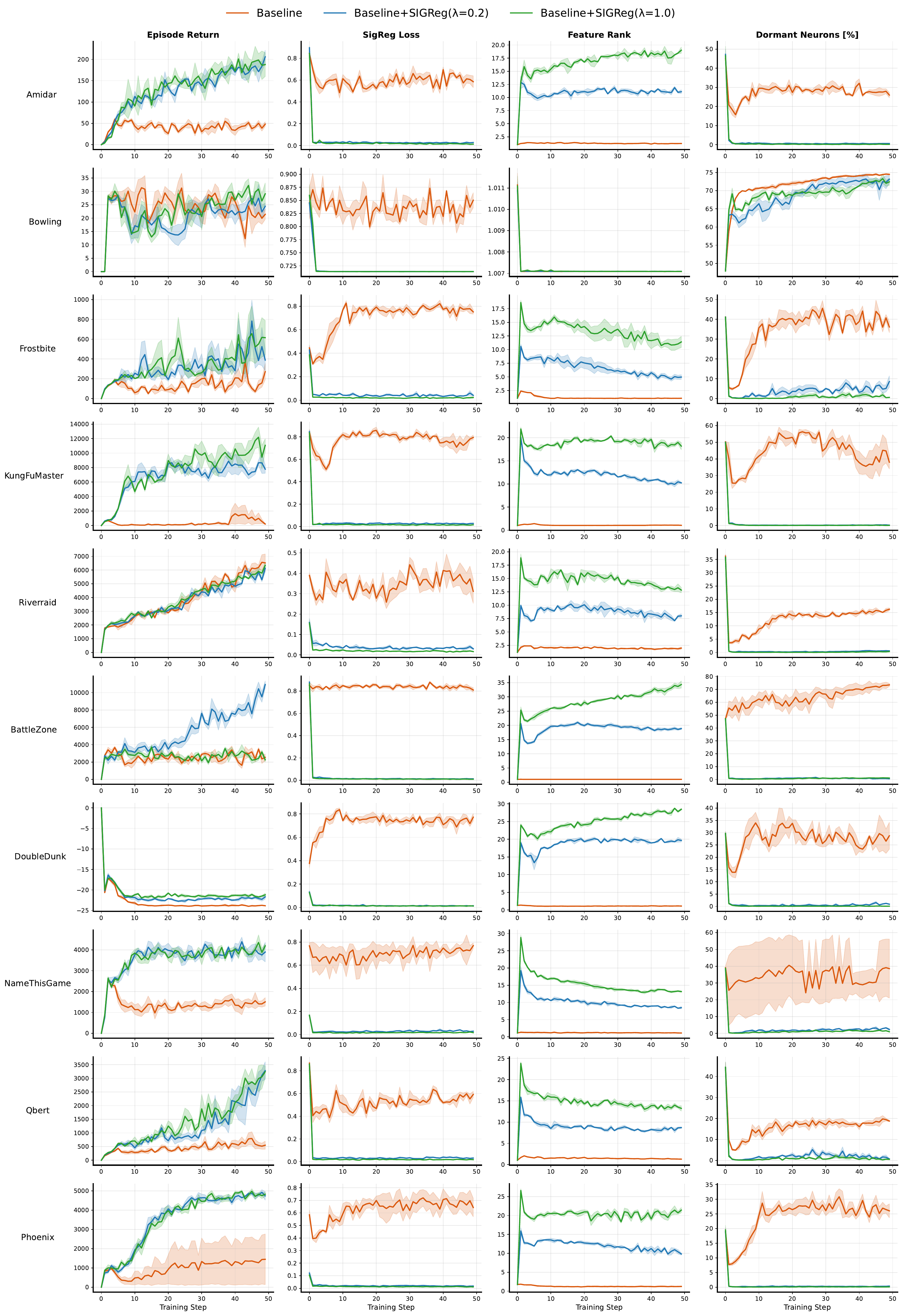}
    \caption{\textbf{Atari-10, PQN with Explicit SIGReg Loss Minimization.} The change of reward, SIGReg loss, feature rank, and percentage of dormant neurons through time. The orange line is the baseline without SIGReg loss minimization, and the other lines are SIGReg loss minimization with different strengths (Larger $\lambda$ means stronger regularization).}
\label{fig:pqn_dormancy_rank}
\end{figure*}

\subsubsection{Episode Reward for All Individual Games in Full Atari Suite}

\begin{figure*}[t]
    \centering
    \includegraphics[width=0.8\textwidth]{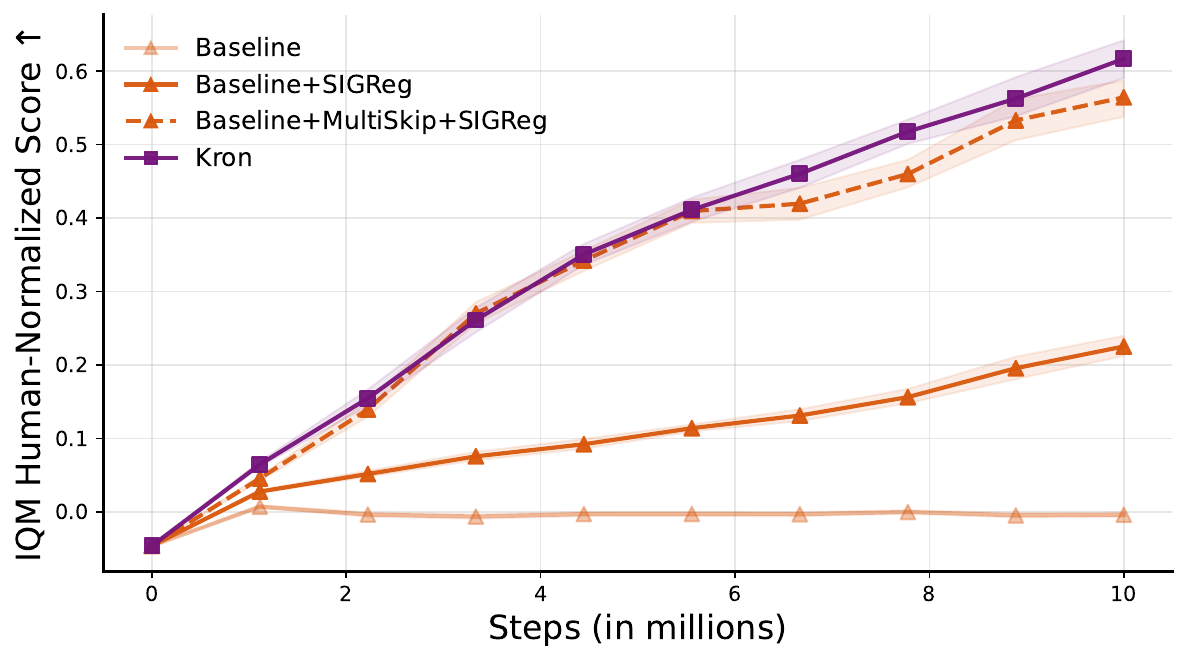}
    
    \includegraphics[width=0.8\textwidth]{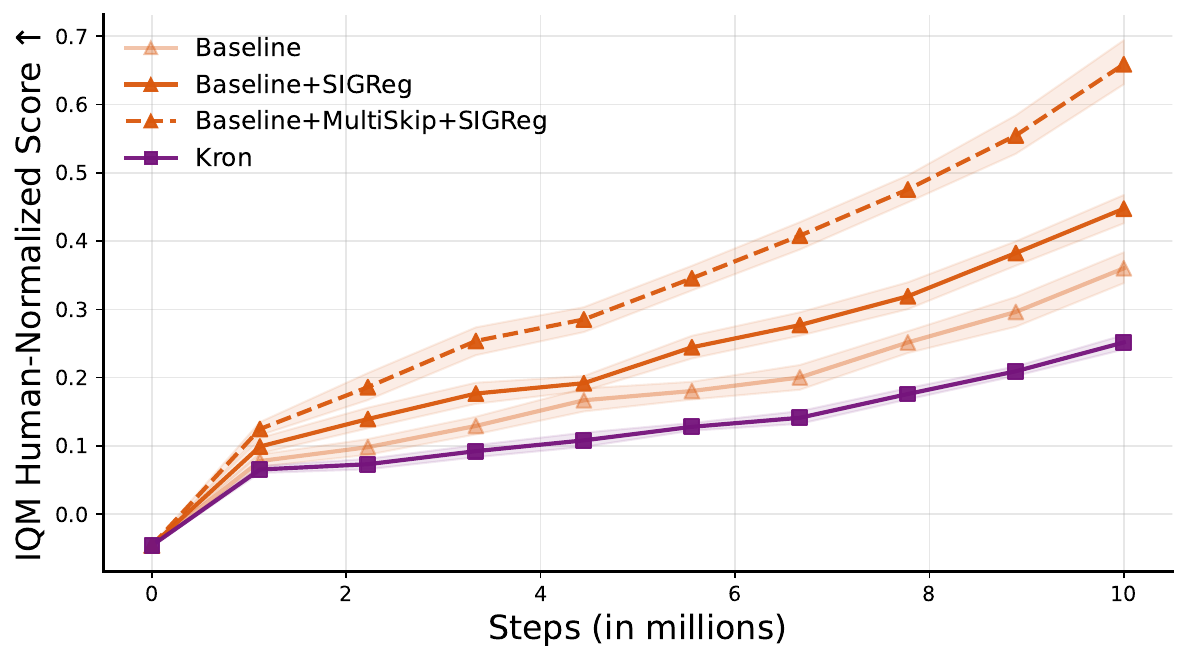}
    \caption{\textbf{IQM Human-Normalized Results for PQN (top) and PPO (bottom) Across the Atari-10 Benchmark Games.} Explicit SIGReg loss minimization encourages isotropic Gaussian embeddings, reducing the performance gap between first-order optimizers (RAdam) and second-order Kronecker-factored optimization. Adding multi-skip residual connections further improves performance.}

    \label{fig:all_iqm}
\end{figure*}

    

Figures~\ref{fig:pqn_auc_3lines} and~\ref{fig:ppo_auc_3lines} show the episode reward over time for individual games in PQN and PPO, respectively. Each plot includes three lines corresponding to different regularization factors. The line with $\lambda=0$ represents the baseline method without SIGReg loss minimization. Improvements from SIGReg loss minimization are more pronounced in PQN. One possible reason is that, in PPO, the baseline SIGReg loss is already lower than in PQN, leaving less room for improvement. Additionally, the baseline performance in PPO is significantly higher than in PQN, further limiting the potential for noticeable gains.

Besides the difference in performance gains achieved by PQN compared to PPO, the magnitude of improvement also varies across different games. To further illustrate this variance, we highlight Freeway as a representative case where SigReg degrades performance for both PQN and PPO. Freeway has sparse and weak reward signals, and the additional regularization may introduce frequent but uninformative optimization signals that interfere with learning. The task also appears to require limited representational complexity, as reflected by its low effective rank. Although SigReg increases effective rank and reduces dormancy, indicating improved representation utilization, these changes do not translate into better performance, and in PQN lead to complete collapse. This gap suggests that the inductive bias introduced by SigReg may be misaligned with the task, and its impact on optimization dynamics can outweigh representational benefits. A similar pattern is observed in Pitfall and Private Eye, which have sparse rewards, as well as Robotank and Bowling, which exhibit low visual and state complexity. In these environments, SigReg also leads to degraded PQN performance, suggesting that the regularization may be misaligned with learning dynamics when reward signals are weak or the representational demands are limited. Uncovering the underlying causes for this variance in performance remains an open problem.

\begin{figure*}[t]
    \centering
    \includegraphics[width=0.83\textwidth]{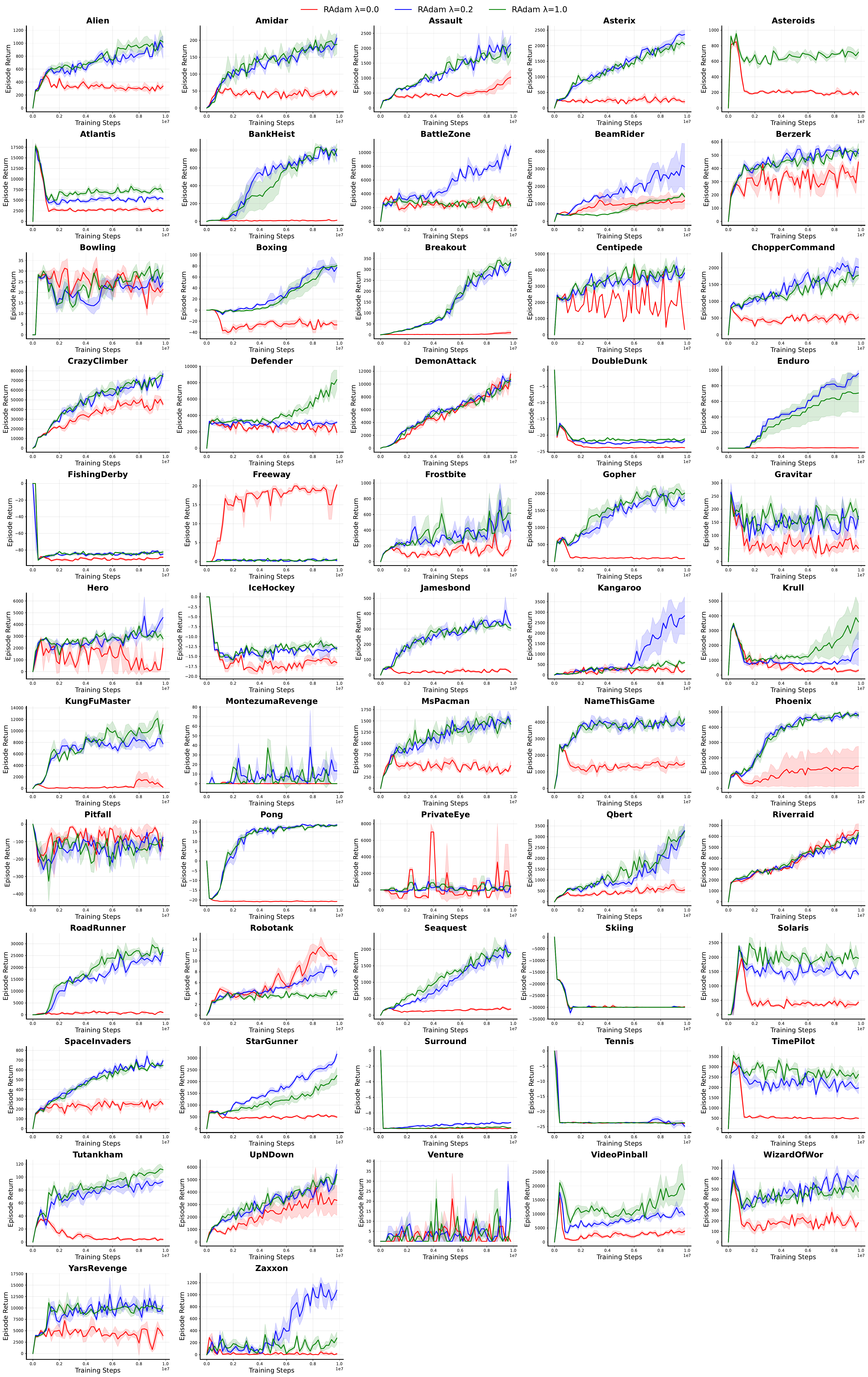}
    \caption{\textbf{PQN.} Reward over time for individual games with and without SIGReg loss minimization.}
    \label{fig:pqn_auc_3lines}
\end{figure*}

\begin{figure*}[t]
    \centering
    \includegraphics[width=0.83\textwidth]{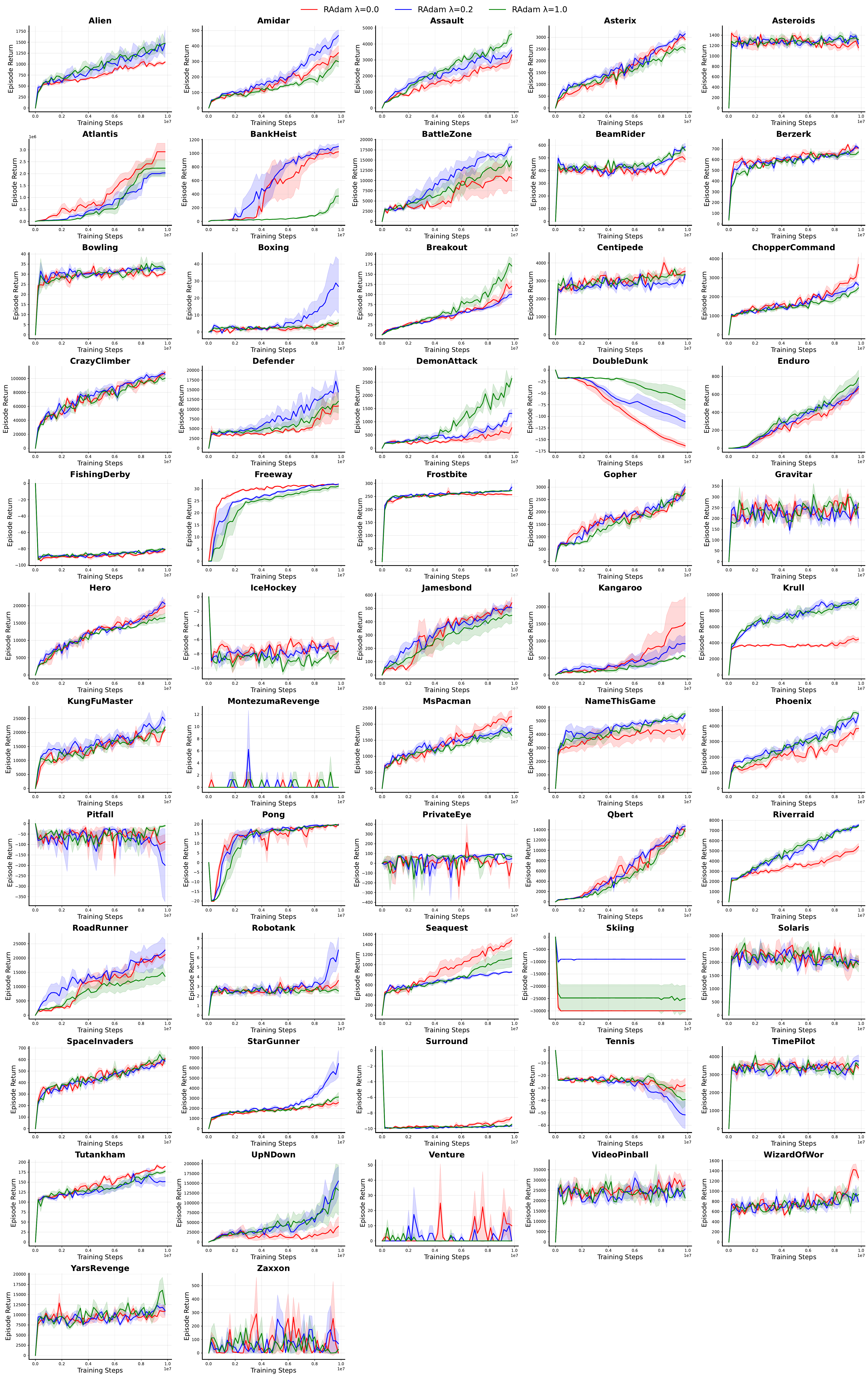}
    \caption{\textbf{PPO.} Reward over time for individual games with and without SIGReg loss minimization.}
    \label{fig:ppo_auc_3lines}
\end{figure*}



\clearpage
\section{Isaac Gym}
\label{appnx:isaacgym}

Across Isaac Gym continuous-control tasks \citep{makoviychuk2021isaac}, enforcing isotropic Gaussian representations consistently improves learning dynamics and final performance. Compared to PPO, PPO + SIGReg achieves faster early learning, more stable training trajectories, and higher asymptotic returns across environments.

\begin{figure}[!h]
    \centering
\includegraphics[width=0.45\textwidth]{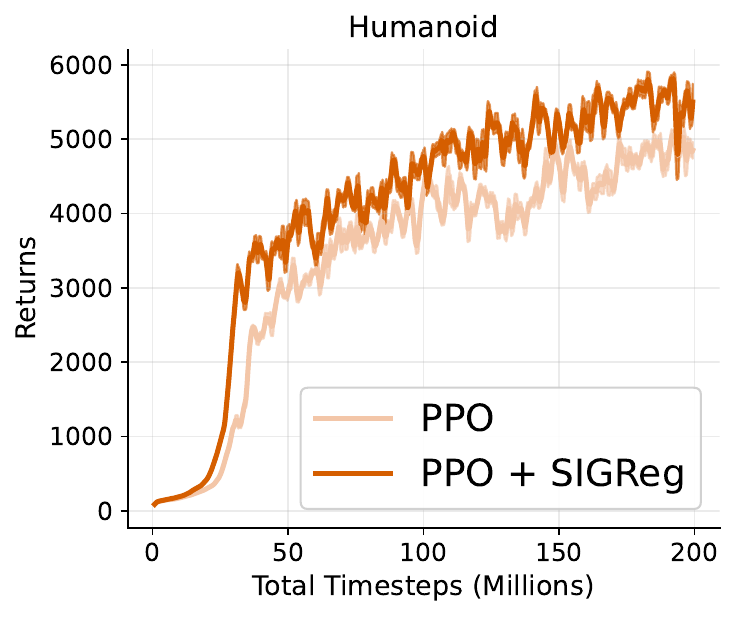}%
\includegraphics[width=0.45\textwidth]{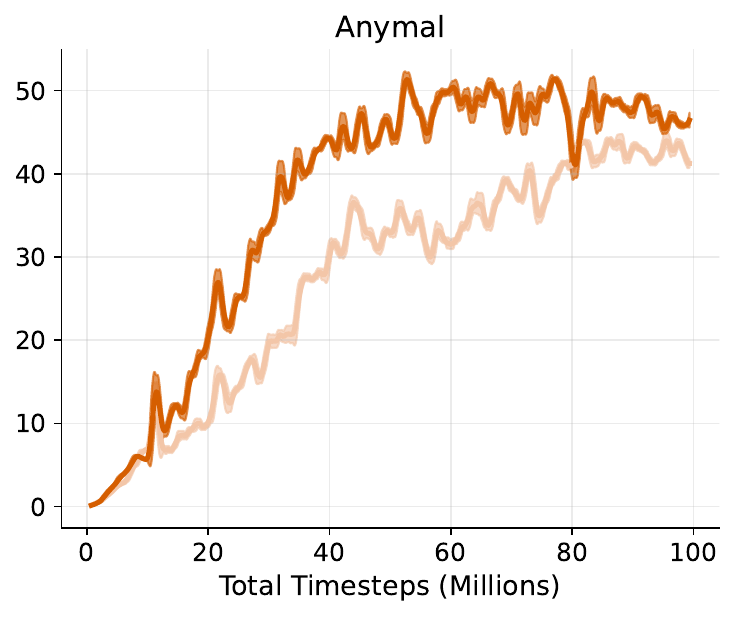}
\includegraphics[width=0.45\textwidth]{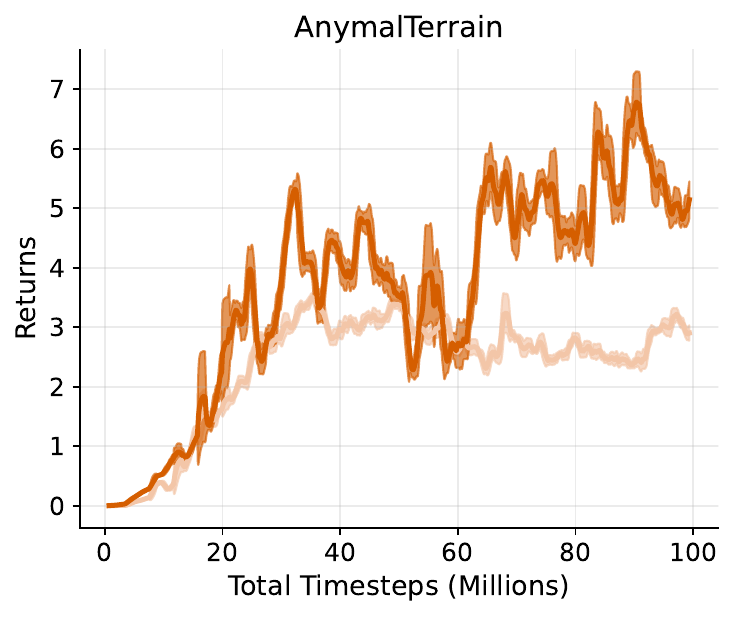}%
\includegraphics[width=0.45\textwidth]{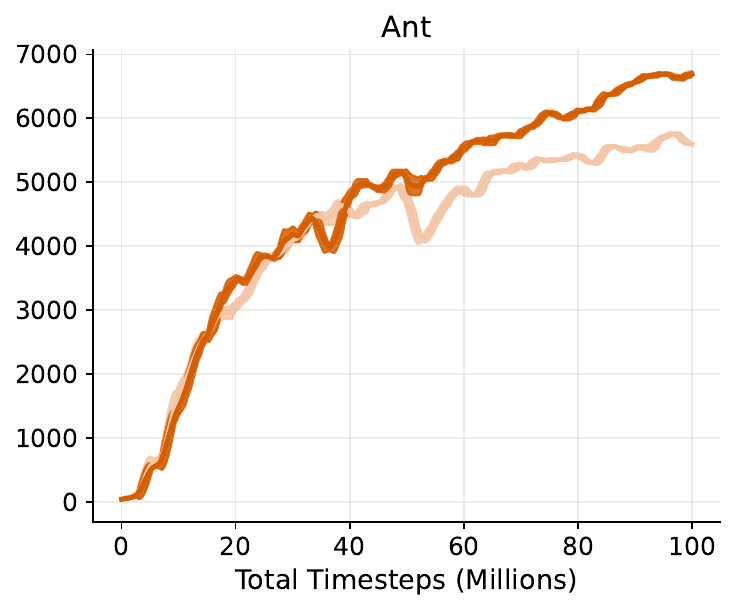}%
    \vspace{-0.4cm}
\caption{\textbf{Isaac Gym continuous control.} Learning curves for PPO and PPO + SIGReg on four representative locomotion tasks. Encouraging isotropic Gaussian representations improves learning stability and asymptotic performance across all environments. Curves show mean episode returns over five independent runs, with shaded regions indicating variability across runs.}

    \label{fig:isaac_gym_additional}
    \vspace{-0.5cm}
\end{figure}

\section{Hyperparameters}
\label{app:hyperparameters}

This section summarizes the hyperparameters used across all experiments and algorithms.
Unless stated otherwise, we follow the configurations proposed in the corresponding original works and adopt the same settings used in \citet{castanyer2025stable} for overlapping baselines.

For consistency and computational practicality, we use a single fixed set of hyperparameters for each baseline across all environments and experimental conditions. This choice isolates the effect of the proposed methods from confounding factors introduced by per-task tuning and ensures a fair comparison across algorithms. We note, however, that deep RL methods can be sensitive to hyperparameter choices \citep{ceron2024on}.
While performing an extensive hyperparameter search for each setting could potentially yield stronger absolute performance, such a procedure is computationally prohibitive at the scale considered in this work. Importantly, our conclusions focus on relative performance trends and representation behavior, which we found to be stable under reasonable variations of the default hyperparameters.

\begin{table}[h]
\centering
\caption{PQN Hyperparameters}
\label{tab:pqn_hyperparameters}
\begin{tabular}{ll}
\toprule
\textbf{Hyperparameter} & \textbf{Value / Description} \\
\midrule
Learning rate & 2.5e-4 \\
Anneal lr & False (no learning rate annealing) \\
Num envs & 128 (parallel environments) \\
Num steps & 32 (steps per rollout per environment) \\
Gamma & 0.99 (discount factor) \\
Num minibatches & 32 \\
Update epochs & 2 (policy update epochs) \\
Max grad norm & 10.0 (gradient clipping) \\
Start e & 1.0 (initial exploration rate) \\
End e & 0.005 (final exploration rate) \\
Exploration fraction & 0.10 (exploration annealing fraction) \\
Q lambda & 0.65 (Q($\lambda$) parameter) \\
Use ln & True (use layer normalization) \\
Activation fn & relu (activation function) \\
\bottomrule
\end{tabular}
\end{table}

\begin{table}[h]
\centering
\caption{PPO Hyperparameters}
\label{tab:ppo_hyperparameters}
\begin{tabular}{ll}
\toprule
\textbf{Hyperparameter} & \textbf{Value / Description} \\
\midrule
Learning rate & 2.5e-4 \\
Num envs & 8 \\
Num steps & 128 (steps per rollout per environment) \\
Anneal lr & True (learning rate annealing enabled) \\
Gamma & 0.99 (discount factor) \\
Gae lambda & 0.95 (GAE parameter) \\
Num minibatches & 4 \\
Update epochs & 4 \\
Norm adv & True (normalize advantages) \\
Clip coef & 0.1 (PPO clipping coefficient) \\
Clip vloss & True (clip value loss) \\
Ent coef & 0.01 (entropy regularization coefficient) \\
Vf coef & 0.5 (value function loss coefficient) \\
Max grad norm & 0.5 (gradient clipping threshold) \\
Use ln & False (no layer normalization) \\
Activation fn & relu (activation function) \\
Shared cnn & True (shared CNN between policy and value networks) \\
\bottomrule
\end{tabular}
\end{table}

\begin{table}[h]
\centering
\caption{PPO Hyperparameters for IsaacGym}
\label{tab:ppo_isaacgym_hyperparameters}
\begin{tabular}{ll}
\toprule
\textbf{Hyperparameter} & \textbf{Value / Description} \\
\midrule
Total timesteps & 30,000,000 \\
Learning rate & 0.0026 \\
Num envs & 4096 (parallel environments) \\
Num steps & 16 (steps per rollout) \\
Anneal lr & False (disable learning rate annealing) \\
Gamma & 0.99 (discount factor) \\
Gae lambda & 0.95 (GAE lambda) \\
Num minibatches & 2 \\
Update epochs & 4 (update epochs per PPO iteration) \\
Norm adv & True (normalize advantages) \\
Clip coef & 0.2 (policy clipping coefficient) \\
Clip vloss & False (disable value function clipping) \\
Ent coef & 0.0 (entropy coefficient) \\
Vf coef & 2.0 (value function loss coefficient) \\
Max grad norm & 1.0 (max gradient norm) \\
Use ln & False (no layer normalization) \\
Activation fn & relu (activation function) \\
\bottomrule
\end{tabular}
\end{table}

\begin{table}[h]
\centering
\caption{Image Classification Hyperparameters (CIFAR-10)}
\label{tab:cifar10_hyperparameters}
\begin{tabular}{ll}
\toprule
\textbf{Hyperparameter} & \textbf{Value} \\
\midrule
Batch size & 256 \\
Epochs & 100 \\
Learning rate & 0.00025 \\
\bottomrule
\end{tabular}
\end{table}

\begin{table}[h]
\centering
\caption{SIGReg Loss Hyperparameters.}
\label{tab:sigreg_hyperparameters}
\begin{tabular}{lc}
\toprule
\textbf{Hyperparameter} & \textbf{Value} \\
\midrule
Number of Random Projections & 16 \\
Number of Frequency Samples & 8 \\
Maximum Frequency & 5.0 \\
Regularization Factor (larger $\xrightarrow{}$ stronger) & Best value between [1.0 and 0.2] (see \autoref{tab:lambda_comparison})\\
\bottomrule
\end{tabular}
\end{table}

\begin{table}[htbp]
\centering
\small
\caption{Best regularization factor ($\lambda$) selected via AUC for PQN and PPO algorithms across Atari games (Larger means stronger regularization)}
\label{tab:lambda_comparison}
\begin{tabular}{lcc|lcc}
\hline
\textbf{Game} & \textbf{PQN} & \textbf{PPO} & \textbf{Game} & \textbf{PQN} & \textbf{PPO} \\
\hline
Alien & 1.0 & 1.0 & Kangaroo & 0.2 & 0.2 \\
Amidar & 1.0 & 0.2 & Krull & 1.0 & 0.2 \\
Assault & 1.0 & 1.0 & KungFuMaster & 1.0 & 0.2 \\
Asterix & 0.2 & 0.2 & MontezumaRevenge & 0.2 & 1.0 \\
Asteroids & 1.0 & 1.0 & MsPacman & 1.0 & 0.2 \\
Atlantis & 1.0 & 1.0 & NameThisGame & 0.2 & 1.0 \\
BankHeist & 0.2 & 0.2 & Phoenix & 0.2 & 0.2 \\
BattleZone & 0.2 & 1.0 & Pitfall & 0.2 & 1.0 \\
BeamRider & 0.2 & 1.0 & Pong & 0.2 & 0.2 \\
Berzerk & 0.2 & 0.2 & PrivateEye & 1.0 & 1.0 \\
Bowling & 1.0 & 0.2 & Qbert & 1.0 & 0.2 \\
Boxing & 0.2 & 0.2 & Riverraid & 1.0 & 0.2 \\
Breakout & 1.0 & 1.0 & RoadRunner & 1.0 & 0.2 \\
Centipede & 1.0 & 1.0 & Robotank & 0.2 & 0.2 \\
ChopperCommand & 0.2 & 0.2 & Seaquest & 1.0 & 1.0 \\
CrazyClimber & 1.0 & 0.2 & Skiing & 0.2 & 0.2 \\
Defender & 1.0 & 0.2 & Solaris & 1.0 & 1.0 \\
DemonAttack & 0.2 & 1.0 & SpaceInvaders & 0.2 & 1.0 \\
DoubleDunk & 1.0 & 1.0 & StarGunner & 0.2 & 0.2 \\
Enduro & 0.2 & 1.0 & Surround & 0.2 & 1.0 \\
FishingDerby & 1.0 & 1.0 & Tennis & 0.2 & 1.0 \\
Freeway & 1.0 & 0.2 & TimePilot & 1.0 & 1.0 \\
Frostbite & 1.0 & 0.2 & Tutankham & 1.0 & 1.0 \\
Gopher & 1.0 & 0.2 & UpNDown & 1.0 & 1.0 \\
Gravitar & 1.0 & 1.0 & Venture & 0.2 & 0.2 \\
Hero & 1.0 & 0.2 & VideoPinball & 1.0 & 1.0 \\
IceHockey & 1.0 & 0.2 & WizardOfWor & 0.2 & 0.2 \\
Jamesbond & 0.2 & 0.2 & YarsRevenge & 1.0 & 1.0 \\
 &  &  & Zaxxon & 0.2 & 0.2 \\
\hline
\end{tabular}
\end{table}
\FloatBarrier
\section{Additional Toy Experiments for Empirical Justification}
\label{sec:toy_experiments}
 
We conduct additional toy experiments to further justify the choice of isotropic Gaussian as the embedding distribution. Specifically, we consider a linear regression model trained with SGD to track a time-varying target optimal weight $w_t^*$, consistent with the linear readout assumption used in our analysis. By varying the distribution of the input data used to train the linear regression model ($\phi$), we examine the effect of the data distribution on optimization dynamics and stability. Data samples are independently and identically distributed according to either a Gaussian or Laplace distribution, with covariance matrices whose condition number ($\kappa$) vary. A condition number of 1 corresponds to the isotropic case, while larger condition numbers correspond to increasingly anisotropic distributions.
 
\subsection{The Effect of Isotropy and the Type of Distribution}
\label{subsec:isotropy_distribution}

We first study how isotropy and the type of distribution affect tracking stability in a controlled linear regression setting trained with SGD. The linear model is being trained in a setting with a time-varying optimal weight vector $w_t^*$ under different feature distributions $\phi$, allowing us to isolate the role of geometry and tail behavior in the optimization dynamics. We compare four families of distributions: Isotropic Gaussian, Anisotropic Gaussian with varying condition numbers, Isotropic Laplace, and Anisotropic Laplace. The setup is similar to the non-stationary CIFAR-10 experiments discussed in Section~\ref{sec:cifar_shift}, where the underlying target is periodically changed at fixed intervals. In this simplified setting, we directly evaluate which distributional regimes enable smoother contraction of the tracking error norm $\|w_t - w_t^*\|_2$ following each task switch, and how this behavior depends on isotropy, conditioning, and tail behavior.

\paragraph{Gaussian Distribution.} Figures~\ref{fig:norm_gauss}--\ref{fig:norm_gauss_aniso_100} summarize results for Gaussian features. The isotropic Gaussian case (Fig.~\ref{fig:norm_gauss}) serves as a baseline and exhibits stable dynamics: after each task switch, the tracking error increases sharply due to the change in $w_t^*$ and then decreases rapidly and monotonically. In contrast, anisotropic Gaussian features degrade stability. As the condition number increases from $\kappa=10$ (Fig.~\ref{fig:norm_gauss_aniso_10}) to $\kappa=100$ (Fig.~\ref{fig:norm_gauss_aniso_100}), convergence becomes slower, with longer transient phases and higher sensitivity to task transitions. This reflects the growing imbalance in spread along different feature directions, which biases SGD updates toward dominant directions.

The corresponding arrow visualizations in Figs.~\ref{fig:arrows_gaussian}--\ref{fig:arrows_gaussian_aniso_100} show the evolution of the error vector $e_t = w_t - w^*$ in parameter space through time. Each arrow represents a successive SGD update, forming the trajectory of $e_t$ over time. Direction and magnitude are encoded geometrically, while color indicates local change in error norm (red: increase, blue: decrease). This provides a direct view of transient divergence and contraction patterns, complementing the scalar norm plots.

\paragraph{Laplace Distribution.}
Figures~\ref{fig:norm_laplace}--\ref{fig:norm_laplace_aniso_100} show results for Laplace features. The isotropic Laplace case (Fig.~\ref{fig:norm_laplace}) converges to zero tracking error, but exhibits non-monotonic behavior with intermittent spikes in the norm of tracking error vector, highlighted by red dashed markers indicating steps where $\|e_{t+1}\|_2 > \|e_t\|_2$. Arrow plots in Figs.~\ref{fig:arrows_laplace}--\ref{fig:arrows_laplace_aniso_100} provide a finer-grained view of these dynamics by visualizing $e_t$ in parameter space. Each arrow corresponds to a single SGD update, with color coding identical to the Gaussian case (red for norm increase, blue for decrease). Anisotropy further amplifies instability. In the anisotropic Laplace settings (Figs.~\ref{fig:norm_laplace_aniso_10} and \ref{fig:norm_laplace_aniso_100}), we observe more frequent norm increase, delayed convergence, and stronger transient deviations than both isotropic Laplace and Gaussian cases. These effects are consistently reflected in both norm curves and trajectory-level arrow visualizations, where repeated tracking error norm increase occur before eventual contraction.

Overall, the results indicate that both anisotropy and heavy-tailed feature distributions reduce stability, further supporting the use of isotropic Gaussian features as a favorable choice for the embedding distribution in non-stationary settings. Figure~\ref{fig:fig1} provides a direct comparison between the best-performing case (isotropic Gaussian) and the worst-performing case (anisotropic Laplace), showing both error norm trajectories and arrow-based visualizations. As can be seen, the isotropic Gaussian case re-converges monotonically to zero tracking error after each task change, whereas the anisotropic Laplace case exhibits repeated increases in the error norm and fails to converge back to zero tracking error. To see whether these conclusions extend beyond 2-dimensional settings, we repeat the same experiment in higher dimensions. The results are shown in Figure~\ref{fig:kappa}, where the top panel corresponds to $d=8$ and the bottom panel to $d=32$. As can be seen, in both cases the isotropic Gaussian setting continues to exhibit monotonic re-convergence to zero tracking error, while anisotropy leads to non-convergent or unstable behavior. Similarly, Laplace distributions show frequent spikes in the tracking error norm.

\begin{figure}[!t]
    \centering
    \includegraphics[width=\textwidth]{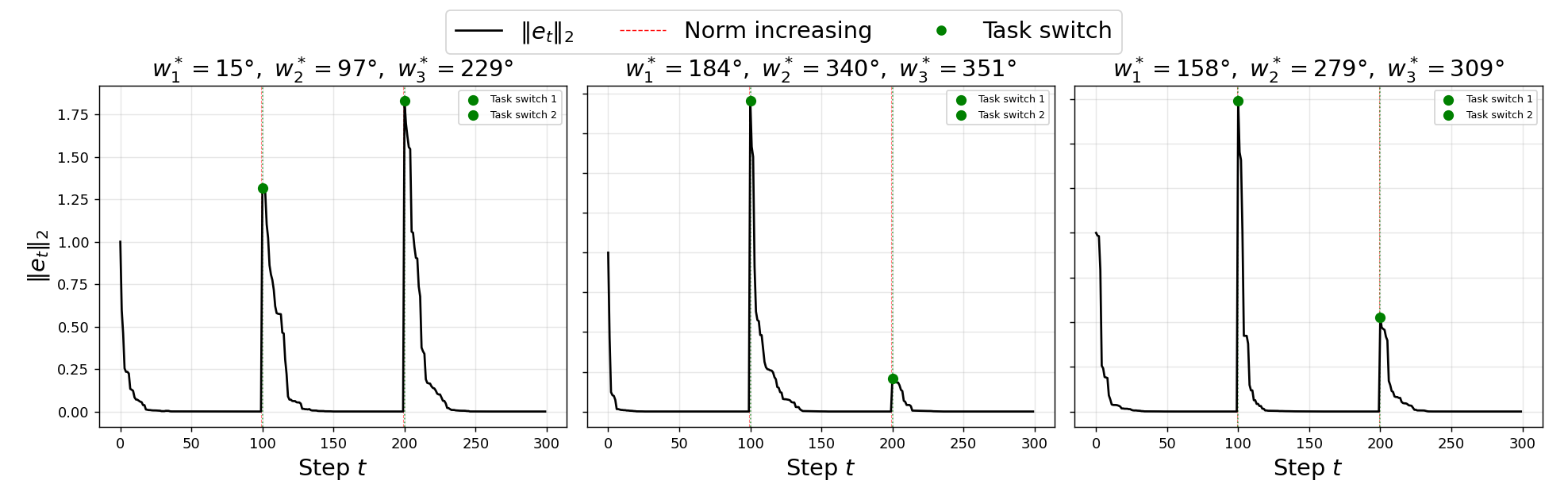}
    \vspace{-0.3cm}
    \caption{Tracking error norm $\|e_t\|_2 = \|w_t - w^*\|_2$ over all steps for \textbf{isotropic Gaussian distribution}, across three random seeds (columns). The learner tracks a sequence of three target weights, $w^*_1 \overset{t=100}{\longrightarrow} w^*_2 \overset{t=200}{\longrightarrow} w^*_3$, each drawn uniformly at random from the unit sphere. The corresponding target angles $(\theta_1, \theta_2, \theta_3)$ are reported in each column title. \textbf{Green dots} and dotted vertical lines mark the two task-switch points at $t \in \{100, 200\}$. \textbf{Dashed red vertical lines} indicate steps at which $\|e_{t+1}\|_2 > \|e_t\|_2$, highlighting transient divergence episodes. Each switch induces an abrupt spike in the tracking error norm due to the change in $w^*$, followed by reconvergence, whose rate and steady-state level depend on the geometry of the sample distribution.}
    \label{fig:norm_gauss}
\end{figure}

\begin{figure}[!t]
    \centering
    \includegraphics[width=\textwidth]{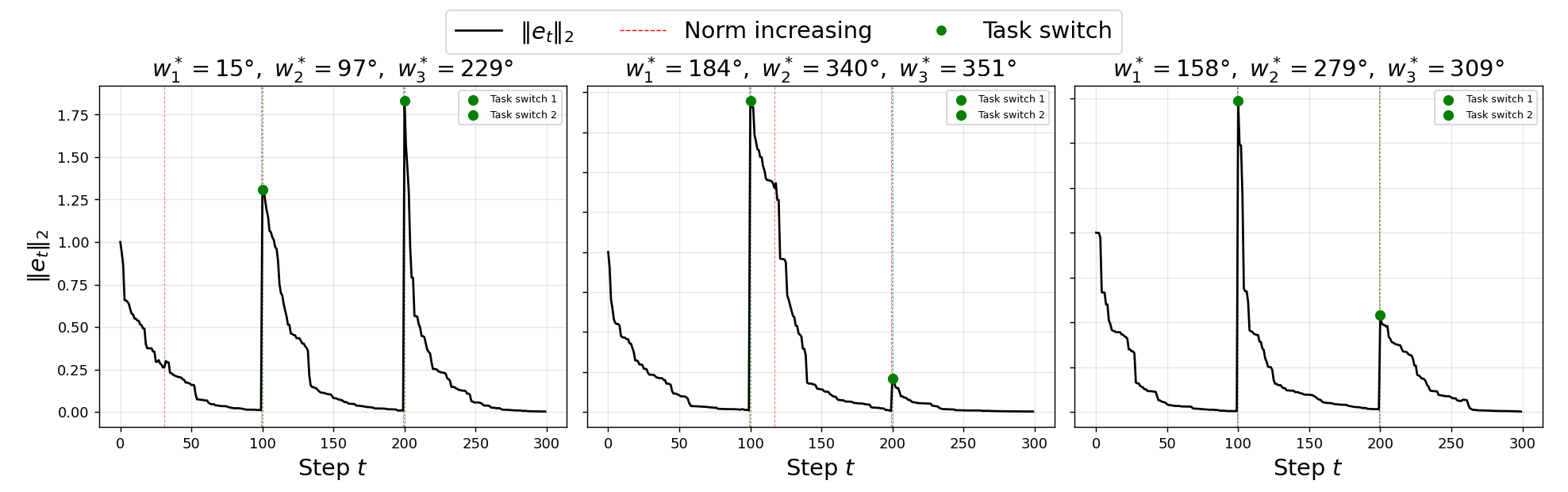}
    \vspace{-0.3cm}
    \caption{Tracking error norm $\|e_t\|_2 = \|w_t - w^*\|_2$ over all steps for \textbf{anisotropic Gaussian distribution with condition number $\kappa = 10$}, across three random seeds (columns). The learner tracks a sequence of three target weights, $w^*_1 \overset{t=100}{\longrightarrow} w^*_2 \overset{t=200}{\longrightarrow} w^*_3$, each drawn uniformly at random from the unit sphere. The corresponding target angles $(\theta_1, \theta_2, \theta_3)$ are reported in each column title. \textbf{Green dots} and dotted vertical lines mark the two task-switch points at $t \in \{100, 200\}$. \textbf{Dashed red vertical lines} indicate steps at which $\|e_{t+1}\|_2 > \|e_t\|_2$, highlighting transient divergence episodes. Each switch induces an abrupt spike in the tracking error norm due to the change in $w^*$, followed by reconvergence, whose rate and steady-state level depend on the geometry of the sample distribution.}
    \label{fig:norm_gauss_aniso_10}
\end{figure}

\begin{figure}[!t]
    \centering
    \includegraphics[width=\textwidth]{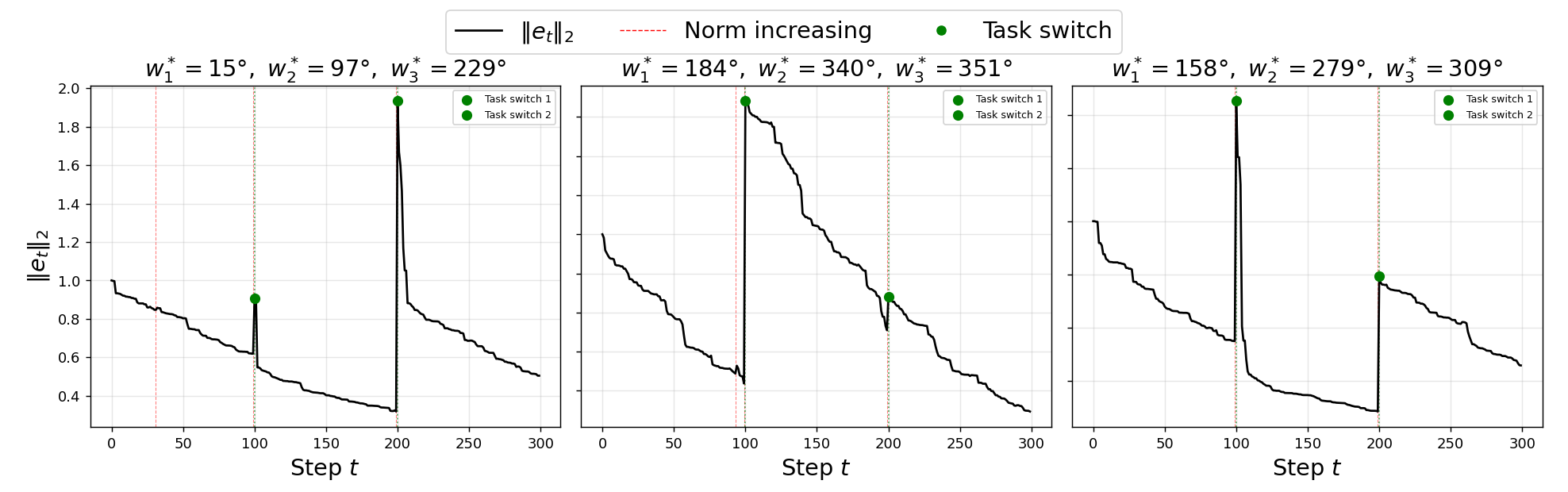}
    \vspace{-0.3cm}
    \caption{Tracking error norm $\|e_t\|_2 = \|w_t - w^*\|_2$ over all steps for \textbf{anisotropic Gaussian distribution with condition number $\kappa = 100$}, across three random seeds (columns). The learner tracks a sequence of three target weights, $w^*_1 \overset{t=100}{\longrightarrow} w^*_2 \overset{t=200}{\longrightarrow} w^*_3$, each drawn uniformly at random from the unit sphere. The corresponding target angles $(\theta_1, \theta_2, \theta_3)$ are reported in each column title. \textbf{Green dots} and dotted vertical lines mark the two task-switch points at $t \in \{100, 200\}$. \textbf{Dashed red vertical lines} indicate steps at which $\|e_{t+1}\|_2 > \|e_t\|_2$, highlighting transient divergence episodes. Each switch induces an abrupt spike in the tracking error norm due to the change in $w^*$, followed by reconvergence, whose rate and steady-state level depend on the geometry of the sample distribution.}
    \label{fig:norm_gauss_aniso_100}
\end{figure}

\begin{figure}[!t]
    \centering
    \includegraphics[width=\textwidth]{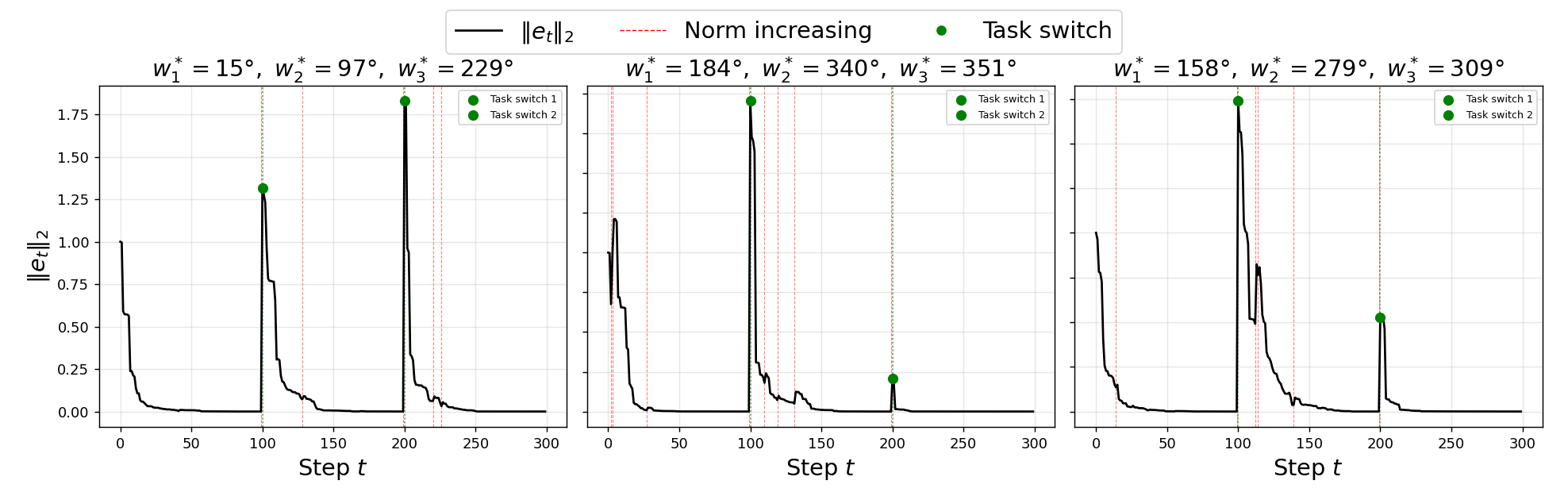}
    \vspace{-0.3cm}
    \caption{Tracking error norm $\|e_t\|_2 = \|w_t - w^*\|_2$ over all steps for \textbf{isotropic Laplace distribution}, across three random seeds (columns). The learner tracks a sequence of three target weights, $w^*_1 \overset{t=100}{\longrightarrow} w^*_2 \overset{t=200}{\longrightarrow} w^*_3$, each drawn uniformly at random from the unit sphere. The corresponding target angles $(\theta_1, \theta_2, \theta_3)$ are reported in each column title. \textbf{Green dots} and dotted vertical lines mark the two task-switch points at $t \in \{100, 200\}$. \textbf{Dashed red vertical lines} indicate steps at which $\|e_{t+1}\|_2 > \|e_t\|_2$, highlighting transient divergence episodes. Each switch induces an abrupt spike in the tracking error norm due to the change in $w^*$, followed by reconvergence, whose rate and steady-state level depend on the geometry of the sample distribution.}
    \label{fig:norm_laplace}
\end{figure}

\begin{figure}[!t]
    \centering
    \includegraphics[width=\textwidth]{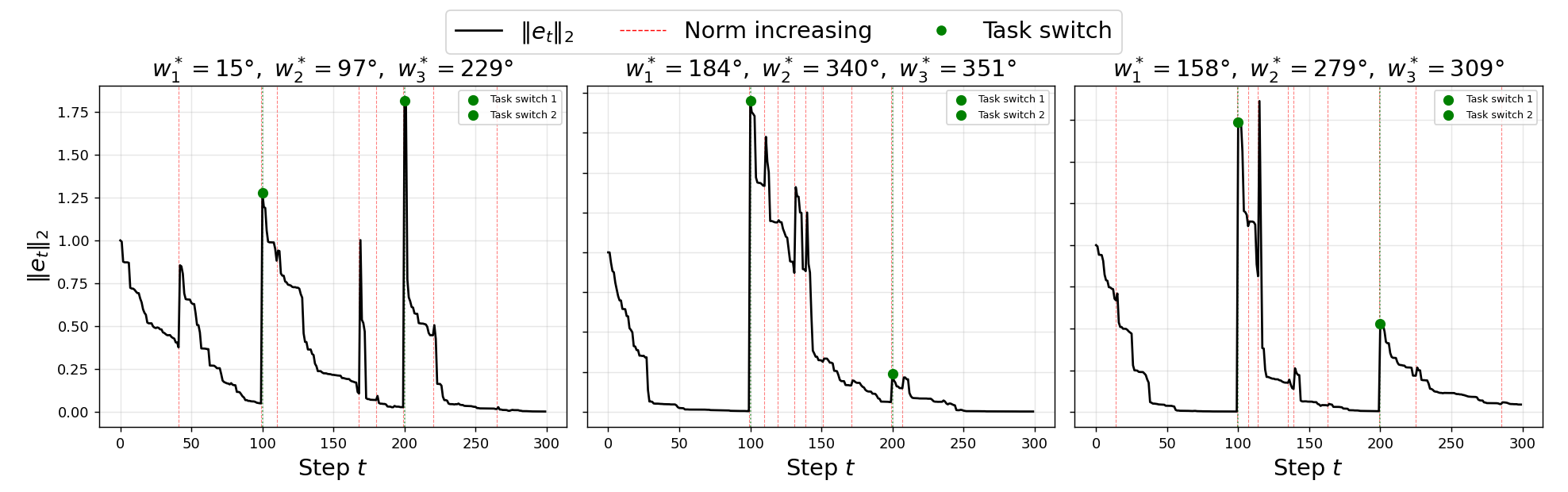}
    \vspace{-0.3cm}
    \caption{Tracking error norm $\|e_t\|_2 = \|w_t - w^*\|_2$ over all steps for \textbf{anisotropic Laplace distribution with condition number $\kappa = 10$}, across three random seeds (columns). The learner tracks a sequence of three target weights, $w^*_1 \overset{t=100}{\longrightarrow} w^*_2 \overset{t=200}{\longrightarrow} w^*_3$, each drawn uniformly at random from the unit sphere. The corresponding target angles $(\theta_1, \theta_2, \theta_3)$ are reported in each column title. \textbf{Green dots} and dotted vertical lines mark the two task-switch points at $t \in \{100, 200\}$. \textbf{Dashed red vertical lines} indicate steps at which $\|e_{t+1}\|_2 > \|e_t\|_2$, highlighting transient divergence episodes. Each switch induces an abrupt spike in the tracking error norm due to the change in $w^*$, followed by reconvergence, whose rate and steady-state level depend on the geometry of the sample distribution.}
    \label{fig:norm_laplace_aniso_10}
\end{figure}

\begin{figure}[!t]
    \centering
    \includegraphics[width=\textwidth]{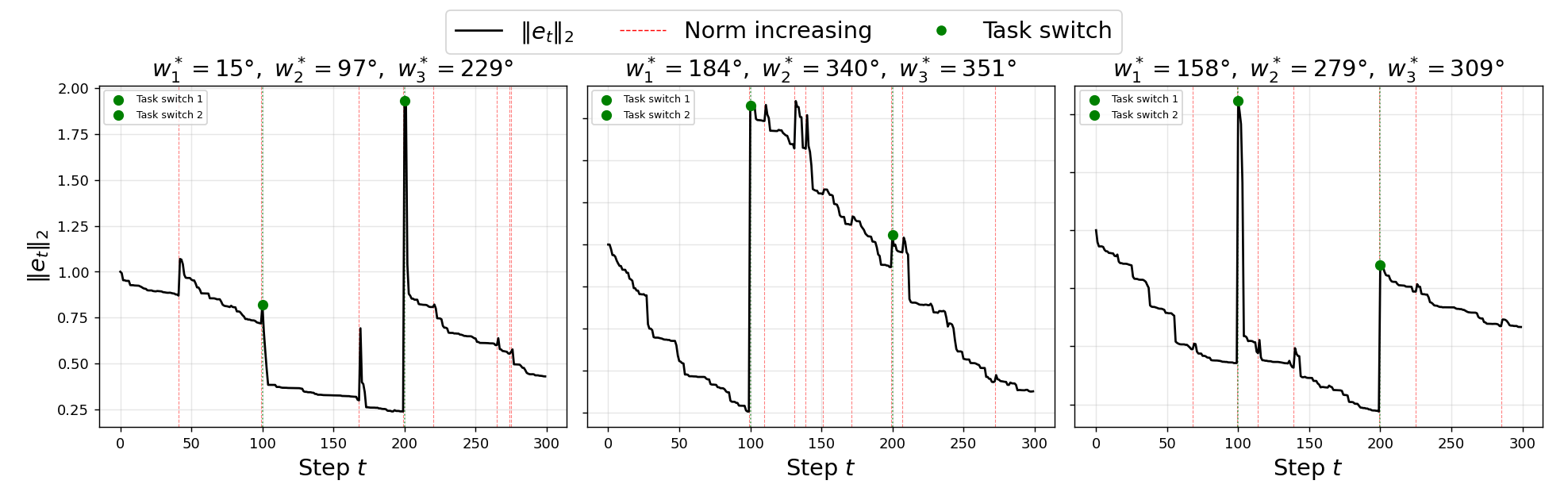}
    \vspace{-0.3cm}
    \caption{Tracking error norm $\|e_t\|_2 = \|w_t - w^*\|_2$ over all steps for \textbf{anisotropic Laplace distribution with condition number $\kappa = 100$}, across three random seeds (columns). The learner tracks a sequence of three target weights, $w^*_1 \overset{t=100}{\longrightarrow} w^*_2 \overset{t=200}{\longrightarrow} w^*_3$, each drawn uniformly at random from the unit sphere. The corresponding target angles $(\theta_1, \theta_2, \theta_3)$ are reported in each column title. \textbf{Green dots} and dotted vertical lines mark the two task-switch points at $t \in \{100, 200\}$. \textbf{Dashed red vertical lines} indicate steps at which $\|e_{t+1}\|_2 > \|e_t\|_2$, highlighting transient divergence episodes. Each switch induces an abrupt spike in the tracking error norm due to the change in $w^*$, followed by reconvergence, whose rate and steady-state level depend on the geometry of the sample distribution.}
    \label{fig:norm_laplace_aniso_100}
\end{figure}

\begin{figure}[!t]
    \centering
    \includegraphics[width=\textwidth]{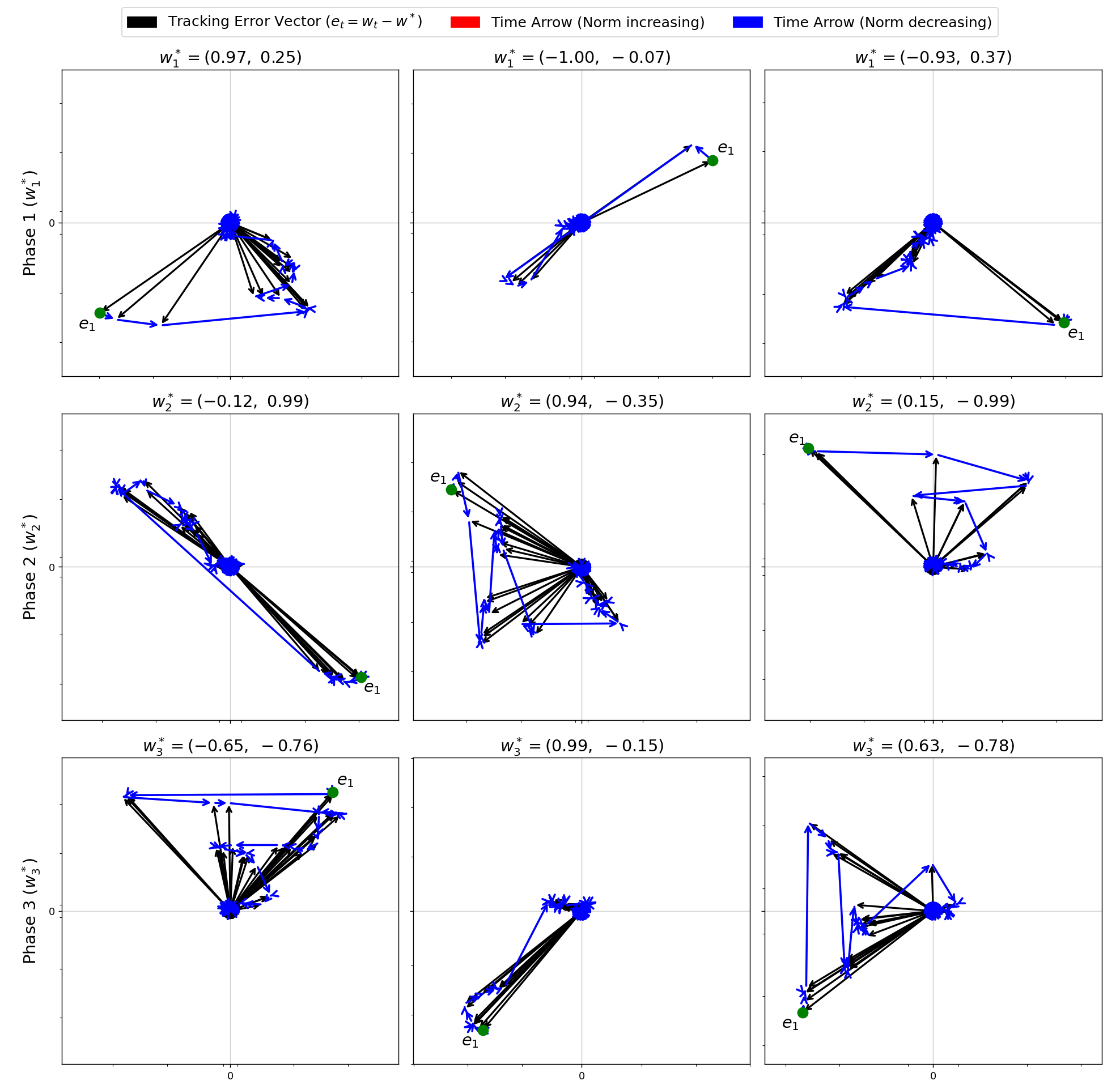}
    \vspace{-0.3cm}
    \caption{Tracking error vectors $e_t = w_t - w^*$ in the 2D weight space for \textbf{isotropic Gaussian distribution}. The learner is trained sequentially on three tasks defined by target weights $w^*_1, w^*_2, w^*_3$, each drawn uniformly at random from the unit circle, with task switches at steps $t = 100$ and $t = 200$. Each panel corresponds to one phase of training (rows) and one random seed (columns), and the title indicates the active target $w^*$ for that phase. Within each panel, \textbf{black} arrows originate at the origin and point to $e_t$, illustrating the instantaneous direction and magnitude of the tracking error. The \textbf{green} dot marks $e_1$, the first error vector immediately after each task switch. \textbf{Red} arrows connect consecutive error vectors when the error norm increases, while \textbf{blue} arrows indicate decreases in the error norm. Axes are shown on a symmetric logarithmic scale to simultaneously visualize transient spikes following task switches and steady-state residual errors.}
    \label{fig:arrows_gaussian}
\end{figure}

\begin{figure}[!t]
    \centering
    \includegraphics[width=\textwidth]{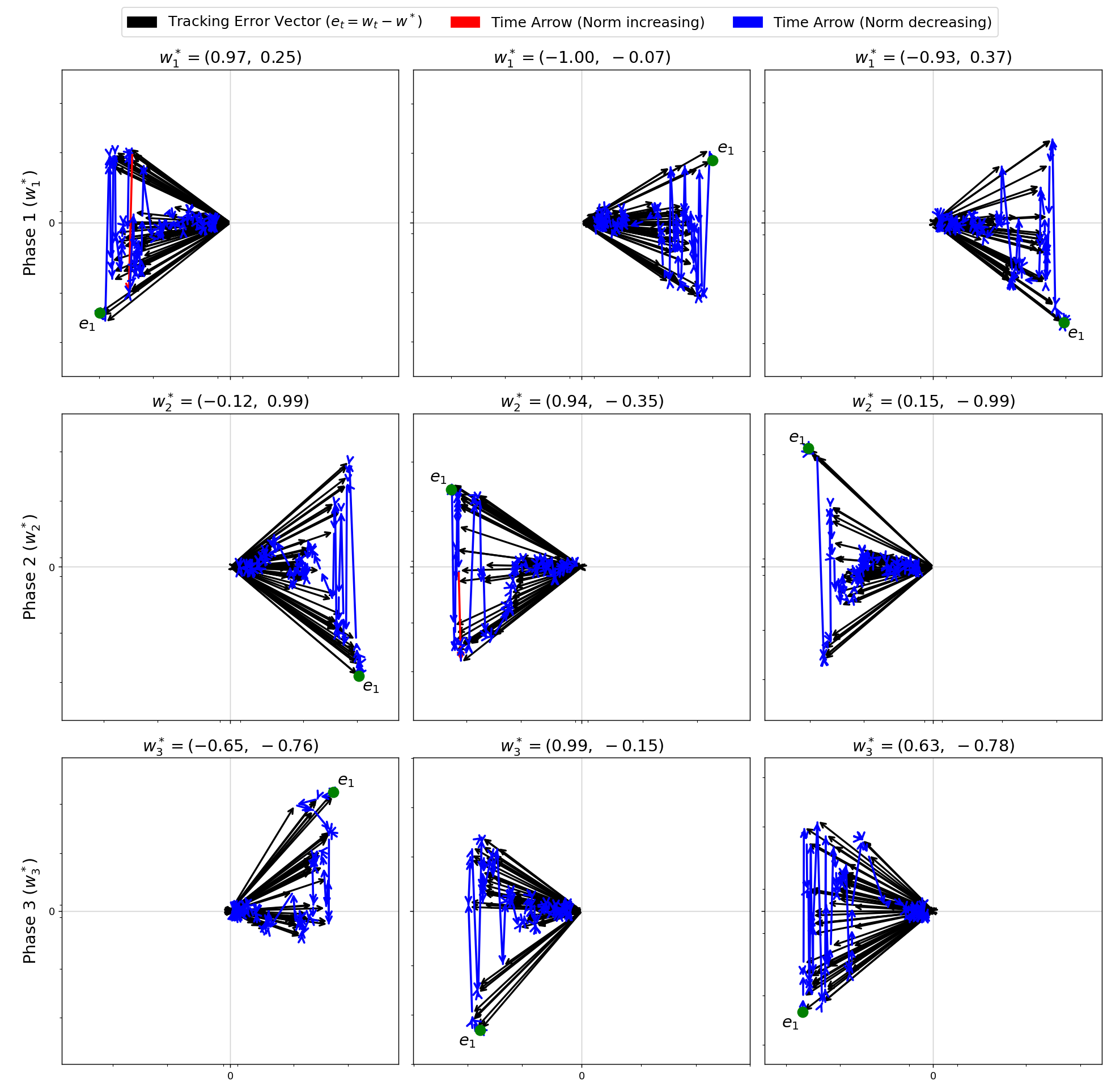}
    \vspace{-0.3cm}
    \caption{Tracking error vectors $e_t = w_t - w^*$ in the 2D weight space for \textbf{anisotropic Gaussian distribution with condition number $\kappa = 10$}. The learner is trained sequentially on three tasks defined by target weights $w^*_1, w^*_2, w^*_3$, each drawn uniformly at random from the unit circle, with task switches at steps $t = 100$ and $t = 200$. Each panel corresponds to one phase of training (rows) and one random seed (columns), and the title indicates the active target $w^*$ for that phase. Within each panel, \textbf{black} arrows originate at the origin and point to $e_t$, illustrating the instantaneous direction and magnitude of the tracking error. The \textbf{green} dot marks $e_1$, the first error vector immediately after each task switch. \textbf{Red} arrows connect consecutive error vectors when the error norm increases, while \textbf{blue} arrows indicate decreases in the error norm. Axes are shown on a symmetric logarithmic scale to simultaneously visualize transient spikes following task switches and steady-state residual errors.}
    \label{fig:arrows_gaussian_aniso_10}
\end{figure}

\begin{figure}[!t]
    \centering
    \includegraphics[width=\textwidth]{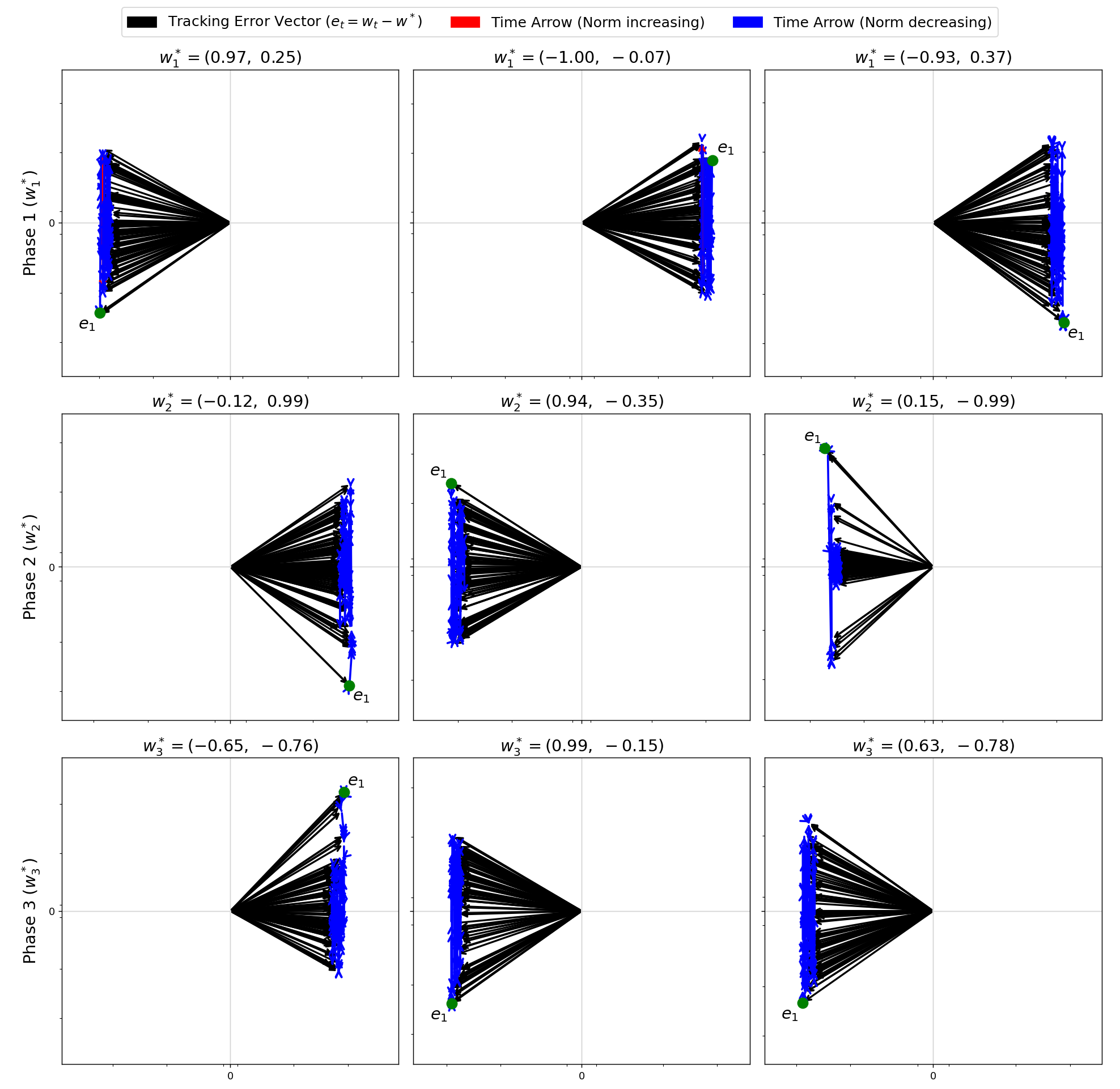}
    \vspace{-0.3cm}
    \caption{Tracking error vectors $e_t = w_t - w^*$ in the 2D weight space for \textbf{anisotropic Gaussian distribution with condition number $\kappa = 100$}. The learner is trained sequentially on three tasks defined by target weights $w^*_1, w^*_2, w^*_3$, each drawn uniformly at random from the unit circle, with task switches at steps $t = 100$ and $t = 200$. Each panel corresponds to one phase of training (rows) and one random seed (columns), and the title indicates the active target $w^*$ for that phase. Within each panel, \textbf{black} arrows originate at the origin and point to $e_t$, illustrating the instantaneous direction and magnitude of the tracking error. The \textbf{green} dot marks $e_1$, the first error vector immediately after each task switch. \textbf{Red} arrows connect consecutive error vectors when the error norm increases, while \textbf{blue} arrows indicate decreases in the error norm. Axes are shown on a symmetric logarithmic scale to simultaneously visualize transient spikes following task switches and steady-state residual errors.}
    \label{fig:arrows_gaussian_aniso_100}
\end{figure}

\begin{figure}[!t]
    \centering
    \includegraphics[width=\textwidth]{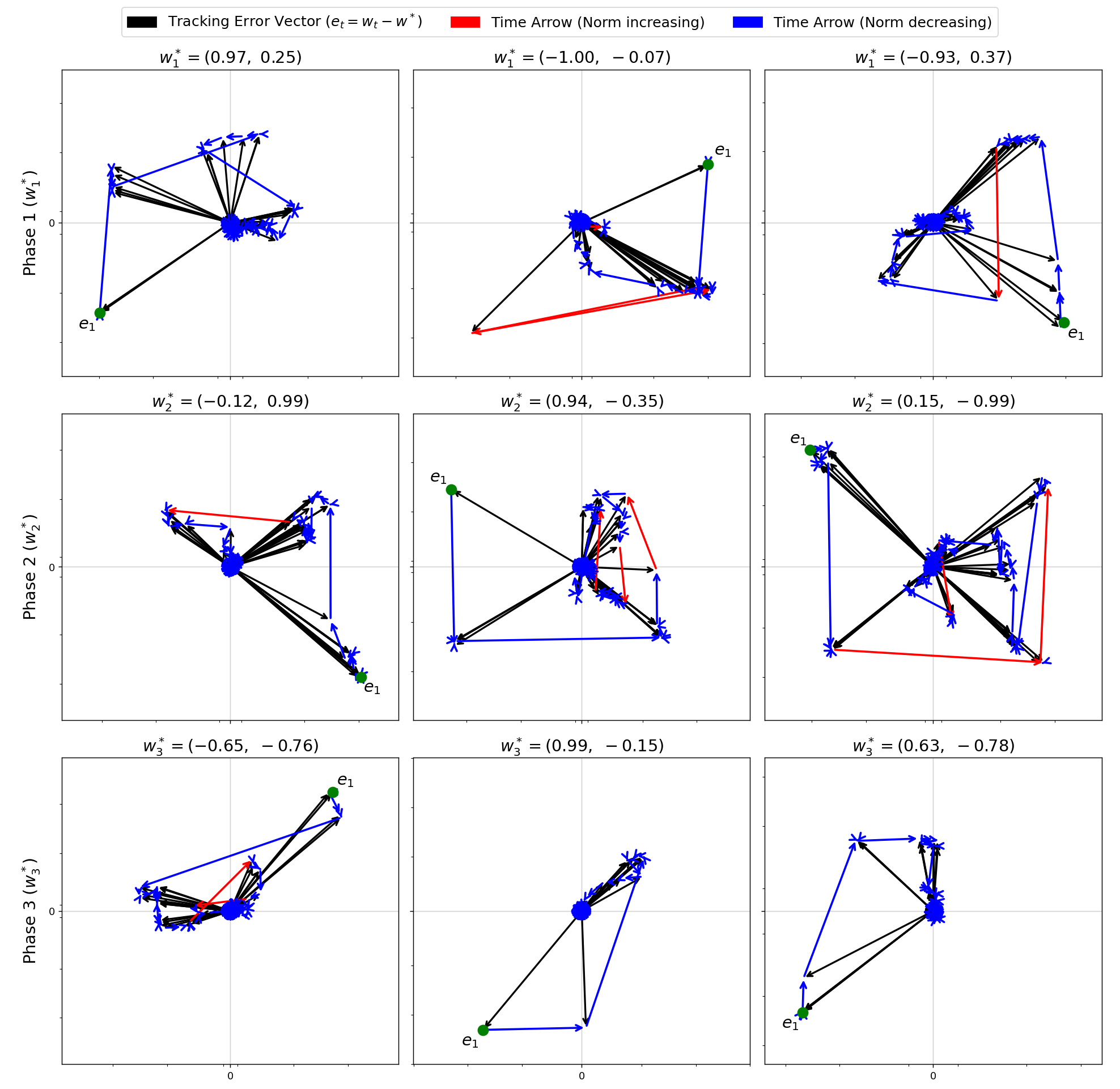}
    \vspace{-0.3cm}
    \caption{Tracking error vectors $e_t = w_t - w^*$ in the 2D weight space for \textbf{isotropic Laplace distribution}. The learner is trained sequentially on three tasks defined by target weights $w^*_1, w^*_2, w^*_3$, each drawn uniformly at random from the unit circle, with task switches at steps $t = 100$ and $t = 200$. Each panel corresponds to one phase of training (rows) and one random seed (columns), and the title indicates the active target $w^*$ for that phase. Within each panel, \textbf{black} arrows originate at the origin and point to $e_t$, illustrating the instantaneous direction and magnitude of the tracking error. The \textbf{green} dot marks $e_1$, the first error vector immediately after each task switch. \textbf{Red} arrows connect consecutive error vectors when the error norm increases, while \textbf{blue} arrows indicate decreases in the error norm. Axes are shown on a symmetric logarithmic scale to simultaneously visualize transient spikes following task switches and steady-state residual errors.}
    \label{fig:arrows_laplace}
\end{figure}

\begin{figure}[!t]
    \centering
    \includegraphics[width=\textwidth]{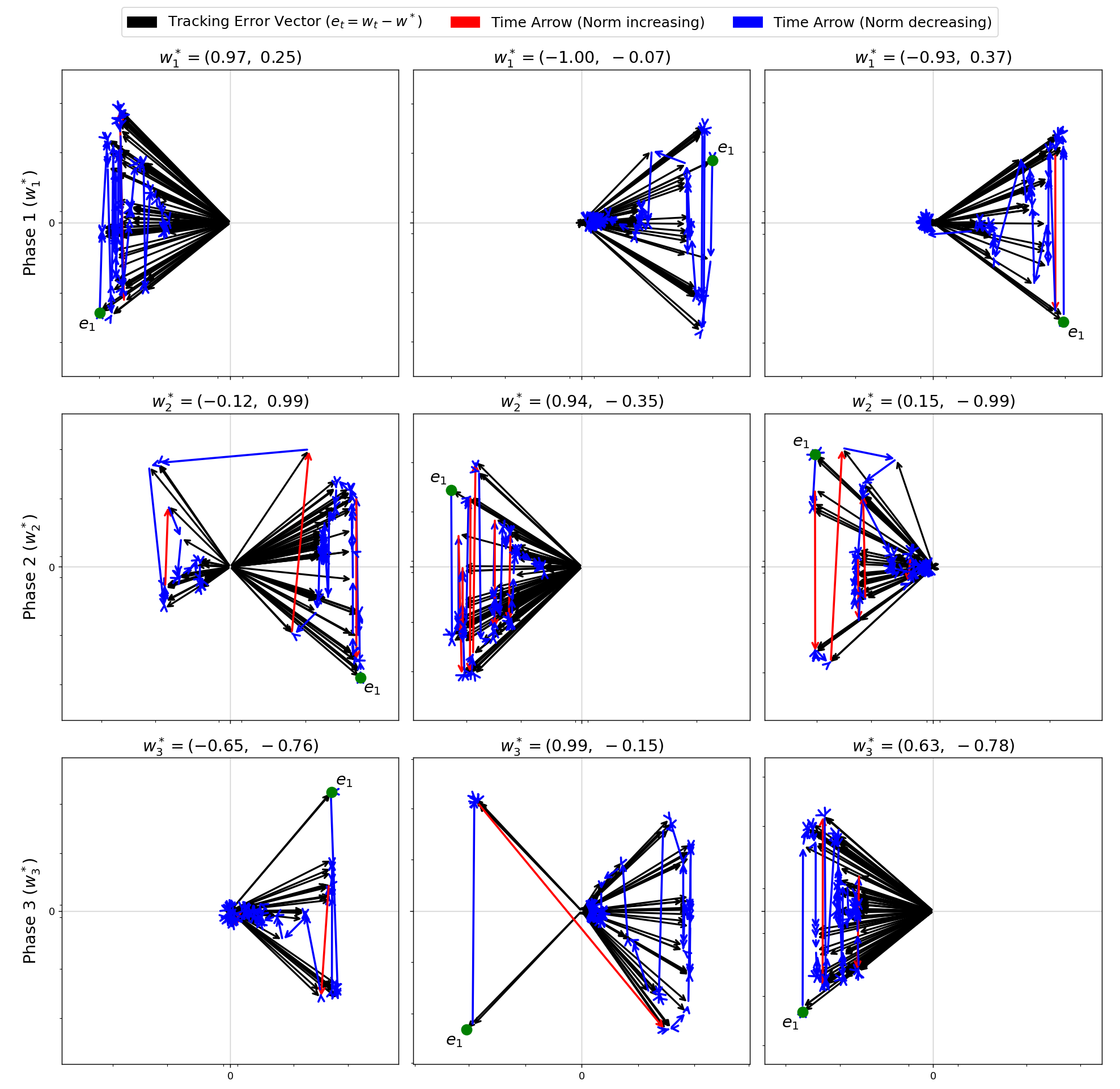}
    \vspace{-0.3cm}
    \caption{Tracking error vectors $e_t = w_t - w^*$ in the 2D weight space for \textbf{anisotropic Laplace distribution with condition number $\kappa = 10$}. The learner is trained sequentially on three tasks defined by target weights $w^*_1, w^*_2, w^*_3$, each drawn uniformly at random from the unit circle, with task switches at steps $t = 100$ and $t = 200$. Each panel corresponds to one phase of training (rows) and one random seed (columns), and the title indicates the active target $w^*$ for that phase. Within each panel, \textbf{black} arrows originate at the origin and point to $e_t$, illustrating the instantaneous direction and magnitude of the tracking error. The \textbf{green} dot marks $e_1$, the first error vector immediately after each task switch. \textbf{Red} arrows connect consecutive error vectors when the error norm increases, while \textbf{blue} arrows indicate decreases in the error norm. Axes are shown on a symmetric logarithmic scale to simultaneously visualize transient spikes following task switches and steady-state residual errors.}
    \label{fig:arrows_laplace_aniso_10}
\end{figure}

\begin{figure}[!t]
    \centering
    \includegraphics[width=\textwidth]{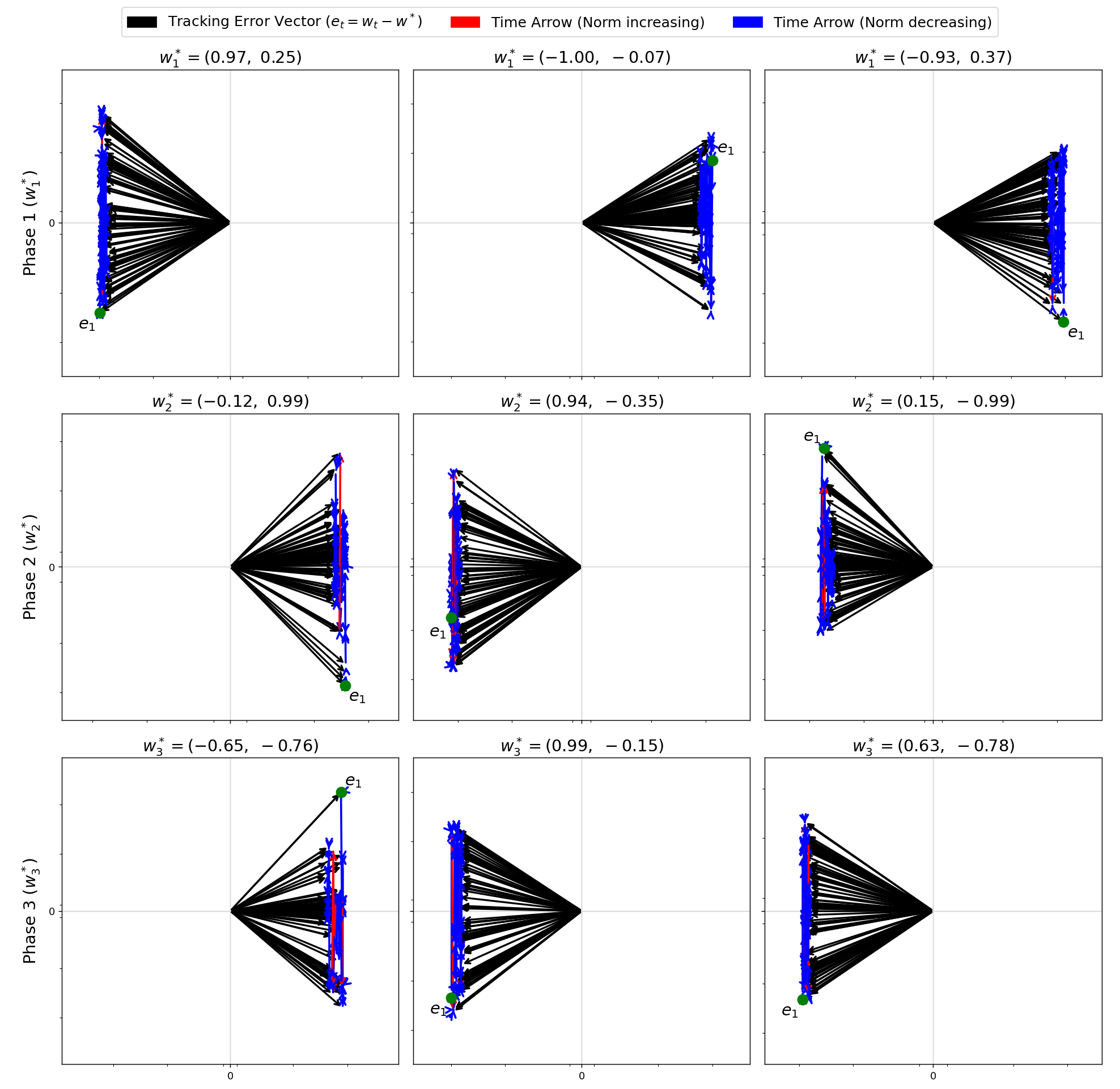}
    \vspace{-0.3cm}
    \caption{Tracking error vectors $e_t = w_t - w^*$ in the 2D weight space for \textbf{anisotropic Laplace distribution with condition number $\kappa = 100$}. The learner is trained sequentially on three tasks defined by target weights $w^*_1, w^*_2, w^*_3$, each drawn uniformly at random from the unit circle, with task switches at steps $t = 100$ and $t = 200$. Each panel corresponds to one phase of training (rows) and one random seed (columns), and the title indicates the active target $w^*$ for that phase. Within each panel, \textbf{black} arrows originate at the origin and point to $e_t$, illustrating the instantaneous direction and magnitude of the tracking error. The \textbf{green} dot marks $e_1$, the first error vector immediately after each task switch. \textbf{Red} arrows connect consecutive error vectors when the error norm increases, while \textbf{blue} arrows indicate decreases in the error norm. Axes are shown on a symmetric logarithmic scale to simultaneously visualize transient spikes following task switches and steady-state residual errors.}
    \label{fig:arrows_laplace_aniso_100}
\end{figure}

\begin{figure}[!t]
    \centering
    \includegraphics[width=\textwidth]{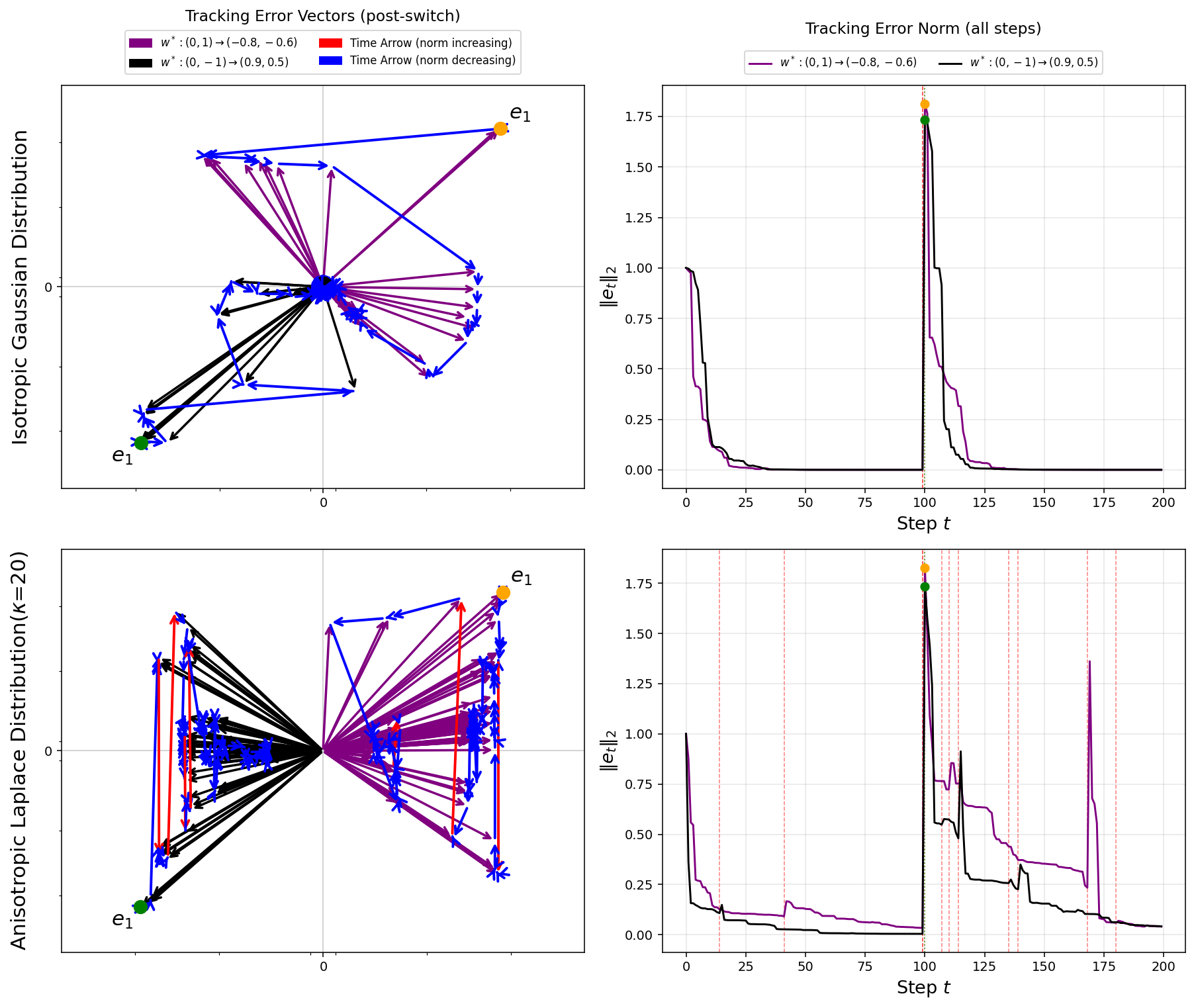}
    \caption{Tracking error dynamics under \textbf{isotropic Gaussian} and \textbf{anisotropic Laplace} feature distributions. The \textbf{left column }shows the evolution of the tracking error vector $e_t = w_t - w^*$ after the task switch, while the \textbf{right column} shows the corresponding error norm $\|e_t\|_2$ over time. The task switches at $t = T_{\mathrm{switch}}$ from $w_1^* = (0,\pm 1)$ to $w_2^* = (\cdot,\cdot)$ (shown in the legend), with two scenarios overlaid in each panel. \textbf{Arrow plots (left).} Each arrow from the origin represents $e_t$ at a given step; purple and black arrows correspond to the two configurations. Colored arrows between successive tips indicate temporal evolution: \textcolor{red}{red} for an increase in $\|e_t\|_2$, \textcolor{blue}{blue} for a decrease. The first post-switch error $e_1$ is highlighted by a color-coded dot. Axes use a symmetric log scale to capture small and large deviations. \textbf{Norm plots (right).} Solid curves show $\|e_t\|_2$ for each scenario; vertical dashed red lines mark norm increases and dotted lines indicate the task switch, with markers highlighting the error at switching. \textbf{Initialization.} $w_1^* = (0,\pm 1)$ aligns with the principal direction of the feature distribution (largest eigenvalue of covariance), which in the anisotropic case is the dominant eigenvector of $\Sigma$. \textbf{Anisotropic distribution.} For anisotropic Laplace, features are sampled independently with coordinate scales to match a diagonal covariance matrix with eigenvalues $(\frac{2}{1+\kappa}, \frac{2\kappa}{1+\kappa})$ (trace 2, condition number $\kappa$), ensuring both distributions have the same total variance. \textbf{Observation.} While isotropic Gaussian features show near-monotone error contraction, anisotropic Laplace features lead to frequent increases and non-monotone trajectories that plateau before reaching zero, highlighting the impact of anisotropy and non-Gaussianity on tracking stability.}
    \vspace{-0.3cm}
    
    \label{fig:fig1}
\end{figure}

\begin{figure*}[t]
    \centering
    
    \includegraphics[width=\linewidth]{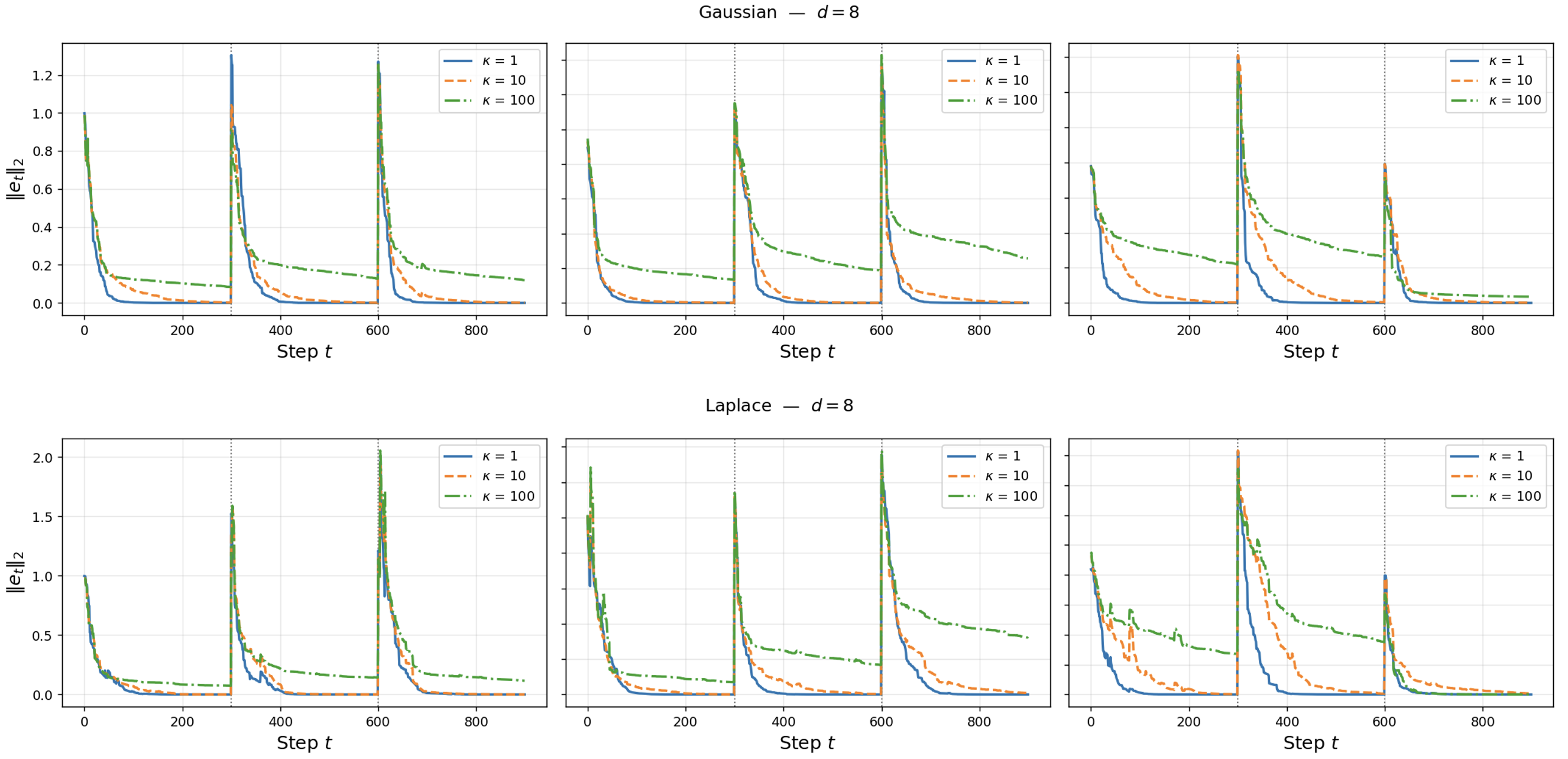}
    
    \vspace{0.2cm}
    \rule{\linewidth}{0.4pt}  
    \vspace{0.2cm}
    
    \includegraphics[width=\linewidth]{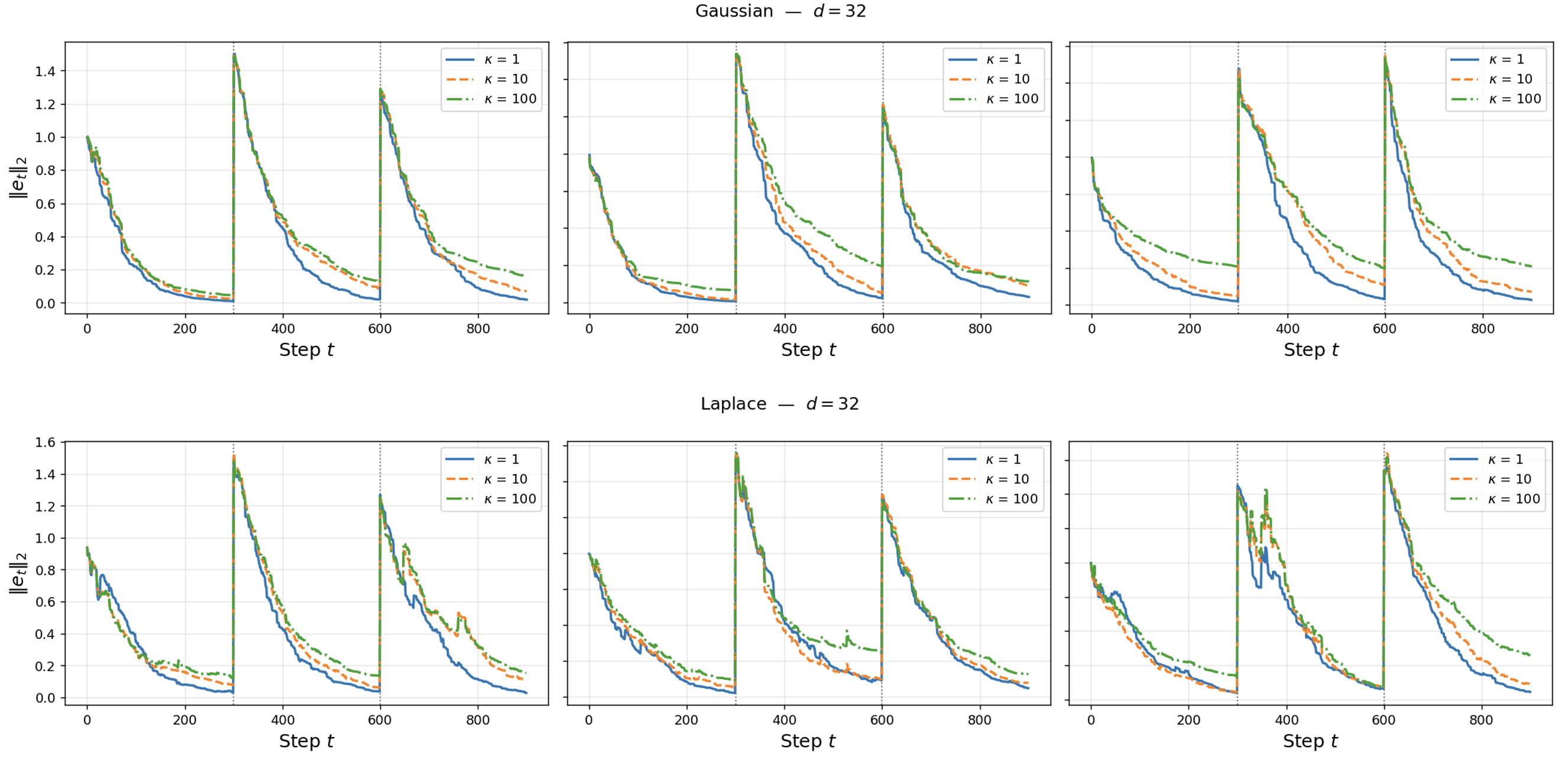}
    
    \vspace{-0.1cm}
    \caption{\textbf{Tracking error dynamics under different input distributions.} Evolution of the tracking error $\|e_t\|_2 = \|w_t - w^*\|_2$ over time for \textbf{Gaussian} and \textbf{Laplace} input distributions with varying condition numbers ($\kappa$), shown across three random seeds (columns). The top row corresponds to $d=8$ and the bottom row to $d=32$. The learner tracks a sequence of target weights, $w^*_1 \xrightarrow{t=100} w^*_2 \xrightarrow{t=200} w^*_3$, each sampled uniformly from the unit sphere. Each target switch induces a sharp increase in error, followed by reconvergence, whose rate and stability depend on the geometry of the input distribution. Across both dimensions and seeds, isotropic Gaussian inputs yield smooth, monotonic, and rapid convergence to zero error, whereas isotropic Laplace inputs also converge but exhibit pronounced transient spikes. Increasing the condition number (i.e., reduced isotropy) leads to slower convergence and, in some cases, failure to fully converge.}
    \label{fig:kappa}
\end{figure*}
\FloatBarrier
\subsection{Existence of Divergent Drift Directions}
\label{subsec:divergent_drift}
We further investigate the existence of divergent drift directions under anisotropic Gaussian feature distributions. In particular, we test the hypothesis that when the tracking error aligns with the eigenvector corresponding to the smallest eigenvalue of the input covariance in the reverse direction, i.e., $e_t = -v_d$, the dynamics may fail to converge to a zero steady-state error. To study this, we construct a controlled setting in which Gaussian inputs with different condition numbers are used and the underlying optimal weight is chosen at each step such that the error vector is aligned with $-\beta v_d$, where $v_d$ denotes the eigenvector associated with the smallest eigenvalue of the covariance matrix.

As shown in Fig.~\ref{fig:adaptive}, the behavior of the system strongly depends on the conditioning of the feature distribution. For isotropic features ($\kappa = 1$), the dynamics remain stable and converge despite the imposed alignment. However, as the condition number increases, the system exhibits progressively stronger divergence, indicating the emergence of unstable drift directions along low-variance eigenmodes. We further observe that the magnitude parameter $\beta$ controls the severity of this effect: larger values of $\beta$ amplify divergence, while smaller values may still preserve stability under moderate conditioning (e.g., $\kappa = 5$ with $\beta = 2.5$). Overall, these results demonstrate that anisotropy induces specific directions in parameter space along which SGD becomes unstable, leading to persistent drift rather than convergence.
\begin{figure*}[t]
    \centering
    \includegraphics[width=\linewidth]{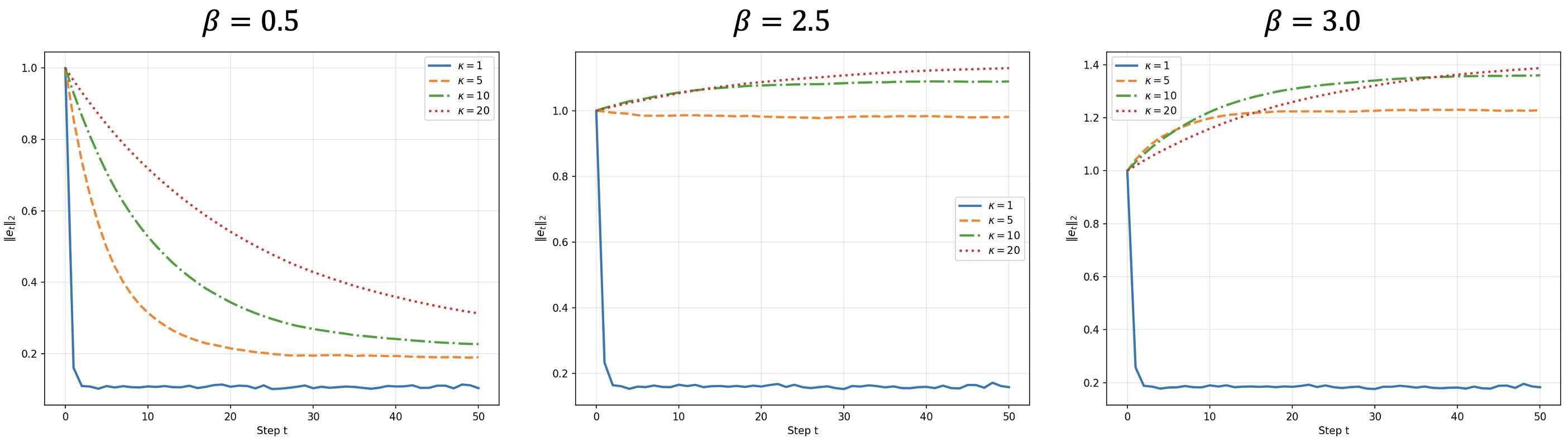}
    \vspace{-0.1cm}
    \caption{\textbf{Existence of divergent directions under anisotropic Gaussian distributions.} We test the hypothesis that when the tracking error aligns with the eigenvector corresponding to the smallest eigenvalue of the input covariance in the reverse direction ($e_t=-v_d$), the dynamics fail to converge to zero steady-state error. Specifically, for Gaussian inputs, we initialize the task such that the error vector is aligned with $-v_d$, where $v_d$ is the eigenvector associated with the smallest eigenvalue, and subsequently update the target weights so that $e_t = -\beta v_d$ at each step. Results show that larger condition numbers ($\kappa$) lead to increased divergence, whereas the isotropic case ($\kappa = 1$) remains convergent. Moreover, increasing $\beta$ amplifies divergence; however, for moderate conditioning (e.g., $\kappa = 5$), smaller values of $\beta$ (e.g., $\beta = 2.4$) do not induce divergence.}
    \label{fig:adaptive}
\end{figure*}

\section{Comparison with Other Regularization Methods}
Beyond the methods included in Table \ref{tab:ablation}, including covariance whitening (VICReg), various isotropic target distributions, and ablations that control tail behavior or symmetry through the SIGReg loss, we further evaluate additional regularization techniques such as L2 regularization \citep{kumar2023offline}, simplicial embeddings \citep{obando2025simplicial}, and weight orthogonalization \citep{chung2024parseval}. These approaches have previously been shown to improve representation quality and performance by mitigating representation collapse, or shaping the geometry of the embedding space. Including them allows us to isolate the effect of isotropic Gaussianity from more general mechanisms such as anti-collapse behavior or generic feature spreading and rank maximization.

The results on the Atari-10 benchmark using the PQN algorithm are summarized in Table \ref{tab:reg_comparison}. While several of these methods yield measurable improvements over the baseline, explicitly enforcing isotropic Gaussian embeddings through the full SIGReg loss achieves the strongest overall performance. This suggests that the gains are not solely due to improved representation spreading, but are specifically driven by matching the geometry of an isotropic Gaussian distribution. Among methods that do not use SIGReg, weight orthogonalization and covariance whitening improve a similar percentage of games. However, the average improvement is higher when using the full SIGReg loss. Overall, the findings in Table \ref{tab:reg_comparison} further support the conclusion that isotropic Gaussian structure provides a stronger and more consistent inductive bias than alternative regularization strategies. Embedding norm regularization performs the worst among the baselines, while simplicial embeddings improve over L2 normalization but still exhibit a substantial gap relative to isotropic Gaussian embeddings. Weight orthogonalization achieves a comparable number of improved games to isotropic Gaussian methods but yields lower average gains, in addition to requiring computation of the full Gram matrix of weights, which is computationally expensive. VICReg, which encourages isotropy without explicitly enforcing Gaussianity, improves a similar fraction of games but does not match the average performance of isotropic Gaussian embeddings.

\begin{table}[t]
\centering
\setlength{\tabcolsep}{2pt}
\begin{tabular}{lcc}
\toprule
\textbf{Method} & \textbf{\% of Games Improved ($\uparrow$)} & \textbf{Avg. Improvement ($\uparrow$)} \\
\midrule
Gaussian (Real and Imaginary parts) & 90\% & 305\% \\
\midrule
Laplacian   & 90\% & 234\% \\
Logistic    & 90\% & 209\% \\
\midrule
Real Part Only      & 100\% & 151\% \\
Imaginary Part Only & 70\%  & 56\%  \\
\midrule
Covariance Whitening (VICReg) & 90\% & 144\% \\
Embedding Norm Regularization                           & 60\% & -1\%  \\
Simplicial Embedding                        & 70\% & 17\%  \\
Weight Orthogonalziation             & 90\% & 219\% \\
\bottomrule
\end{tabular}

\caption{\textbf{Comparison with alternative regularization methods.}
We report the percentage of games improved over the baseline and the average improvement across Atari-10 using the PQN algorithm. \textbf{Top:} different isotropic target distributions. \textbf{Middle:} minimizing different components. \textbf{Bottom:} comparison with alternative regularization methods. Among the baselines, embedding norm regularization performs the worst. Simplicial embedding improves over L2 normalization, but a substantial gap remains compared to isotropic Gaussian embeddings. Weight orthogonalization improves a similar number of games as isotropic Gaussian, but yields lower average improvement; moreover, it requires computing the Gram matrix of all weights, which is computationally expensive. VICReg, which promotes isotropy without enforcing Gaussianity, improves a similar number of games but does not match the average improvement of isotropic Gaussian embeddings.}
\label{tab:reg_comparison}
\end{table}

\FloatBarrier
\section{Spectral Analysis}
\label{app:spec_analysis}
To analyze the effect of the SIGReg loss on the geometry of learned representations, we examine the distribution of eigenvalues of the embedding covariance matrix using two metrics. First, we measure the entropy of the eigenvalue spectrum (Table \ref{tab:eigval_entropy}). Higher entropy indicates a more uniform distribution of eigenvalues across directions, reflecting reduced representation collapse and improved isotropy. Second, we report the ratio of condition numbers between the baseline and SIGReg embeddings (Table \ref{tab:cond_avg}), defined as $\kappa_{\text{baseline}} / \kappa_{\text{SIGReg}}$. Higher values of this ratio indicate a stronger reduction in the condition number under SIGReg. From a theoretical perspective, a lower condition number implies a more balanced curvature across directions, leading to more uniform convergence rates and better controlled drift dynamics in representation learning.

The results in Table \ref{tab:eigval_entropy} show that SIGReg consistently increases eigenvalue entropy compared to the baseline, indicating a more evenly spread spectrum. Furthermore, increasing the regularization strength $\lambda$ further amplifies this effect, leading to progressively higher entropy, as expected from the objective’s tendency to equalize eigenvalues. When comparing across algorithms, PPO generally exhibits higher final entropy than PQN, suggesting that PPO induces less severe representation collapse under the same training conditions. However, the effect of SIGReg is not uniformly beneficial across all environments: while games such as Amidar and Qbert show consistent improvements in both PQN and PPO, environments such as Freeway, Pitfall, Atlantis, and Venture exhibit cases where SIGReg reduces performance relative to the baseline, despite increased entropy.

Table \ref{tab:cond_avg} further supports these findings by showing that SIGReg substantially reduces the condition number relative to the baseline across settings. In particular, $\lambda = 1$ achieves a significantly lower condition number, consistent with the theoretical expectation that SIGReg loss minimization leads to more isotropic representations and smaller condition number as the result. Together, these results indicate that SIGReg not only promotes a more uniform eigenvalue distribution but also improves the conditioning of the representation space, which is closely linked to improved optimization and stability properties.

\begin{table}[t]
\centering
\small
\begin{tabular}{lcccc|lcccc}
\toprule
\multicolumn{5}{c|}{\textbf{PQN}} & \multicolumn{5}{c}{\textbf{PPO}} \\
\midrule
Game & $\lambda$ & Start & Mid & End & Game & $\lambda$ & Start & Mid & End \\
\midrule

 & 0.0 & 5.02 & 1.26 & 1.29 &  & 0.0 & 20.96 & 4.01 & 3.45 \\
Amidar & 0.2 & 5.02 & 10.63 & 10.45 & Amidar & 0.2 & 20.96 & 11.93 & 14.08 \\
 & 1.0 & 5.02 & 15.66 & \textbf{17.93} &  & 1.0 & 20.96 & 15.19 & \textbf{25.75} \\
\hline
 & 0.0 & 4.32 & 1.46 & 1.39 &  & 0.0 & 19.60 & 2.85 & 3.24 \\
Qbert & 0.2 & 4.32 & 8.00 & 7.52 & Qbert & 0.2 & 19.60 & 12.78 & 15.20 \\
 & 1.0 & 4.32 & 14.61 & \textbf{14.52}&  & 1.0 & 19.60 & 15.83 & \textbf{31.57} \\
\hline
 & 0.0 & 22.72 & 1.10 & 1.37 &  & 0.0 & 34.05 & 1.94 & 2.86 \\
Freeway & 0.2 & 22.72 & 27.20 & 34.30 & Atlantis & 0.2 & 34.05 & 16.62 & 20.96 \\
 & 1.0 & 22.72 & 26.47 & \textbf{35.26} &  & 1.0 & 34.05 & 38.45 & \textbf{43.99} \\
\hline
 & 0.0 & 3.56 & 1.10 & 1.04 &  & 0.0 & 33.74 & 4.59 & 1.82 \\
Pitfall & 0.2 & 3.56 & 7.22 & 7.56 & Venture & 0.2 & 33.74 & 10.16 & 14.50 \\
 & 1.0 & 3.56 & 20.44 & \textbf{28.47} &  & 1.0 & 33.74 & 21.59 & \textbf{20.70} \\

\bottomrule
\end{tabular}
\caption{\textbf{Entropy of eigenvalues.} Comparison of eigenvalue entropy at the start, middle, and end of training for PQN (left) and PPO (right) across different games and $\lambda$ values. Amidar and Qbert are selected as representative cases where SIGReg improves performance in both PQN and PPO. Freeway, Pitfall, Atlantis, and Venture are cases where adding the SIGReg loss degrades performance relative to the baseline. Comparing the same games and baseline settings, PPO consistently exhibits higher final entropy, indicating less severe representation collapse. For a fixed game, increasing $\lambda$ leads to higher entropy, as expected, since the SIGReg loss promotes equalization of eigenvalues.}
\label{tab:eigval_entropy}
\end{table}

\begin{table}[t]
\centering
\small
\begin{tabular}{lcc|lcc}
\toprule
\multicolumn{3}{c|}{\textbf{PQN}} & \multicolumn{3}{c}{\textbf{PPO}} \\
\midrule
Game & $\lambda$ & Avg. Condition Number Ratio & Game & $\lambda$ & Avg. Condition Number Ratio \\
\midrule

Amidar & 0.2 & $8.8\times 10^3$ & Amidar & 0.2 & $8.3\times 10^1$ \\
 & 1.0 & \textbf{$2.1\times 10^4$} &  & 1.0 & \textbf{$1.8\times 10^2$} \\
\hline
Qbert & 0.2 & $1.2\times10^1$ & Qbert & 0.2 & $1.3\times 10^2$ \\
 & 1.0 & \textbf{$6.7\times10^3$ } &  & 1.0 & \textbf{$4.3\times 10^2$} \\
\hline
Freeway & 0.2 & $8.5\times10^5$ & Atlantis & 0.2 &  $3.9$ \\
 & 1.0 & \textbf{$3.9\times10^6$ }&  & 1.0 & \textbf{$5.7\times 10^2$} \\
\hline
Pitfall & 0.2 & $1.9\times 10^1$ & Venture & 0.2 & 0.4  \\
& 1.0 & \textbf{$6.0\times 10^1$ }&  & 1.0 & \textbf{$4.9\times 10^5$} \\

\bottomrule
\end{tabular}
\caption{\textbf{Condition number ratio averaged over training.} We report the time-averaged ratio of condition numbers between the baseline and SIGReg embeddings, $\frac{\kappa_{\text{baseline}}}{\kappa_{\text{SIGReg}}}$. According to our theoretical analysis, a lower condition number is desirable, as it indicates more uniform convergence rates across directions and more controlled drift term. As shown, $\lambda = 1$ yields a substantially lower condition number than the baseline, consistent with the effect of SIGReg in spreading eigenvalues and promoting isotropy. }
\label{tab:cond_avg}
\end{table}

\end{document}